\definecolor{darkgreen}{rgb}{0.0,0.5,0.0}
\newcommand{\mb}{\mathbf}
\newcommand{\wM}[1]{\mb W^{#1}}
\newcommand{\W}{\mathbf W}
\newcommand{\w}{\mathbf w}
\newcommand{\btheta}{{\pmb \theta}}
\newcommand{\bTheta}{{\pmb \Theta}}
\NewCommandCopy{\olda}{\a}
\renewcommand{\a}{\mathbf a}
\NewCommandCopy{\oldb}{\b}
\renewcommand{\b}{\mathbf b}
\newcommand{\x}{\mathbf x}
\newcommand{\y}{\mathbf y}
\newcommand{\z}{\mathbf z}
\newcommand{\funcsym}{\mathrm{F}}
\newcommand{\outersym}{\mathrm{O}}
\newcommand{\Fn}{\funcsym_{\btheta}}
\newcommand{\Loss}{{\mathrm{L}}}
\newcommand{\Om}{{\mb{\Omega}}}
\newtheorem*{rep@theorem}{\rep@title}
\newcommand{\newreptheorem}[2]{%
\newenvironment{rep#1}[1]{%
 \def\rep@title{#2 \ref{##1}}%
 \begin{rep@theorem}}%
 {\end{rep@theorem}}}
\newtheorem{theorem}{Theorem}
\newtheorem{lemma}{Lemma}
\newtheorem{remark}{Remark}
\DeclareMathOperator*{\argmin}{argmin}
\newcommand{\HO}{\mb{H}_{\outersym}}
\newcommand{\GO}{\mb{G}_{\outersym}}
\newcommand{\HF}{{\mb{H}_{\funcsym}}}
\newcommand{\HL}{{\mb{H}_{\Loss}}}
\newcommand{\opt}{{\btheta^\star}}
\newcommand{\Reals}[1]{{\mathbb{R}^{#1}}} %
\newcommand{\vb}{{\mb v}}
\newcommand{\ab}{{\mb a}}
\newcommand{\bb}{{\mb b}}
\newcommand{\cb}{{\mb c}}
\newcommand{\db}{{\mb d}}
\renewcommand{\vb}{{\mb v}}
\newcommand{\Lamb}{\bm{\Lambda}}
\newcommand{\Sig}{\bm{\Sigma}}
\newcommand{\Gam}{\bm{\Gamma}}
\newcommand{\kro}{%
  \mathbin{\mathop{\otimes}}%
}
\newcommand{\inpspace}{{\mathbb{X}}}
\newcommand{\outspace}{{\mathbb{Y}}}
\newcommand{\E}{{\mathbb E}}
\newcommand{\R}{{\mathbb{R}}}
\def\Am{{\bf A}}
\def\Bm{{\bf B}}
\def\Gm{{\bf G}}
\def\Hm{{\bf H}}
\def\Im{{\bf I}}
\def\Mm{{\bf M}}
\def\Sm{{\bf S}}
\def\Tm{{\bf T}}
\def\Wm{{\bf W}}
\def\Vm{{\bf V}}
\def\Um{{\bf U}}
\def\Zm{{\bf Z}}
\def\Xm{{\bf X}}
\theoremstyle{plain}
\declaretheoremstyle[%
  spaceabove=-2pt,%
  spacebelow=6pt,%
  headfont=\normalfont\itshape,%
  postheadspace=1em,%
  qed=\qedsymbol%
]{mystyle}
\newtheorem{assump}{Assumption}
\newcommand*{\colorboxed}{}
\def\colorboxed#1#{%
  \colorboxedAux{#1}%
}
\newcommand*{\colorboxedAux}[3]{%
  \begingroup
    \colorlet{cb@saved}{.}%
    \color#1{#2}%
    \boxed{%
      \color{cb@saved}%
      #3%
    }%
  \endgroup
}
\newcommand{\neutralize}[1]{\expandafter\let\csname c@#1\endcsname\count@}
\title{Theoretical characterisation of the Gauss-Newton conditioning in Neural Networks}
\author{%
  Jim Zhao\thanks{First two authors have equal contribution} \\
  University of Basel, Switzerland\\
  \texttt{jim.zhao@unibas.ch} \\
  \And
  Sidak Pal Singh\footnotemark[1] \\
  ETH Zürich, Switzerland \\
  \texttt{sidak.singh@inf.ethz.ch} \\
  \AND
  Aurelien Lucchi \\
  University of Basel, Switzerland\\
  \texttt{aurelien.lucchi@unibas.com} \\
}
\begin{document}

\maketitle

\begin{abstract}
The Gauss-Newton (GN) matrix plays an important role in machine learning, most evident in its use as a preconditioning matrix for a wide family of popular adaptive methods to speed up optimization. Besides, it can  also provide key insights into the optimization landscape of neural networks. 
In the context of deep neural networks, understanding the GN matrix involves studying the interaction between different weight matrices as well as the dependencies introduced by the data, thus rendering its analysis challenging.
In this work, we take a first step towards theoretically characterizing the conditioning of the GN matrix in neural networks. We establish tight bounds on the condition number of the GN in deep linear networks of arbitrary depth and width, which we also extend to two-layer ReLU networks.
We expand the analysis to further architectural components, such as residual connections and convolutional layers. 
Finally, we empirically validate the bounds and uncover valuable insights into the influence of the analyzed architectural components.
\end{abstract}

\section{Introduction}

The curvature is a key geometric property of the loss landscape, which is characterized by the Hessian matrix or approximations such as the Gauss-Newton (GN) matrix, and strongly influences the convergence of gradient-based optimization methods. 
In the realm of deep learning, where models often have millions of parameters, understanding the geometry of the optimization landscape is essential to understanding the effectiveness of training algorithms. 
The Hessian matrix helps identify the directions in which the loss function changes most rapidly, aiding in the selection of appropriate learning rates and guiding optimization algorithms to navigate the complex, high-dimensional space of parameters. 
However, in practice, the Hessian is not easily accessible due to the high computational cost and memory requirements. Instead, the GN matrix (or its diagonal form) is commonly employed in adaptive optimization methods such as Adam~\citep{kingma2014adam} with the goal to improve the conditioning of the landscape. 
Although the Gauss-Newton matrix $\mathbf{G}_O$ is only an approximation to the full Hessian matrix, it does seem to capture the curvature of the loss very well given the success of many second-order optimization methods based on approximations of the Gauss-Newton matrix, such as K-FAC~\citep{martens2020optimizing}, Shampoo~\citep{gupta2018shampoo} or Sophia~\citep{liu2023sophia}. Particularly interesting is the last method, in which the authors observe that their optimizer based on the Gauss-Newton matrix performs even better than their optimizer based on the full Hessian matrix, implying that the Gauss-Newton matrix is a good preconditioner and captures the curvature of the loss landscape well.
The prevalence of adaptive optimizers and their success in training neural networks illustrates the ability of the GN matrix to capture curvature information similar to that of the Hessian matrix.
\looseness=-1

Although the non-convexity of neural loss landscapes can be considered as a given, some landscapes can be tougher to traverse than others. This can, for instance, arise due to, or be amplified by, aspects such as: (i) disparate scales of the input data and intermediate features, (ii) initializing the optimization process at a degenerate location in the landscape (imagine an entire set of neurons being dead), (iii) architectural aspects such as width, depth, normalization layers, etc.

Given the close relation of the GN matrix to the network Jacobian (with respect to the parameters) and hence signal propagation~\citep{lou2022feature}, its significance also reaches more broadly to problems such as model pruning and compression. For instance, it has been extensively observed empirically that training extremely sparse neural networks from scratch poses a significantly greater challenge compared to pruning neural networks post-training~\citep{evci2019difficulty}. In an illustrative example shown in Fig. \ref{fig:sparse_ResNet20_training_cond_num}, we observe that training sparse networks from scratch using stochastic gradient descent results in a slowdown of training, which is also reflected in an increase in the condition number of the GN matrix (similar experiments on Vision Transformers and other architectures can be found in \cref{appendix:pruning_experiments}). This underscores the relevance of the conditioning of the GN matrix for understanding the behaviour of gradient-based optimizers and emphasizes the significance of maintaining a well-behaved loss landscape of the neural network throughout training.

\begin{figure}
    \centering
    \begin{minipage}[c]{0.7\textwidth}
        \includegraphics[width=\textwidth]{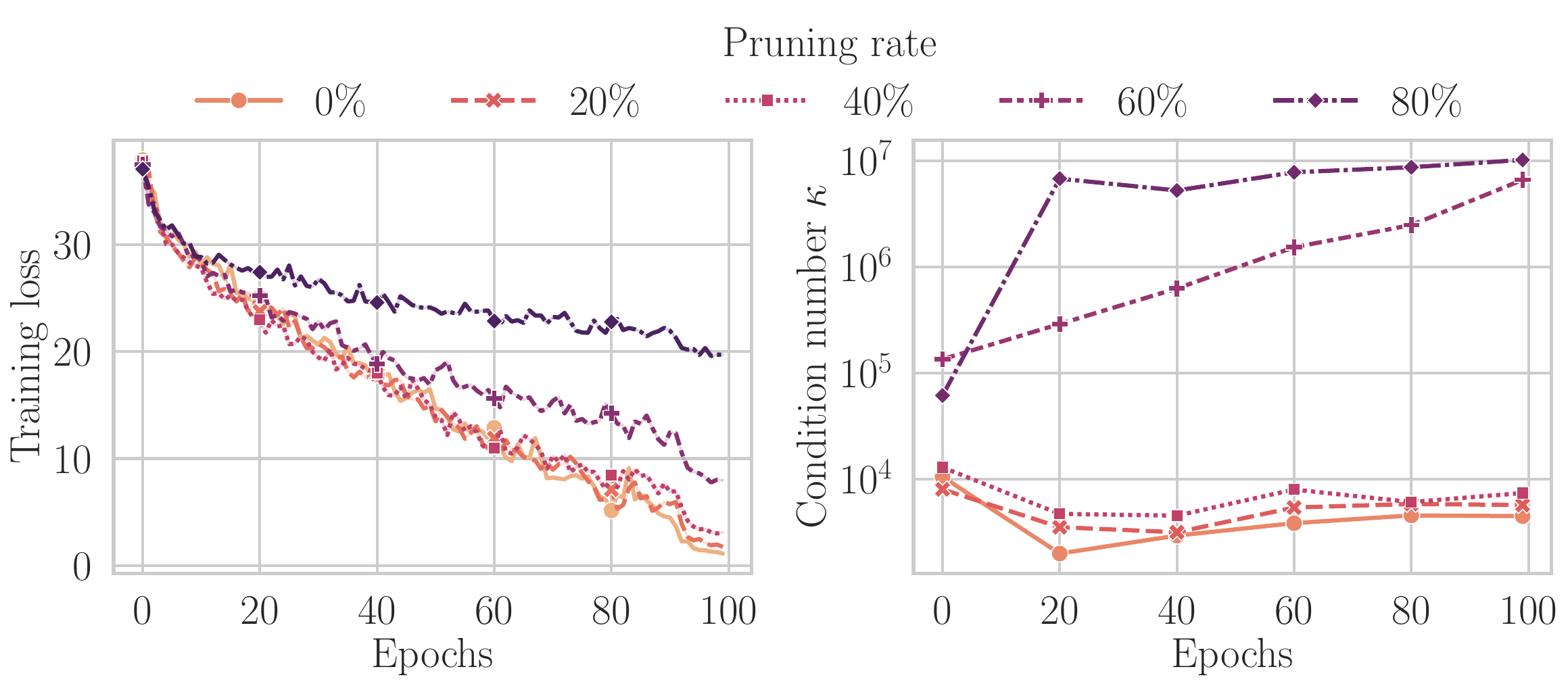}
    \end{minipage} \hfill
    \begin{minipage}{0.28\textwidth}
        \caption{Training loss (left) and condition number $\kappa$ of GN (right) for a ResNet20 trained on a subset of Cifar10 ($n=1000$) with different proportions of pruned weights. Weights were pruned layerwise by magnitude at initialization.}
    \label{fig:sparse_ResNet20_training_cond_num}
    \end{minipage}
\end{figure}

In fact, many fundamental components of deep learning frameworks, such as skip-connections as well as various normalization techniques~\citep{ioffe2015batch, salimans2016weight, ba2016layer, wu2018group, miyato2018spectral}, have been to some degree designed to mitigate the challenges posed by ill-conditioning in neural network optimization. 
This line of work continues to expand to this date, seeking out better normalization schemes, optimization maneuvers, and regularization strategies ~\citep{kang2016shakeout, wan2013regularization} that allow for easier and faster optimization while avoiding directions of pathological curvature. But despite the numerous approaches to redress the poor conditioning of the landscape, understanding the precise factors of ill-conditioning within the network structure, and their relative contributions, has remained largely underdeveloped. \looseness=-1

Hence, our aim in this work is to carry out a detailed theoretical analysis of how the conditioning of the Gauss-Newton matrix is shaped by constituting structural elements of the network --- i.e., the hidden-layer width, the depth, and the presence of skip connections, to name a few. 
We will shortly formally introduce the Gauss-Newton matrix and how it is connected to the Hessian of the loss function. 
Concretely, we would like to provide tight bounds for the condition number of these two terms as a function of the spectra of the various weight matrices and reveal the interplay of the underlying architectural parameters on conditioning. Furthermore, we aim to investigate the impact of both the dataset's structure and the initialization procedure on the conditioning of the loss landscape.
\vspace{-0.15cm}
\paragraph{Contributions.}
Taking inspiration from prior theoretical analyses of deep linear networks, we make a first foray into this problem by rigorously investigating and characterizing the condition number of the GN matrix for linear neural networks (the extension to the second term of the Gauss-Newton decomposition is touched upon in the Appendix). Our analysis holds for arbitrary-sized networks and unveils the intriguing interaction of the GN matrix with the conditioning of various sets of individual layers and the data-covariance matrix. We also complement our analysis in the linear case by studying the effect of non-linearities, via the Leaky-ReLU activation, albeit for two-layer networks. Importantly, as a consequence of our analysis, we show the precise manner in which residual networks with their skip connections or batch normalization can help enable better conditioning of the loss landscape. 
While our work builds on~\citet{singh2021analytic}, our main contribution is the introduction of tight upper bounds for the condition number of the Gauss-Newton (GN) matrix for linear and residual networks of arbitrary depth and width. To the best of our knowledge, this has not been addressed in the literature before. 
Lastly, given that our bounds are agnostic to the specific values of the parameters in the landscape, we show the phenomenology of conditioning during the training procedure and the corresponding validity of our bounds.

\section{Setup and background} 
\paragraph{Setting.} Suppose we are given an i.i.d. dataset $S=\{(\x_1, \y_1), \dots, (\x_n, \y_n)\}$, of size $|S| = n$,  drawn  from an unknown distribution $p_{\inpspace, \outspace}$, consisting of inputs $\x \in \inpspace\subseteq\mathbb{R}^{d}$ and targets $\y \in \outspace\subseteq\mathbb{R}^{k}$. Based on this dataset $S$, consider we use a neural network to learn the mapping from the inputs to the targets, $\Fn: \inpspace \mapsto \outspace$, parameterized by $\btheta\in\bTheta\subseteq\mathbb{R}^p$. To this end, we follow the framework of Empirical Risk Minimization~\citep{vapnik1991principles}, and optimize a suitable loss function $\Loss: \bTheta  \mapsto \mathbb{R}$. In other words, we solve the following optimization problem, 
\begin{equation*}
    \opt = \argmin_{\btheta \in \bTheta} \; \Loss(\btheta) = \dfrac{1}{n} \sum\limits_{i=1}^n \ell\left(\btheta; (\x_i, \y_i)\right)\,,
\end{equation*}
say with a first-order method such as (stochastic) gradient descent and the choices for $\ell$ could be the mean-squared error (MSE), cross-entropy (CE), etc. For simplicity, we will stick to the MSE loss. \looseness=-1
\paragraph{Gauss-Newton Matrix.} 
We analyze the properties of the outer gradient product of the loss function, $\GO$, which we call the Gauss-Newton matrix, defined as 
\begin{align}\label{eq:hess}
    \GO = \frac{1}{n}\sum_{i=1}^n \nabla_{\btheta} \Fn(\x_i) \, \nabla_\btheta \Fn(\x_i)^\top,
\end{align}
where, $\nabla_\btheta \Fn(\x_i) \in\mathbb{R}^{p\times k}$ is the Jacobian of the function with respect to the parameters $\btheta$, $p$ is the number of parameters, $k$ is the number of outputs or targets. This outer product of the gradient is closely related to the Hessian of the loss function via the Gauss-Newton decomposition~\citep{Schraudolph2002FastCM, sagun2017empirical, martens2020new, botev2020gauss}, hence the chosen name, which decomposes the Hessian via the chain rule as a sum of the following two matrices:
\begin{align*}
	\HL =  \HO + \HF &=  \frac{1}{n}\sum_{i=1}^n \nabla_{\btheta} \Fn(\x_i) \big[\nabla^2_{\Fn} \,\ell_i\big] \, \nabla_\btheta \Fn(\x_i)^\top + \frac{1}{n}\sum_{i=1}^n \,
	\sum_{c=1}^K [\nabla_{\Fn}\ell_i]_c\, \nabla^2_\btheta \, \Fn^{c}(\x_i),
\end{align*}
where $\nabla^2_{\Fn} \,\ell_i \in\mathbb{R}^{k\times k}$ is the Hessian of the loss with respect to the network function, at the $i$-th sample. Note that if $\ell_i$ is the MSE loss, then $\HO = \GO$. 
\begin{remark}[Difference between $\Hm_L$ and $\Gm_O$]
    When considering MSE loss, the difference between the Gauss Newton matrix $\mathbf{G}_O$ and the Hessian of the loss function $\mathbf{H}_L$ depends on both the residual and the curvature of the network $F_{\boldsymbol{\btheta}}(\mathbf{x})$. Thus, close to convergence when the residual becomes small, the contribution of $\mathbf{H}_F$ will also be negligible and $\mathbf{G}_O$ is essentially equal to $\mathbf{H}_L$. Furthermore, \cite{lee2019wide} show that sufficiently wide neural networks of arbitrary depth behave like linear models during training with gradient descent. This implies that the Gauss-Newton matrix is a close approximation of the full Hessian in this regime throughout training.
\end{remark}
\paragraph{Condition number and its role in classical optimization.} 
Consider we are given a quadratic problem, $\argmin_{\w\in \Reals{p}} \frac{1}{2} \w^\top \Am \w$, where $\Am \succ 0$ is a symmetric and positive definite matrix. The optimal solution occurs for $\w^\ast=0$.  When running gradient descent with constant step size $\eta>0$, the obtained iterates would be $\w_k = (\Im-\eta \Am) \w_{k-1}$. This yields a convergence rate of
$\frac{\|\w_k-\w^\ast\|}{\|\w_{k-1}-\w^\ast\|} \leq \max\big(|1-\eta \lambda_{\max}(\Am)|, |1-\eta \lambda_{\min}(\Am)|\big)\,.$
The best convergence is obtained for $\eta = 2\, (\lambda_{\max}(\Am) + \lambda_{\min}(\Am))^{-1}$, resulting in $\|\w_k\|\leq \frac{\kappa -1}{\kappa +1 }\|\w_{k-1}\|$, where $\kappa(\Am) = \frac{\lambda_{\max}(\Am)}{\lambda_{\min}(\Am)}$ is called the condition number of the matrix. This ratio which, intuitively, measures how disparate are the largest and smallest curvature directions, is an indicator of the speed with which gradient descent would converge to the solution. When $\kappa\rightarrow\infty$, the progress can be painfully slow, and $\kappa\rightarrow1$ indicates all the curvature directions are balanced, and thus the error along some direction does not trail behind the others, hence ensuring fast progress.
\paragraph{Effect of condition number at initialization on the convergence rate}
 As the condition number is a very local property, it is in general hard to connect the conditioning at network initialization to a global convergence rate. However, we would like to argue below that an ill-conditioned network initialization will still affect the rate of convergence for gradient descent (GD) in the initial phase of training. Let us denote the Lipschitz constant by $L$ and the smoothness constant by $\mu$. Furthermore, let the step size be such that $\eta_k \leq \frac{1}{L}$. We present a modified analysis of GD for strongly convex functions, where we use local constants $\mu(k)$ and $L(k)$ instead of the global smoothness and Lipschitz constant, respectively. Then by the definition of a single step of gradient descent and using the strong convexity and smoothness assumption\footnote{For details of the derivation, we refer the reader to \cref{appendix:effect_cond_num_at_init_on_converg_rate}} we have:
\begin{equation} \label{eq:single_iteration_improvement}
||\boldsymbol{\theta}_{k+1} - \boldsymbol{\theta}^* ||^2 \leq (1-\eta_k \mu) ||\boldsymbol{\theta}_k - \boldsymbol{\theta}^* ||^2 \end{equation}
So by recursively applying~\eqref{eq:single_iteration_improvement} and replacing $\mu$ by the local smoothness constants $\mu(k)$: \begin{equation} ||\boldsymbol{\theta}_k - \boldsymbol{\theta}^* ||^2 \leq \prod_{i=0}^{k-1} (1-\eta_i \mu(i)) ||\boldsymbol{\theta}_0 - \boldsymbol{\theta}^* ||^2 \end{equation}
One can clearly see the effect of $\mu(0)$ in the bound, which is even more dominant when $\mu(k)$ changes slowly. Of course, the effect of $\mu(0)$ attenuates over time, and that's why we are talking about a local effect. However, one should keep in mind that overparametrization leads the parameter to stay closer to initialization (at least in the NTK regime \citep{lee2019wide}).

\section{Related work}

Since the Gauss-Newton matrix is intimately related to the Hessian matrix, and the fact that towards the end of training, the Hessian approaches the Gauss-Newton matrix~\citep{singh2021analytic}, we carry out a broader discussion of the related work, by including the significance of the Hessian at large. \looseness=-1

\textbf{The relevance of the Hessian matrix for neural networks.} (i) \textit{Generalization-focused work:} There is a rich and growing body of work that points towards the significance of various Hessian-based measures in governing different aspects of optimization and generalization. One popular hypothesis~\citep{hochreiter1997flat, keskar2016large, dziugaite2017computing, chaudhari2019entropy} is that flatter minima generalize better, where the Hessian trace or the spectral norm is used to measure flatness. This hypothesis is not undisputed~\citep{dinh2017sharp}, and the extent to which this is explanatory of generalization has been put to question recently~\citep{granziol2021flatness,andriushchenko2023modern}. Nevertheless, yet another line of work has tried to develop regularization techniques that further encourage reaching a flatter minimum, as shown most prominently in proposing sharpness-aware minimization~\citep{foret2021sharpnessaware}.

\textit{(ii) Understanding Architectural and Training aspects of Neural Networks:} Some other work has studied the challenges in large-batch training via the Hessian spectrum in~\citet{yao2018hessian}. Also, the use of large learning rates has been suggested to result in flatter minima via the initial catapult phase~\citep{lewkowycz2020large}. The effect of residual connections and Batch normalization on Hessian eigenspectrum were empirically studied in~\citet{ghorbani2019investigation,yao2020pyhessian}, which introduced PyHessian, a framework to compute Hessian information in a scalable manner. More recently, the so-called edge of stability~\citep{cohen2021gradient} phenomenon connects the optimization dynamics of gradient descent with the maximum eigenvalue of the loss Hessian. Very recently, the phenomenon of Deep neural collapse was studied in ~\citet{beaglehole2024average} via the average gradient outer-product. \looseness=-1

\textit{(iii) Applications:} There has also been a dominant line of work utilizing the Hessian for second-order optimization, albeit via varying efficient approximations, most notably via K-FAC~\citep{martens2020optimizing} but also others such as~\citet{yao2021adahessian,liu2023sophia,lin2023structured}. Given its versatile nature, the Hessian has also been used for model compression through pruning~\citep{lecun1989optimal,hassibi1993optimal,singh2020woodfisher} as well as quantization~\citep{dong2019hawq,frantar2023optimal}, but also in understanding the sensitivity of predictions and the function learned by neural networks via influence functions~\citep{koh2020understanding,grosse2023studying}, and countless more.\looseness=-1

\textbf{Theoretical and empirical studies of the Hessian.} 
There have been prior theoretical studies that aim to deliver an understanding of the Hessian spectrum in the asymptotic setting \citep{pennington2017geometry, jacot2019asymptotic}, but it remains unclear how to extract results for finite-width networks as used in practice.  Besides, past work has analyzed the rank, empirically \citep{sagun2016eigenvalues,sagun2017empirical} as well as theoretically \citep{singh2021analytic, singh2023hessian}. 
Further, the layer-wise Hessian of a network can be roughly approximated by the Kronecker product of two smaller matrices, whose top eigenspace has shown to contain certain similarities with that of the Hessian~\citep{wu2020dissecting}. In a different line of work by~\citet{liao2021hessian}, the limiting eigenvalue distribution of the Hessian of generalized generalized linear models (G-GLMs) and the behaviour of potentially isolated eigenvalue-eigenvector pairs is analyzed.\looseness=-1

\textbf{Hessian and landscape conditioning.} The stock of empirical repertoire in deep learning has been enriched by successful adaptive optimization methods plus their variants~\citep{kingma2014adam} as well as various tricks of the trade, such as Batch Normalization~\citep{ioffe2015batch}, Layer Normalization~\citep{ba2016layer}, orthogonal initialization~\citep{saxe2013exact,Hu2020Provable}, and the kind, all of which can arguably be said to aid the otherwise ill-conditioning of the landscape. There have also been theoretical works establishing a link between the conditioning of the Hessian, at the optimum and the double-descent like generalization behavior of deep networks~\citep{Belkin_2019, singh2022phenomenology}.\looseness=-1 

\textbf{Gauss-Newton matrix and NTK.}
In the context of over-parametrized networks, $\GO$ is for instance connected to the (empirical) Neural Tangent Kernel, which has been the focus of a major line of research in the past few years ~\citep{jacot2018neural, wang2022and, yang2020tensor} as the NTK presents an interesting limit of infinite-width networks. As a result the asymptotic spectrum and the minimum eigenvalue of the NTK has been studied in~\citep{nguyen2022tight, liu2023relu}, but the implications for finite-width networks remain unclear.

Despite this and the other extensive work discussed above, a detailed theoretical study on the Gauss Newton conditioning of neural networks has been absent. In particular, there has been little work trying to understand the precise sources of ill-conditioning present within deep networks. Therefore, here we try to dissect the nature of conditioning itself, via a first principle approach. We hope this will spark further work that aims to precisely get to the source of ill-conditioning in neural networks and, in a longer horizon, helps towards designing theoretically-guided initialization strategies or normalization techniques that seek to also ensure better conditioning of the GN matrix.%
\vspace{-2mm}
\section{Theoretical characterisation}
The main part of this paper will focus on analyzing the conditioning of $\GO$ in Eq.~\eqref{eq:hess} as prior work \citep{ren2019efficient, Schraudolph2002FastCM} has demonstrated its heightened significance in influencing the optimization process.
We will further discuss an extension to $\HF$ in Appendix \ref{appendix:functional_hessian}. Tying both bounds together yields a bound on the condition number of the overall loss Hessian in some simple setups.  

\textbf{Pseudo-condition number.} Since the GN matrix of deep networks is not necessarily full rank~\citep{sagun2017empirical, singh2021analytic}, we will analyze the pseudo-condition number defined as the ratio of the maximum eigenvalue over the minimum \emph{non-zero} eigenvalue. 
This choice is rationalized by the fact that gradient-based methods will effectively not steer into the GN null space, and we are interested in the conditioning of the space in which optimization actually proceeds. For brevity, we will skip making this distinction between condition number and pseudo-condition number hereafter.

\subsection{Spectrum of Gauss-Newton matrix}
\label{section:sepctrum_hessian_outer_prod}

We will start with the case of linear activations. In this case a network with $L$ layers can be expressed by $\Fn(\x) = \Wm^L  \Wm^{L-1} \, \cdots \, \Wm^1 \x$, with $\Wm^{\ell} \in \R^{a_{\ell} \times a_{\ell - 1}}$ for $\ell = 1, \hdots, L$ and $a_L = k, a_0 = d$. To facilitate the presentation of the empirical work, we will assume that the widths of all hidden layers are the same, i.e. $\alpha_\ell = m$ for all $\ell=1,\hdots,L-1$. Also, let us denote by $\Sig = \frac{1}{n} \sum_{i=1}^n \x_i \x_i^T \in \R^{d \times d}$ the empirical input covariance matrix. Furthermore, we introduce the shorthand-notation $\Wm^{k:\ell} = \Wm^k \cdots \Wm^\ell$ for $k > \ell$ and $k < \ell$, $\Wm^{k:\ell} = \Wm^{k^\top} \cdots \Wm^{\ell^\top}$. Then,~\citet{singh2021analytic} show that the GN matrix can be decomposed as $\GO = \Um (\Im_k \kro \Sig) \Um^\top$, where $\Im_k$ is the identity matrix of dimension $k$ and $\Um \in \mathbb{R}^{p\times kd}$ is given by:
\begin{align*}
  \Um=\begin{pmatrix}
	\wM{2:L} \kro \Im_d \,\,
	{\dots} \,\,
	\wM{\ell+1:L} \kro \wM{\ell-1:1}\,\,
	{\dots} \,\,
	\Im_k \kro \wM{L-1:1}
\end{pmatrix}^\top.
\end{align*}

By rewriting 
    $\Um (\Im_K \kro \Sig) \Um^\top = \Um (\Im_K \otimes \Sig^{1/2})(\Im_K \otimes \Sig^{1/2}) \Um^\top$,
where $\Sig^{1/2}$ is the unique positive semi-definite square root of $\Sig$ and noting that $\Am\Bm$ and $\Bm\Am$ have the same non-zero eigenvalues, we have that the non-zero eigenvalues of $\GO$ are the same as those of
\begin{equation}
    \label{eq:hat_H_O}
     \widehat{\Gm}_O = (\Im_K \otimes \Sig^{1/2}) \Um^\top \Um (\Im_K \otimes \Sig^{1/2})\,,
\end{equation}
where $\Um^\top \Um \in \mathbb{R}^{Kd\times Kd}$ is equal to
\begin{align}\vspace{-2mm}
	\label{eq:U^TU}
	\Um^{\top} \Um  = \sum\limits_{\ell=1}^{L} \wM{L:\ell+1}\wM{\ell+1:L} \kro \wM{1:\ell-1} \wM{\ell-1:1}.
\end{align}

\textbf{Warm-up: the one-hidden layer case.}
In the case of one-hidden layer network, $\Fn(\x) = \Wm\Vm \x$, we have that
	$\widehat{\Gm}_O = \Wm\Wm^\top \otimes \Sig + \Im_k \otimes \Sig^{1/2} \Vm^\top\Vm\Sig^{1/2}\,.$
To derive an {upper bound} of the condition number, we will lower bound the smallest eigenvalue $\lambda_{\min}(\widehat{\Gm}_O)$ and upper bound the largest eigenvalue $\lambda_{\max}(\widehat{\Gm}_O)$ separately.
Using standard perturbation bounds for matrix eigenvalues discussed in Appendix \ref{appendix:standard_pertub_bounds_eigval}, we obtain 
the following upper bound on the condition number:
\begin{restatable}{lemma}{OneHiddenLayerLinear}
    \label{lemma:1_hiddenlayer_linear}
    Let $\beta_w = \sigma_{\min}^2(\Wm)/(\sigma_{\min}^2(\Wm)+\sigma_{\min}^2(\Vm))$. Then
the condition number of GN for the one-hidden layer network with linear activations with $m > \max\{d,k\}$ is upper bounded by
    \begin{align}\label{eq:1_hiddenlayer_linear_upperB}
        \kappa(\widehat{\Gm}_O) \leq \, \kappa(\Sig) \, \cdot \, \frac{\sigma^2_{\max}(\Wm) + \sigma^2_{\max}(\Vm)}{\sigma^2_{\min}(\Wm) + \sigma^2_{\min}(\Vm)}
        = \, \kappa(\Sig)  \, \cdot \, (\beta_w\, \kappa(\Wm)^2 + (1-\beta_w)\, \kappa(\Vm)^2).
    \end{align}
\end{restatable}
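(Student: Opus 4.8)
\emph{Proof idea.} The plan is to start from the explicit form of $\widehat{\Gm}_O$ recorded in the warm-up, $\widehat{\Gm}_O = \Am + \Bm$ with $\Am := \Wm\Wm^\top \otimes \Sig$ and $\Bm := \Im_k \otimes \Sig^{1/2}\Vm^\top\Vm\Sig^{1/2}$ (both positive semi-definite), and to bound $\lambda_{\max}(\widehat{\Gm}_O)$ from above and $\lambda_{\min}(\widehat{\Gm}_O)$ from below by distributing over the sum with the standard perturbation (Weyl) inequalities $\lambda_{\max}(\Am+\Bm)\le\lambda_{\max}(\Am)+\lambda_{\max}(\Bm)$ and $\lambda_{\min}(\Am+\Bm)\ge\lambda_{\min}(\Am)+\lambda_{\min}(\Bm)$. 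Because the statement is about the pseudo-condition number, I would first record that $\widehat{\Gm}_O$ is actually non-singular whenever the bound is non-vacuous: assuming $\Sig\succ0$ (otherwise $\kappa(\Sig)=\infty$), if $\Wm$ has full row rank then $\Am\succ0$, and if $\Vm$ has full column rank then the $d\times d$ block $\Sig^{1/2}\Vm^\top\Vm\Sig^{1/2}$ is positive definite so $\Bm\succ0$; in either case $\lambda_{\min}(\widehat{\Gm}_O)$ equals the smallest non-zero eigenvalue, and if both rank conditions fail then $\sigma_{\min}(\Wm)=\sigma_{\min}(\Vm)=0$ and the right-hand side is $\infty$.

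Next I would compute the extreme eigenvalues of the two summands. For $\Am$, the spectrum of a Kronecker product is the multiset of pairwise products of the factors' eigenvalues, so $\lambda_{\max}(\Am)=\sigma_{\max}^2(\Wm)\,\lambda_{\max}(\Sig)$ and $\lambda_{\min}(\Am)=\sigma_{\min}^2(\Wm)\,\lambda_{\min}(\Sig)$. For $\Bm$, the eigenvalues of $\Im_k\otimes\Mm$ are those of $\Mm$ (with multiplicity $k$), so I only need to control $\Sig^{1/2}\Vm^\top\Vm\Sig^{1/2}=(\Vm\Sig^{1/2})^\top(\Vm\Sig^{1/2})$: submultiplicativity of the operator norm gives $\lambda_{\max}(\Bm)\le\|\Vm\|^2\|\Sig^{1/2}\|^2=\sigma_{\max}^2(\Vm)\,\lambda_{\max}(\Sig)$, and for the lower bound I would use monotonicity of the L\"owner order under congruence, i.e. $\Vm^\top\Vm\succeq\sigma_{\min}^2(\Vm)\,\Im_d$ implies $\Sig^{1/2}\Vm^\top\Vm\Sig^{1/2}\succeq\sigma_{\min}^2(\Vm)\,\Sig\succeq\sigma_{\min}^2(\Vm)\,\lambda_{\min}(\Sig)\,\Im_d$, hence $\lambda_{\min}(\Bm)\ge\sigma_{\min}^2(\Vm)\,\lambda_{\min}(\Sig)$.

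Combining through Weyl then gives $\lambda_{\max}(\widehat{\Gm}_O)\le\lambda_{\max}(\Sig)(\sigma_{\max}^2(\Wm)+\sigma_{\max}^2(\Vm))$ and $\lambda_{\min}(\widehat{\Gm}_O)\ge\lambda_{\min}(\Sig)(\sigma_{\min}^2(\Wm)+\sigma_{\min}^2(\Vm))$, and dividing yields the first form of the claimed bound. The second form is then purely algebraic: write the fraction as $\frac{\sigma_{\max}^2(\Wm)}{\sigma_{\min}^2(\Wm)+\sigma_{\min}^2(\Vm)}+\frac{\sigma_{\max}^2(\Vm)}{\sigma_{\min}^2(\Wm)+\sigma_{\min}^2(\Vm)}$ and multiply and divide the two summands by $\sigma_{\min}^2(\Wm)$ and $\sigma_{\min}^2(\Vm)$ respectively to recognise $\beta_w\,\kappa(\Wm)^2+(1-\beta_w)\,\kappa(\Vm)^2$.

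I do not expect a real obstacle here; the proof is essentially an assembly of Weyl's inequality, Kronecker-product spectra, operator-norm submultiplicativity and congruence monotonicity. The two points that need care are: (i) the pseudo-condition-number bookkeeping above --- Weyl's lower bound controls $\lambda_{\min}$ of the full matrix, so one must separately certify that $\widehat{\Gm}_O$ has no spurious zero eigenvalue before replacing $\lambda_{\min}^+$ by $\lambda_{\min}$; and (ii) resisting the urge to rewrite $\Sig^{1/2}\Vm^\top\Vm\Sig^{1/2}$ via cyclic invariance as $\Vm\Sig\Vm^\top$ --- these share the same non-zero eigenvalues, but the latter is $m\times m$ and rank-deficient (as $m>d$), so its minimal eigenvalue $0$ would kill the lower bound. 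If tightness is also wanted (it is hinted at in the surrounding text), it would be shown separately by exhibiting co-diagonalisable $\Wm\Wm^\top$, $\Vm^\top\Vm$ and $\Sig$ that make each applied inequality an equality.
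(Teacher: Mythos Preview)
Your proposal is correct and follows essentially the same route as the paper: split $\widehat{\Gm}_O=\Am+\Bm$, apply Weyl and dual Weyl to the sum, evaluate the extreme eigenvalues of $\Am$ via the Kronecker-product spectrum, and bound those of $\Bm$ by pulling $\Sig^{1/2}$ off via submultiplicativity (the paper phrases both the upper and lower step on $\Bm$ simply as ``submultiplicativity of the matrix norm''; your congruence-monotonicity argument for the lower side is the same inequality made explicit). Your additional bookkeeping on the pseudo-condition number and the warning against passing to the rank-deficient $\Vm\Sig\Vm^\top$ are sound refinements that the paper leaves implicit.
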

\begin{proof}[Proof sketch]
    Choosing $\Am = \Wm\Wm^\top \otimes \Sig$ and $\Bm = \Im_k \otimes \Sig^{1/2} \Vm^\top\Vm\Sig^{1/2}$ and $i=j=1$ for the Weyl's inequality, $i=j=n$ for the dual Weyl's inequality and using the fact that $\Wm\Wm^\top \otimes \Sig$ and $\Im_k \otimes \Sig^{1/2} \Vm^\top\Vm\Sig^{1/2}$ are positive semidefinite yield the result. 
\end{proof}

\vspace{-3mm}
It is important to point out that the convex combination in Eq.~\eqref{eq:1_hiddenlayer_linear_upperB} is crucial and a more naive bound where we take the maximum of both terms is instead too loose, see details in Appendix \ref{appendix:convex_comb_vs_max_bound}. Later on, we will observe that the general case with $L$ layers also exhibits a comparable structure, wherein the significance of the convex combination becomes even more pronounced in deriving practical bounds.

\begin{remark}[Role of the data covariance]
    Besides the above dependence in terms of the convex combination of the bounds, we also see how the conditioning of the input data affects the conditioning of the GN spectra. 
    This is observed in~\cref{fig:whitenend_vs_not_whitened_cifar10_linNet}, where the condition number of the GN matrix is calculated on whitened and not whitened data. 
    This observation might also shed light on why data normalization 
    remains a standard choice in deep learning, often complemented by the normalization of intermediate layer activations. 
\end{remark}

\subsection{The general $L$-layer case}\label{subsec:L_layer_linear_NN}

Following our examination of the single-hidden layer case, we now broaden our analysis to include $L$-layer linear networks. As before, we will first derive an expression of the GN matrix and subsequently bound the largest and smallest eigenvalue separately to derive a bound on the condition number. 
Obtaining the GN matrix involves combining \eqref{eq:hat_H_O} and \eqref{eq:U^TU}, which yields
\begin{align*}
	\widehat{\Gm}_O &= (\Im_K \otimes \Sig^{1/2}) \Um^\top \Um (\Im_K \otimes \Sig^{1/2})
	= \sum_{l=1}^L (\wM{L:\ell+1}\wM{\ell+1:L}) \kro (\Sig^{1/2}\wM{1:\ell-1} \wM{\ell-1:1}\Sig^{1/2}).
\end{align*}
By repeatedly applying Weyl's inequalities, we obtain an upper bound on the condition number.

\begin{restatable}{lemma}{UpperBoundCondNumHO}
    \label{lemma:UpperBoundCondNumHO}
    Assume that $m > \max\{d,k\}$ and that $\alpha_\ell := \sigma^2_{\min}(\wM{L:\ell+1}) \cdot \sigma^2_{\min}(\wM{1:\ell-1}) > 0 \; \forall \ell=1,\hdots, L$. Let $\gamma_\ell := \frac{ \alpha_{\ell}} {\sum_{i=1}^L \alpha_i}$. Then the condition number of the GN matrix of a $L$-layer linear network can be upper-bounded in the following way:
    \begin{align*}
	\kappa(\widehat{\Gm}_O) \leq \kappa(\Sig) \sum\nolimits_{\ell=1}^L \gamma_{\ell}  \kappa(\wM{L:\ell+1})^2  \kappa(\wM{1:\ell-1})^2 
    \leq \kappa(\Sig)  \max_{1 \leq \ell \leq L} \left\{ \kappa(\wM{L:\ell+1})^2  
     \kappa(\wM{1:\ell-1})^2\right\}.
    \end{align*}
\end{restatable}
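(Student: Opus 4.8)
The plan is to generalise the strategy of \cref{lemma:1_hiddenlayer_linear} from two summands to $L$: write $\widehat{\Gm}_O$ as a sum of Kronecker products of positive semi-definite matrices, upper bound its largest eigenvalue and lower bound its smallest non-zero eigenvalue by applying Weyl's inequalities recursively across the $L$ terms, and finally form the ratio and recognise it as a convex combination of the quantities $\kappa(\wM{L:\ell+1})^2\kappa(\wM{1:\ell-1})^2$.

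Concretely, I would start from the identity stated just above, $\widehat{\Gm}_O = \sum_{\ell=1}^L A_\ell \kro B_\ell$ with $A_\ell := \wM{L:\ell+1}\wM{\ell+1:L} = \wM{L:\ell+1}(\wM{L:\ell+1})^\top \in \R^{k\times k}$ and $B_\ell := \Sig^{1/2}\wM{1:\ell-1}\wM{\ell-1:1}\Sig^{1/2} = (\wM{\ell-1:1}\Sig^{1/2})^\top(\wM{\ell-1:1}\Sig^{1/2}) \in \R^{d\times d}$, both positive semi-definite. Under $m>\max\{d,k\}$ and $\alpha_\ell>0$, each $\wM{L:\ell+1}$ has full row rank $k$ and each $\wM{\ell-1:1}$ has full column rank $d$, so $A_\ell \succ 0$ and $B_\ell$ has kernel exactly $\ker\Sig$ (as $\wM{\ell-1:1}$ is injective). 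Hence every summand vanishes only on $\R^k \kro \ker\Sig$, so the smallest non-zero eigenvalue of $\widehat{\Gm}_O$ is the smallest eigenvalue of its restriction to $\R^k\kro\range\Sig$; when $\Sig\succ0$ this is simply $\lambda_{\min}(\widehat{\Gm}_O)$ and $\widehat{\Gm}_O$ has full rank $kd$.

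For the numerator, since each $A_\ell\kro B_\ell\succeq0$, Weyl's inequality used recursively gives $\lambda_{\max}(\widehat{\Gm}_O)\le\sum_{\ell}\lambda_{\max}(A_\ell\kro B_\ell)=\sum_{\ell}\sigma_{\max}^2(\wM{L:\ell+1})\,\lambda_{\max}(B_\ell)\le\lambda_{\max}(\Sig)\sum_{\ell}\sigma_{\max}^2(\wM{L:\ell+1})\sigma_{\max}^2(\wM{1:\ell-1})$, using $\lambda_{\max}(X\kro Y)=\lambda_{\max}(X)\lambda_{\max}(Y)$ for positive semi-definite $X,Y$ and sub-multiplicativity of the spectral norm, as collected in \cref{appendix:standard_pertub_bounds_eigval}. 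For the denominator, the dual Weyl inequality gives $\lambda_{\min}(\widehat{\Gm}_O)\ge\sum_{\ell}\lambda_{\min}(A_\ell\kro B_\ell)=\sum_{\ell}\sigma_{\min}^2(\wM{L:\ell+1})\,\lambda_{\min}(B_\ell)\ge\lambda_{\min}(\Sig)\sum_{\ell}\sigma_{\min}^2(\wM{L:\ell+1})\sigma_{\min}^2(\wM{1:\ell-1})=\lambda_{\min}(\Sig)\sum_{\ell}\alpha_\ell$, where the step $\sigma_{\min}(\wM{\ell-1:1}\Sig^{1/2})\ge\sigma_{\min}(\wM{\ell-1:1})\sigma_{\min}(\Sig^{1/2})$ is exactly where full column rank of $\wM{\ell-1:1}$, i.e. $\alpha_\ell>0$, is needed. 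If $\Sig$ is rank-deficient one runs the identical argument on $\R^k\kro\range\Sig$ with $\lambda_{\min}(\Sig)$ replaced by its smallest non-zero eigenvalue.

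Dividing the two bounds yields $\kappa(\widehat{\Gm}_O)\le\kappa(\Sig)\cdot\big(\sum_\ell\sigma_{\max}^2(\wM{L:\ell+1})\sigma_{\max}^2(\wM{1:\ell-1})\big)\big/\big(\sum_\ell\alpha_\ell\big)$. Writing each numerator term as $\sigma_{\max}^2(\wM{L:\ell+1})\sigma_{\max}^2(\wM{1:\ell-1})=\kappa(\wM{L:\ell+1})^2\kappa(\wM{1:\ell-1})^2\,\alpha_\ell$ and dividing through by $\sum_i\alpha_i$ turns the fraction into $\sum_\ell\gamma_\ell\,\kappa(\wM{L:\ell+1})^2\kappa(\wM{1:\ell-1})^2$ with $\gamma_\ell=\alpha_\ell/\sum_i\alpha_i\ge0$ and $\sum_\ell\gamma_\ell=1$, which is the first claimed inequality; the second then follows immediately since a convex combination of non-negative reals never exceeds their maximum (and, as discussed in \cref{appendix:convex_comb_vs_max_bound}, this last relaxation can be far from tight). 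I do not expect any genuine analytic obstacle beyond the Weyl/Kronecker facts already used in the one-hidden-layer case; the only points requiring care are the overloaded $\wM{k:\ell}$ notation and the null-space bookkeeping that ensures ``smallest non-zero eigenvalue'' is handled correctly.
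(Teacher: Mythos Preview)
Your proposal is correct and follows essentially the same route as the paper: write $\widehat{\Gm}_O$ as a sum of $L$ positive semi-definite Kronecker products, apply Weyl's and dual Weyl's inequalities recursively together with the Kronecker eigenvalue identity and sub-multiplicativity from \cref{appendix:standard_pertub_bounds_eigval}, then rewrite the resulting ratio as a convex combination with weights $\gamma_\ell$. The paper in fact only states ``by repeatedly applying Weyl's inequalities'' for the $L$-layer case, so your write-up is more detailed (in particular your remark about restricting to $\R^k\kro\range\Sig$ when $\Sig$ is rank-deficient is a point the paper glosses over).
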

As mentioned earlier, we observe the same convex combination structure which, as we will soon see experimentally, is crucial to obtain a bound that works in practice.

\textbf{Empirical validation.}
The empirical results in \cref{fig:fixed_width_w_depth_MNIST_whitened} show that the derived bound seems to be tight and predictive of the trend of the condition number of GN at initialization. If the width of the hidden layer is held constant, the condition number grows with a quadratic trend. However, the condition number can be controlled if the width is scaled proportionally with the depth. This gives another explanation of why in practice the width of the network layers is scaled proportionally with the depth to enable faster network training. 

\begin{figure}[h]
\centering
\begin{subfigure}[c]{0.46\textwidth}
    \includegraphics[width=\textwidth]{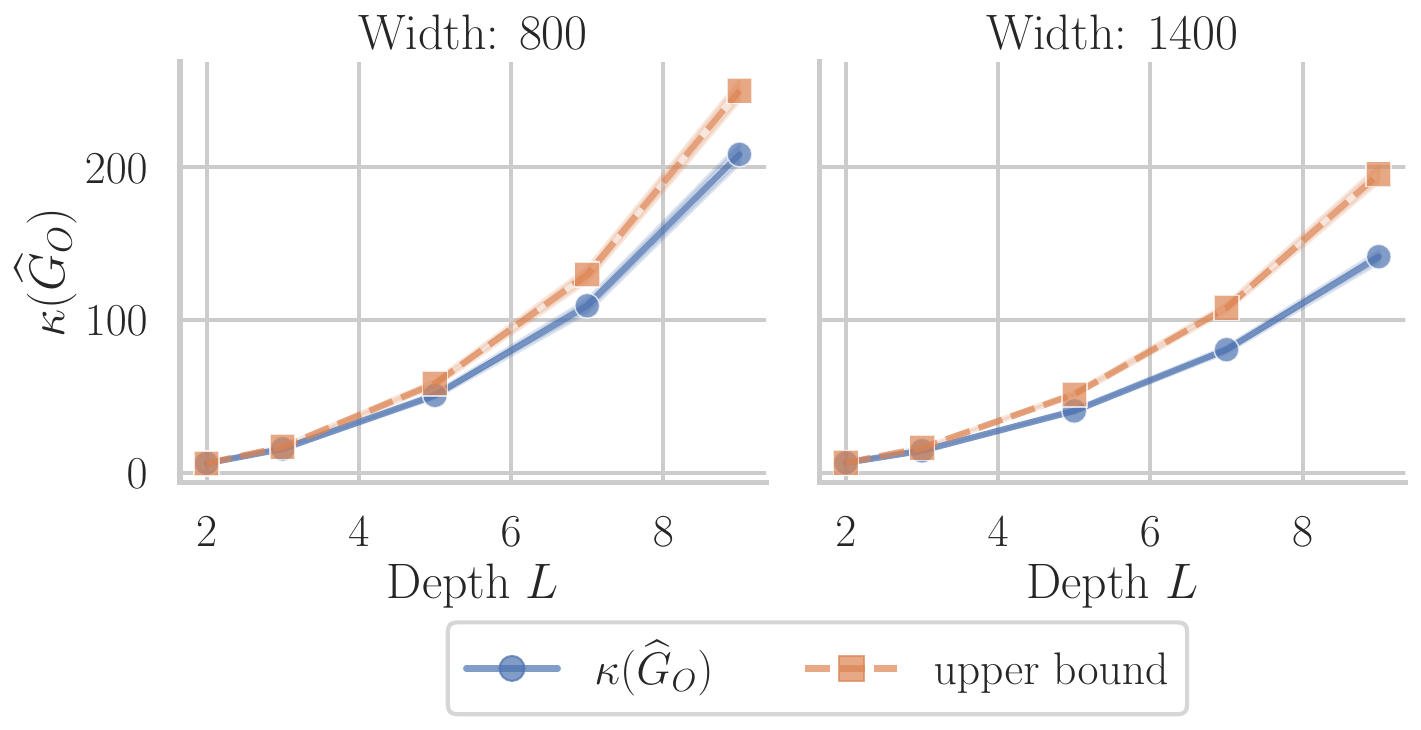}
    
        \vspace{-0.2cm}
        \caption{}
    \label{fig:fixed_width_w_depth_MNIST_whitened}
\end{subfigure}
\hspace{0.1cm}
\hspace{0.1cm}
\begin{subfigure}[c]{0.46\textwidth}
    \includegraphics[width=\textwidth]{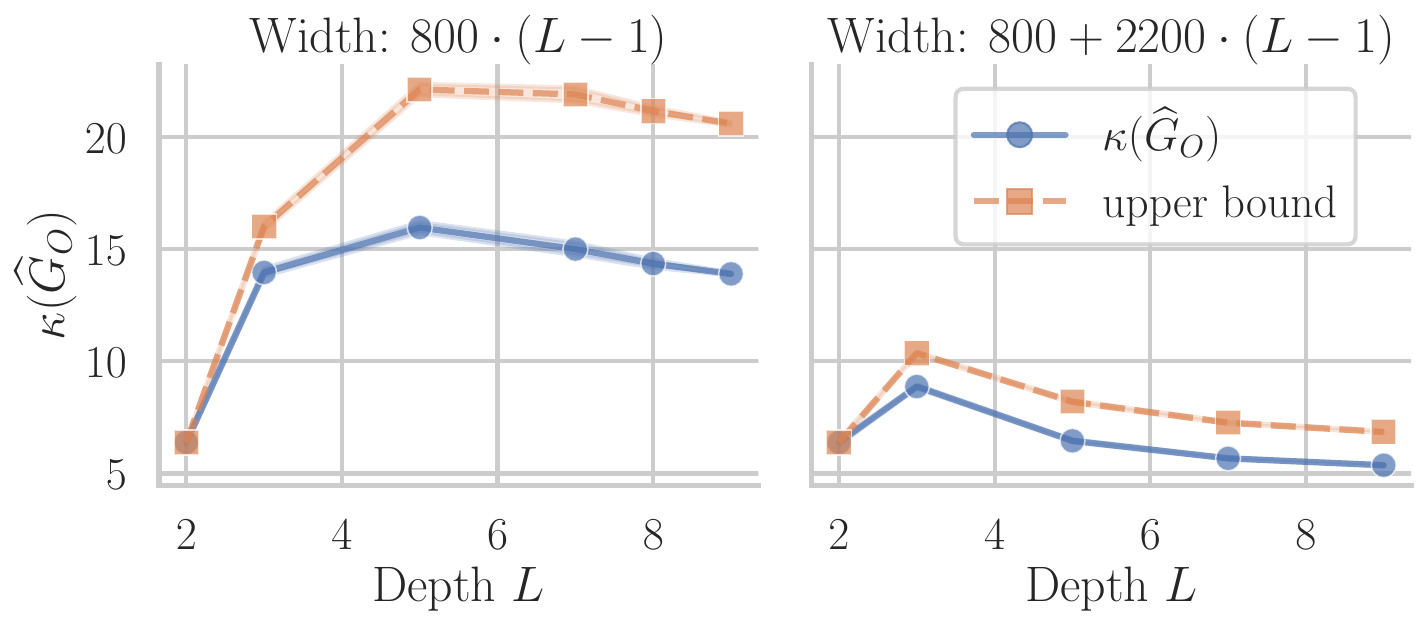}
        
        \vspace{-0.cm}
        \caption{}
    \label{fig:width_prop_w_depth_MNIST_whitened}
\end{subfigure}
\caption{\textbf{a)} Condition number at initialization under Kaiming normal initialization of GN $\kappa(\hat{\Gm}_O)$ and first upper bound derived in~\cref{lemma:UpperBoundCondNumHO} and Eq.\eqref{eq:upper_bound_1_condnum_H_O_residual} for whitened MNIST as a function of depth $L$ for different hidden layer widths $m$ for a Linear Network over 3 initializations. \textbf{b)} Scaling the width of the hidden layer proportionally to the depth leads to slower growth of the condition number (left) or improves the condition number with depth if the scaling factor is chosen sufficiently large (right).}%
\end{figure}
\begin{wrapfigure}{R}{0.46\textwidth} 
    \vspace{-0.35cm}
    \centering
    \begin{minipage}[b]{0.46\textwidth}
    \includegraphics[width=\textwidth]{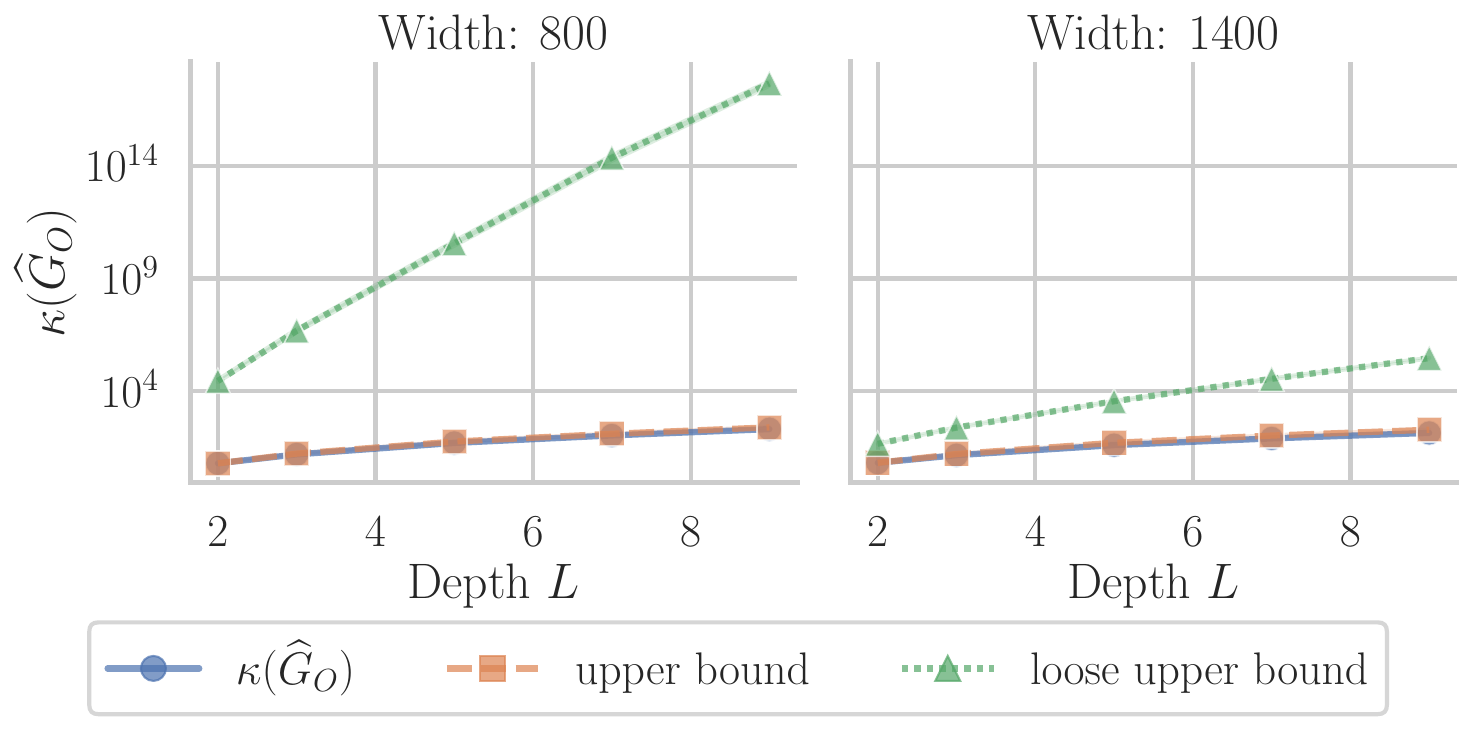}
    \end{minipage}\hfill
    \caption{Comparison of derived upper bounds in Lemma \ref{lemma:UpperBoundCondNumHO} for the condition number at initialization for whitened MNIST over 20 runs. Note the logarithmic scaling of the $y$-axis.}
     \label{fig:upperbound_comparison_relplot_MNIST_whitened}
\end{wrapfigure}
Furthermore, \cref{fig:upperbound_comparison_relplot_MNIST_whitened} empirically demonstrates the importance of bounding the condition number as a convex combination of the condition number of the weight matrices, as simply taking the maximum makes the bound vacuous. 
This is due to a `self-balancing' behavior of the individual terms in~\cref{lemma:UpperBoundCondNumHO}, which is further elaborated in Appendix \ref{appendix:convex_comb_vs_max_bound}. We show that the difference in magnitude between each condition number is largely driven by their smallest eigenvalue. Thus at the same time, they also have a smaller weight in the convex combination, making the overall condition number grow slower than the maximum bound would predict. 
\begin{figure}
    \centering
    \begin{minipage}[c]{0.40\textwidth}
    \includegraphics[width=0.95\textwidth]{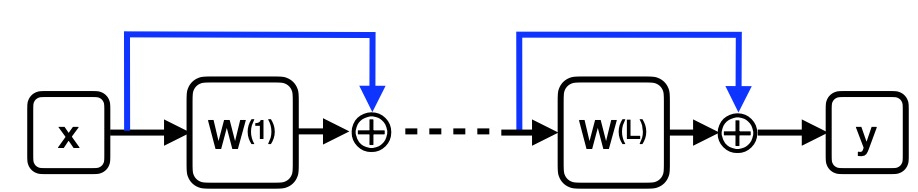}
    \end{minipage}
    \begin{minipage}[c]{0.50\textwidth}
    \caption{Adding skip connections between each layer for a general L-layer linear Neural Network.}
    \label{fig:skip_connection_graph_L_layer}
    \end{minipage}
\end{figure}
\subsection{$L$-layer linear residual networks}

Following the examination of linear networks, our focus will now shift to the analysis of linear networks where we introduce a skip connection with weight $\beta$ between each layer, as illustrated in \cref{fig:skip_connection_graph_L_layer}. This results in a residual network of the form $F_\theta(\x) = (\Wm^{(L)} + \beta \Im) \cdots (\Wm^{(1)} + \beta \Im) \x$. Given the architecture of the residual network, a substantial portion of the analysis conducted in Section \ref{subsec:L_layer_linear_NN} for the $L$-layer linear neural network can be repurposed.

The key insight lies in the realization that when the skip connection precisely bypasses one layer, it implies a modification of Eq.~\eqref{eq:U^TU} to 
$\Um^\top\Um = \sum_{\ell = 1}^L \Wm_\beta^{L:\ell+1}\Wm_\beta^{\ell+1:L} \otimes \Wm_\beta^{1: \ell-1}\Wm_\beta^{\ell-1:1}$,
where we define $\Wm_\beta^{k:l} := (\Wm^k + \beta \Im) \cdot \cdot \cdot (\Wm^l + \beta \Im)$ if $k > l$ and $\Wm_\beta^{k:l} := (\Wm^k + \beta \Im)^\top \cdot \cdot \cdot  (\Wm^l + \beta \Im)^\top$ if $k < l$. $\Im$ denotes the rectangular identity matrix of appropriate dimensions.
Note, that we can apply the upper bounds derived in~\cref{lemma:UpperBoundCondNumHO} analogously to arrive at
\begin{align}
	\label{eq:upper_bound_1_condnum_H_O_residual}
	\kappa(\widehat{\Gm}_O) &\leq \kappa(\Sig) \sum_{\ell=1}^L \gamma_{\ell}^{\beta} \, \kappa^2(\wM{L:\ell+1}_{\beta}) 
       \, \kappa^2(\wM{1:\ell-1}_{\beta}) \leq \kappa(\Sig) \, \max_{1\leq{\ell}\leq L}  \kappa^2(\wM{L:\ell+1}_{\beta}) \, \kappa^2(\wM{1:\ell-1}_{\beta})
\end{align}
where $\gamma_\ell^\beta := \frac{ \alpha_{\ell}^\beta} {\sum_{i=1}^L \alpha_i^\beta}$ and analogously to the $L$-layer case $\alpha_{\ell}^{\beta} := \sigma^2_{\min}(\wM{L:\ell+1}_{\beta}) \cdot \sigma^2_{\min}(\wM{1:\ell-1}_{\beta})$ for $\ell = 1,\hdots,L$.\\
Let us now analyze how skip connections affect the condition number. Denoting the SVD of $\Wm^{\ell} = \Um^{\ell}\Sm^{\ell}\Vm^{\ell^\top}$ note that 
    $\Wm^\ell + \beta \Im = \Um^\ell (\Sm^\ell + \beta \Im) \Vm^{\ell^{T}}$, and therefore $\sigma_{i}(\Wm^\ell + \beta \Im) = \sigma_{i}(\Wm^\ell) + \beta$
for all singular values of $\Wm^\ell$. Incorporating skip connections results in a spectral shift of each weight matrix to the positive direction by a magnitude of $\beta$. 
Furthermore, we will assume that the left and right singular vectors of the layers in-between coincide, that is $\Vm^\ell = \Um^{\ell - 1}$. This assumption is fulfilled if the initialization values are sufficiently close to zero~\citep{saxe2013exact}.
Then for each $\ell$ we have\looseness=-1
\begin{align} \nonumber
\kappa^2(\wM{L:\ell+1}_{\beta}) \kappa^2(\wM{1:\ell-1}_{\beta}) &= \frac{\prod_{i=\ell+1}^L (\sigma_{\max}(\W^{i}) + \beta)^2 \prod_{i=1}^{\ell-1} (\sigma_{\max}(\W^{i}) + \beta)^2}{\prod_{i=\ell+1}^L (\sigma_{\min}(\W^{i}) + \beta)^2 \prod_{i=1}^{\ell-1} (\sigma_{\min}(\W^{i}) + \beta)^2} \\
\label{eq:product_bound_resnet}
    &= \prod_{i \neq \ell} \left(\frac{\sigma_{\max}(\Wm^i) + \beta}{\sigma_{\min}(\Wm^i) + \beta}\right)^2.
\end{align}
Since $\frac{\sigma_{\max}(\Wm^i) + \beta}{\sigma_{\min}(\Wm^i) + \beta} < \frac{\sigma_{\max}(\Wm^i)}{\sigma_{\min}(\Wm^i)}$ for all $\beta > 0$, we can conclude that the second upper bound in Eq. \eqref{eq:upper_bound_1_condnum_H_O_residual} for residual networks is smaller than the second upper bound for linear networks in~\cref{lemma:UpperBoundCondNumHO}.
We verify that this observation also holds for the actual condition number and the tight upper bound in Eq. \eqref{eq:upper_bound_1_condnum_H_O_residual} as can be seen in~\cref{fig:MNIST_whitened_LinNet_vs_ResNet_beta=1_comparison}. 
The latter shows that adding skip connections improves the conditioning of the network. 
Furthermore, \cref{fig:MNIST_whitened_Resnet_beta_comparison} in the appendix shows that the condition number is improved more for larger $\beta$ because this pushes the condition number towards one as can be seen from Eq. \eqref{eq:product_bound_resnet}. 
To conclude, the analysis of the condition number provides a rigorous explanation for the popularity of residual networks compared to fully connected networks. This is in line with existing empirical results in the literature~\citep{he2016deep, li2018visualizing, liu2019towards}.

\textbf{Remark: extension to convolutional layers.}
Note that the analysis on fully connected layers can also be extended to convolutional layers in a straightforward way by making use of the fact that the convolution operation can be reformulated in the form of a matrix-vector product using Toeplitz matrices, as discussed for instance in ~\citet{singh2023hessian}. In this case, we can apply the same analysis as for fully connected networks. See \cref{appendix:extension_conv_layers} for more details.

\section{Extension to non-linear activations}
After the analysis of networks with a linear activation in the previous section, we will extend our results to non-linear activation functions $\sigma$ that satisfy $\sigma(z) = \sigma'(z)z$ for one-hidden layer networks $F_{\theta}(\x) = \Wm \sigma(\Vm \x)$.  Specifically, this extension includes all piece-wise linear activation functions such as ReLU or Leaky ReLU. 
For this purpose, we first rewrite the network output as a super-position of unit-networks $F_{\theta_i}(\x)$ over the number of hidden neurons $m$ as
\begin{align}
    F_{\theta}(\x) = \sum_{i=1}^m F_{\theta_i}(\x) = \sum_{i=1}^m \Wm_{\bullet, i} \sigma(\Vm_{i,\bullet} \x),
\end{align}
where $\Wm_{\bullet, i}$ denotes the $i$-th column of $\Wm$ and $\Vm_{i,\bullet}$ denotes the $i$-th row of $\Vm$. 
Using the following lemma, we can then derive an expression for the Gauss-Newton matrix.
\begin{lemma}(Lemma 25 in~\citet{singh2021analytic})
    \label{lemma:jacobian_1_hidden_layer_relu_network}
	Let $F_{\theta_i}(\x) = \Wm_{\bullet, i} \sigma(\Vm_{i, \bullet} \x)$ be a unit-network corresponding to $i$-th neuron, with the non-linearity $\sigma$ such that $\sigma(z) = \sigma'(z)z$. Let $\Xm \in \R^{d \times n}$ denote the data matrix. 
    Further, let $\Lamb^i \in \R^{n \times n}$ be defined as $(\Lamb^i)_{jj} = \sigma'(\Vm_{i, \bullet} \x)_j, \text{for } j=1,\hdots,n$ and zero elsewhere. 
 Then the Jacobian matrix $\nabla_{\theta} F_{\theta_i}(\Xm)$ is given (in transposed form) by:
	\begin{equation*}
		\nabla_{\theta} F_{\theta_i}(\Xm) = 
		\begin{pmatrix}
			\mathbf{0} &
			\Xm \Lamb^i \otimes \Wm^\top_{\bullet \, i} &
			\Vm_{i\, \bullet} \Xm \Lamb^i \otimes \Im_k &
			\mathbf{0}
		\end{pmatrix}^\top.
	\end{equation*}
\end{lemma}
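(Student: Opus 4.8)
The plan is to prove Lemma~\ref{lemma:jacobian_1_hidden_layer_relu_network} by a direct application of the chain rule, exploiting the key structural identity $\sigma(z) = \sigma'(z)z$ that makes the computation collapse nicely. First I would fix the neuron index $i$ and write the unit-network evaluated at the full data matrix as $F_{\theta_i}(\Xm) = \Wm_{\bullet, i}\,\sigma(\Vm_{i,\bullet}\Xm) \in \R^{k \times n}$, where $\sigma$ acts entrywise on the row vector $\Vm_{i,\bullet}\Xm \in \R^{1\times n}$. The parameters that this unit-network depends on are exactly the $i$-th column of $\Wm$ (a vector in $\R^k$) and the $i$-th row of $\Vm$ (a vector in $\R^d$); all other entries of $\Wm$ and $\Vm$ yield zero blocks in the Jacobian, which explains the two $\mathbf{0}$ blocks in the stated expression. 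So the real work is computing the two nontrivial blocks: $\partial \vect(F_{\theta_i}(\Xm)) / \partial \Wm_{\bullet, i}$ and $\partial \vect(F_{\theta_i}(\Xm)) / \partial \Vm_{i,\bullet}$.

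For the block with respect to $\Wm_{\bullet, i}$: since $F_{\theta_i}(\Xm) = \Wm_{\bullet,i}\,\sigma(\Vm_{i,\bullet}\Xm)$ is linear in $\Wm_{\bullet,i}$, its derivative is obtained by the standard vectorization identity $\vect(\ab \cb^\top) = \cb \otimes \ab$, giving a Kronecker factor $\sigma(\Vm_{i,\bullet}\Xm)^\top \otimes \Im_k$; one then observes, using $\sigma(z) = \sigma'(z)z$ componentwise, that $\sigma(\Vm_{i,\bullet}\Xm)^\top = \Lamb^i (\Vm_{i,\bullet}\Xm)^\top = \Lamb^i \Xm^\top \Vm_{i,\bullet}^\top$, which after transposing matches the block $\Vm_{i\,\bullet}\Xm\Lamb^i \otimes \Im_k$ in the statement (recalling $\Lamb^i$ is symmetric since diagonal). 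For the block with respect to $\Vm_{i,\bullet}$: by the chain rule, $\partial \sigma(\Vm_{i,\bullet}\Xm)/\partial \Vm_{i,\bullet}$ produces the diagonal matrix of derivatives $\sigma'$ times the linear map $\Vm_{i,\bullet}\mapsto \Vm_{i,\bullet}\Xm$, i.e.\ a factor $\Xm \Lamb^i$ (up to transpose conventions), and then left-multiplying by $\Wm_{\bullet,i}$ contributes the Kronecker factor $\Wm_{\bullet,i}^\top$ (or $\Wm_{\bullet,i}$, depending on the vec-convention), yielding the block $\Xm\Lamb^i \otimes \Wm_{\bullet\,i}^\top$. Assembling the four blocks in the order (bias-type/other-$\Wm$ entries, $\Wm_{\bullet,i}$, $\Vm_{i,\bullet}$, other-$\Vm$ entries) reproduces the claimed transposed Jacobian.

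The main obstacle is purely bookkeeping rather than conceptual: one has to be scrupulous about the vectorization convention (row-major vs.\ column-major, and whether $\vect$ stacks columns), the placement of transposes, and which Kronecker factor sits on the left versus the right, so that the final assembled matrix has the exact shape and block order stated. The identity $\sigma(z)=\sigma'(z)z$ is what lets the $\sigma(\cdot)$ terms be rewritten cleanly in terms of $\Lamb^i$ and the pre-activations, so that the two blocks share the common factor $\Lamb^i$; verifying this rewriting holds entrywise (and that $\Lamb^i$ as defined with entries $\sigma'(\Vm_{i,\bullet}\x)_j$ is exactly the right diagonal matrix) is the one place where care is needed. Since this is Lemma~25 of~\citet{singh2021analytic}, I would either cite it directly or reproduce the short computation above, emphasizing the two Kronecker-structured blocks and the two vanishing blocks, and leave the routine index-chasing to a short displayed calculation.
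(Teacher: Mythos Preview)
The paper does not actually prove this lemma; it is quoted verbatim from \citet{singh2021analytic} and used as a black box. Your approach---direct differentiation via the chain rule together with the key identity $\sigma(z)=\sigma'(z)z$ to rewrite $\sigma(\Vm_{i,\bullet}\Xm)=\Vm_{i,\bullet}\Xm\Lamb^i$---is correct and is exactly how one establishes the result.

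One minor bookkeeping correction: your identification of the four blocks is slightly off. There are no biases in this model, and the two $\mathbf{0}$ blocks do not correspond to ``other-$\Wm$ entries'' and ``other-$\Vm$ entries'' separately. Rather, the parameter ordering underlying the lemma is neuron-wise, $\theta=(\Vm_{1,\bullet}^\top,\Wm_{\bullet,1},\ldots,\Vm_{m,\bullet}^\top,\Wm_{\bullet,m})$, so the first $\mathbf{0}$ collects all parameters of neurons $1,\ldots,i-1$ and the last $\mathbf{0}$ those of neurons $i+1,\ldots,m$; this is precisely what makes the mixed terms $\nabla F_{\theta_i}(\Xm)^\top\nabla F_{\theta_j}(\Xm)$ vanish for $i\neq j$, as the paper uses immediately afterwards. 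Correspondingly, the second displayed block $\Xm\Lamb^i\otimes\Wm_{\bullet,i}^\top$ is the derivative with respect to $\Vm_{i,\bullet}$ (it contains the undifferentiated $\Wm_{\bullet,i}$), and the third block $\Vm_{i,\bullet}\Xm\Lamb^i\otimes\Im_k$ is the derivative with respect to $\Wm_{\bullet,i}$---the reverse of the order you wrote. With that swap and the corrected interpretation of the zero blocks, your argument goes through as stated.
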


 \begin{figure}
    \begin{minipage}[c]{0.48\textwidth}
        \includegraphics[width=\textwidth]{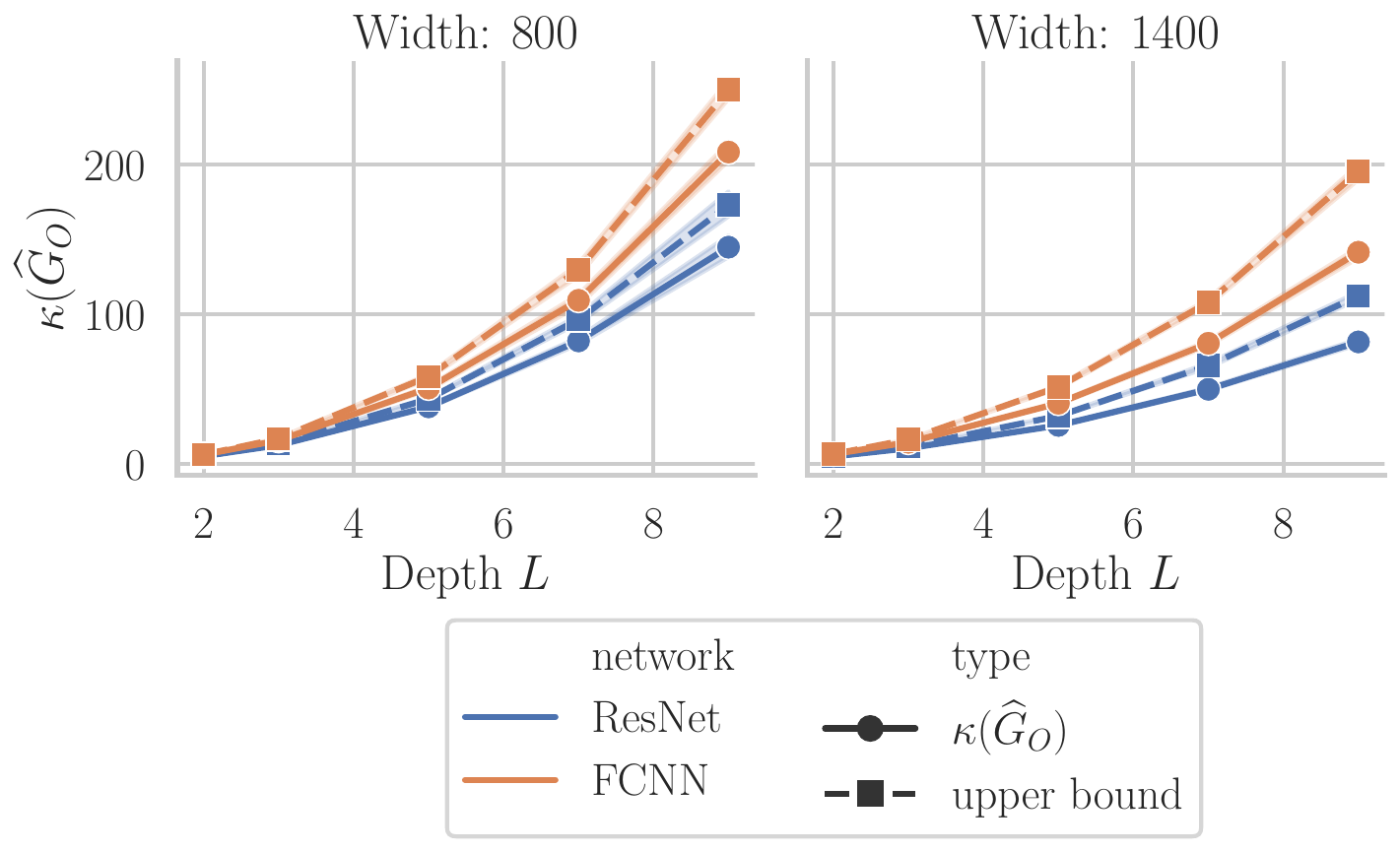}
    \end{minipage}
    \begin{minipage}[c]{0.48\textwidth}
        \includegraphics[width=\textwidth]{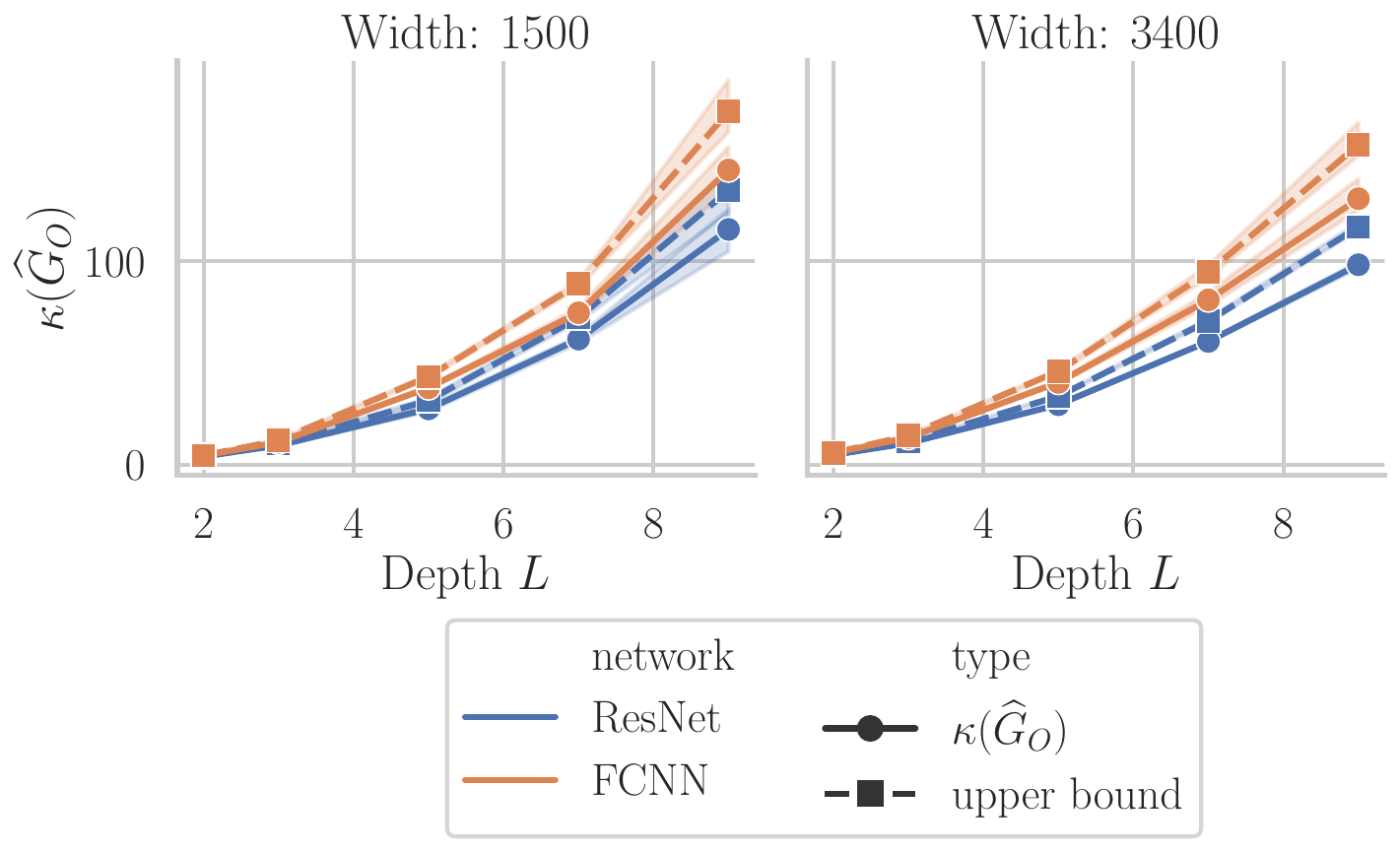}
    \end{minipage}
    \caption{Comparison of condition number of GN  $\kappa(\hat{\Gm}_O)$ between Linear Network and Residual Network with $\beta=1$ for whitened MNIST (left) and whitened Cifar-10 (right) at initialization using Kaiming normal initialization over three seeds. The upper bounds refer to the first upper bound in~\cref{lemma:UpperBoundCondNumHO} and Eq. \eqref{eq:upper_bound_1_condnum_H_O_residual}, respectively.}
\label{fig:MNIST_whitened_LinNet_vs_ResNet_beta=1_comparison}
\end{figure}

The Jacobian matrix of the full network is simply the sum over all units $\nabla_{\theta} F_{\theta}(\Xm) = \sum_{i=1}^m \nabla_{\theta} F_{\theta_i}(\Xm)$.
Now that we have an expression for the Jacobian matrix, we can  
compute the GN matrix as $\Gm_O = \nabla F_{\theta}(\Xm) \nabla F_{\theta}(\Xm)^\top$. Note that the non-zero eigenvalue of $\Gm_O$ and $\widehat{\Gm}_O := \nabla F_{\theta}(\Xm)^\top \nabla F_{\theta}(\Xm) \in \R^{kn \times kn}$ are the same, for which we have
    $$\widehat{\Gm}_O =  \sum_{i=1}^m \nabla F_{\theta_i}(\Xm)^\top \nabla F_{\theta_i}(\Xm) = \sum_{i=1}^m \widehat{\Gm}_O^i$$
with $
	\widehat{\Gm}_O^i := \, \Lamb^i \Xm^\top \Xm \Lamb^i \otimes \Wm_{\bullet i} \Wm_{\bullet i}^\top \nonumber + \Lamb^i \Xm^\top \Vm_{i \bullet}^\top \Vm_{i \bullet} \Xm \Lamb^i \otimes \Im_k \in \R^{kn \times kn}$,
since the mixed terms $\nabla F_{\theta_i}(\Xm)^\top \nabla F_{\theta_j}(\Xm)$ with $i \neq j$ are zero due to the block structure of the Jacobian of the unit-networks.
Using the mixed product property of the Kronecker product we obtain
\begin{align}
    \label{eq:def_H_O_hat_non_linear_1_hidden_layer}
	\widehat{\Gm}_O = \sum\nolimits_{i=1}^m \Lamb^i \Xm^\top \Xm \Lamb^i \otimes \Wm_{\bullet i} \Wm_{\bullet i}^\top + \sum\nolimits_{i=1}^m \Lamb^i \Xm^\top \Vm_{i \bullet}^\top \Vm_{i \bullet} \Xm \Lamb^i \otimes \Im_k.
\end{align}
To improve readability, we will write $\Gam := \sum_{i=1}^m \Lamb^i \Xm^\top \Vm_{i \bullet}^\top \Vm_{i \bullet} \Xm \Lamb^i$ hereafter.\looseness=-1

\textbf{Leaky ReLU activation.}
Let's now consider the case where the non-linear activation $\sigma$ is $\text{LeakyReLU}(x) = \max\{\alpha x,x\}$ for some constant $\alpha \in [0,1)$.
Typically $\alpha$ is chosen to be close to zero, e.g. $\alpha = 0.01$.
Again, the condition number $\kappa(\widehat{\Gm}_O)$ can be upper bounded by bounding the extreme eigenvalues independently. By observing that all terms in Eq. \eqref{eq:def_H_O_hat_non_linear_1_hidden_layer} are symmetric, we can again apply the Weyl's and dual Weyl's inequality. 
However to get a lower bound larger than zero for $\lambda_{\min}(\widehat{\Gm}_O)$ a different approach is needed as $\Wm_{\bullet,i}\Wm_{\bullet,i}^\top$ 
is rank-deficient and the same steps to bound the smallest eigenvalue would lead to a vacuous value of zero. 
Instead, we will use the following observation that we can bound the extreme eigenvalues of the sum of a Kronecker product of PSD matrices $\Am_i, \Bm_i$ for $i=1, \hdots,m$ by $\lambda_{\min}\left( \sum_{i=1}^m (\Am_i \otimes \Bm_i) \right) \geq \min_{1\leq j \leq m} \lambda_{\min}(\Am_j) \cdot \,  \lambda_{\min}\left(\sum_{i=1}^m \Bm_i\right)$.

\begin{wrapfigure}{L}{0.4\textwidth}
    \centering
    \raisebox{0pt}[\dimexpr\height-0.7\baselineskip\relax]
     {\includegraphics[width=0.4\textwidth]{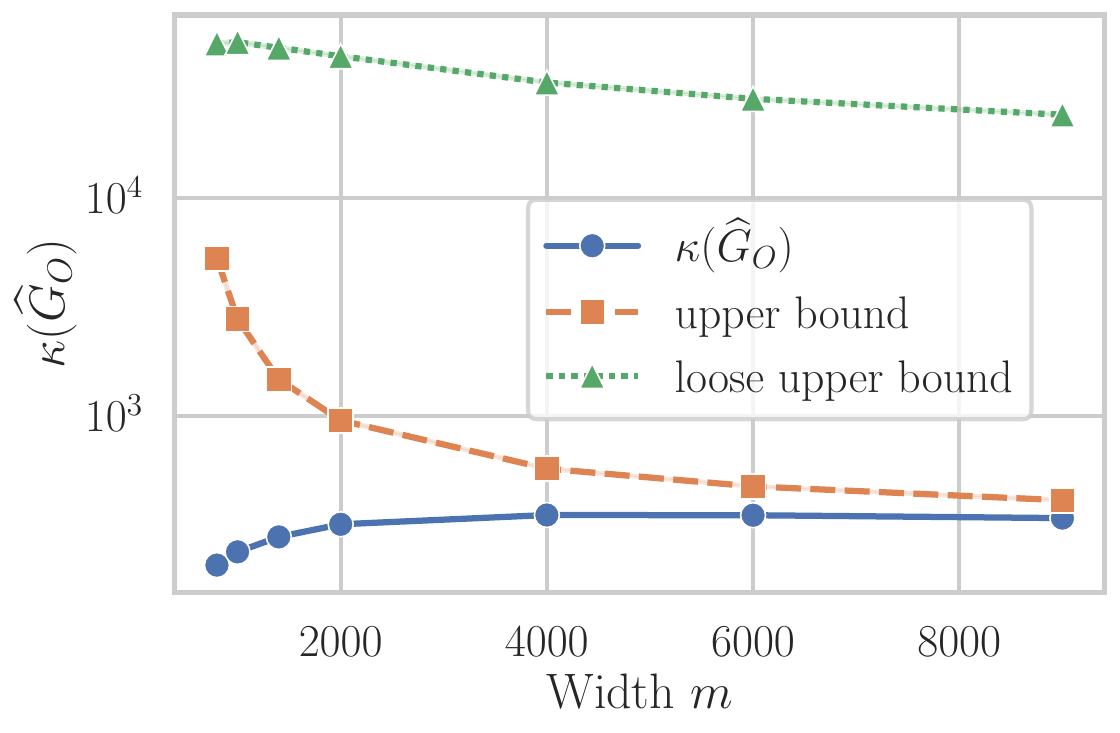}}
    \caption{Condition number of a one-hidden layer Leaky ReLU network with $\alpha=0.01$ and upper bounds for whitened MNIST over $n=500$ data points. The upper bound refers to Eq. \eqref{eq:leaky_relu_tighter_upperbound}. Note that the y-axis is log scaled. \looseness=-1
    } \label{fig:Leaky_ReLU_MNIST_whitened_comparison_width}
    \vspace{-0.7cm}
\end{wrapfigure}
By first using Weyl's inequalities and subsequently applying the above bound with $\Am_i = \Lamb^i \Xm^\top \Xm \Lamb^i$ and $\Bm_i = \Wm_{\bullet, i}\Wm_{\bullet, i}^\top$
we achieve the following bound for the condition number for $\kappa(\widehat{\Gm}_O)$.

\begin{restatable}{lemma}{BoundLeakyReluCondnumBounds}
    Consider an one-hidden layer network with a Leaky-ReLU activation with negative slope $\alpha \in [0,1]$. Then the condition number of the GN matrix defined in \cref{eq:def_H_O_hat_non_linear_1_hidden_layer} can be bounded as: 
    \label{lemma:bound_leaky_relu_condnum_bounds} 
    \begin{align}
        \label{eq:leaky_relu_tighter_upperbound}
         \kappa(\widehat{\Gm}_O) \leq \frac{ \sigma^2_{\max}(\Xm) \, \sigma^2_{\max}(\Wm) + \lambda_{\max}\left(\Gam \right) }{ \alpha^2 \, \sigma^2_{\min}(\Xm) \, \sigma^2_{\min}(\Wm)
        + \lambda_{\min}\left(\Gam \right)} 
    \end{align}
\end{restatable}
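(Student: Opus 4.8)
The plan is to bound $\lambda_{\max}(\widehat{\Gm}_O)$ from above and $\lambda_{\min}(\widehat{\Gm}_O)$ from below and then divide. Write $\widehat{\Gm}_O = \Am + \Bm$ with $\Am := \sum_{i=1}^m \Lamb^i \Xm^\top \Xm \Lamb^i \otimes \Wm_{\bullet i} \Wm_{\bullet i}^\top$ and $\Bm := \Gam \otimes \Im_k$; both are sums of Kronecker products of PSD matrices, hence PSD. For Leaky-ReLU with slope $\alpha$ the diagonal of $\Lamb^i$ has entries $\sigma'(\cdot)\in\{\alpha,1\}$, so $\alpha\,\Im_n \preceq \Lamb^i \preceq \Im_n$ for every $i$, which is the structural fact driving both directions.

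\textbf{Largest eigenvalue.} By Weyl's inequality $\lambda_{\max}(\widehat{\Gm}_O) \le \lambda_{\max}(\Am) + \lambda_{\max}(\Bm)$. Since $\Lamb^i \preceq \Im_n$, we have $\Lamb^i\Xm^\top\Xm\Lamb^i \preceq \sigma^2_{\max}(\Xm)\,\Im_n$, and using $\Cm \preceq c\,\Im \Rightarrow \Cm\otimes\Dm \preceq c\,\Im\otimes\Dm$ for $\Dm\succeq 0$ (Kronecker of PSD is PSD), summing over $i$ gives $\Am \preceq \sigma^2_{\max}(\Xm)\,\Im_n \otimes \big(\sum_i \Wm_{\bullet i}\Wm_{\bullet i}^\top\big) = \sigma^2_{\max}(\Xm)\,\Im_n\otimes \Wm\Wm^\top$, so $\lambda_{\max}(\Am) \le \sigma^2_{\max}(\Xm)\,\sigma^2_{\max}(\Wm)$. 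Also $\lambda_{\max}(\Bm) = \lambda_{\max}(\Gam)$ since $\Im_k$ only contributes the eigenvalue $1$. This is the numerator of \eqref{eq:leaky_relu_tighter_upperbound}.

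\textbf{Smallest eigenvalue.} By the dual Weyl inequality $\lambda_{\min}(\widehat{\Gm}_O) \ge \lambda_{\min}(\Am) + \lambda_{\min}(\Bm)$ with $\lambda_{\min}(\Bm) = \lambda_{\min}(\Gam)$. For $\Am$ I apply the Kronecker-sum lower bound stated just above the lemma, $\lambda_{\min}\big(\sum_i \Am_i\otimes\Bm_i\big) \ge \min_j \lambda_{\min}(\Am_j)\cdot\lambda_{\min}\big(\sum_i\Bm_i\big)$, with $\Am_i = \Lamb^i\Xm^\top\Xm\Lamb^i$ and $\Bm_i = \Wm_{\bullet i}\Wm_{\bullet i}^\top$, so $\lambda_{\min}(\Am) \ge \min_j \sigma^2_{\min}(\Xm\Lamb^j)\cdot\lambda_{\min}(\Wm\Wm^\top) = \min_j \sigma^2_{\min}(\Xm\Lamb^j)\cdot\sigma^2_{\min}(\Wm)$. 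Finally, since $\Lamb^j$ is square with $\Lamb^j \succeq \alpha\,\Im_n$, for any unit vector $v$ we have $\|\Xm\Lamb^j v\| \ge \sigma_{\min}(\Xm)\,\|\Lamb^j v\| \ge \alpha\,\sigma_{\min}(\Xm)$, hence $\sigma^2_{\min}(\Xm\Lamb^j) \ge \alpha^2\sigma^2_{\min}(\Xm)$. Combining gives $\lambda_{\min}(\widehat{\Gm}_O) \ge \alpha^2\sigma^2_{\min}(\Xm)\sigma^2_{\min}(\Wm) + \lambda_{\min}(\Gam)$, the denominator of \eqref{eq:leaky_relu_tighter_upperbound}; dividing the two bounds completes the proof.

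\textbf{Main obstacle.} The delicate point is the lower bound on $\lambda_{\min}(\Am)$: bounding the Kronecker sum term-by-term is hopeless because each $\Wm_{\bullet i}\Wm_{\bullet i}^\top$ is rank one, so its minimum eigenvalue is zero. The fix is precisely the non-trivial Kronecker-sum inequality, which uses that $\sum_i \Wm_{\bullet i}\Wm_{\bullet i}^\top = \Wm\Wm^\top$ can be (and, for $m\ge k$ generically is) full rank even though no individual summand is. The remaining care is to route the activation slope $\alpha$ correctly through $\sigma_{\min}(\Xm\Lamb^j)$ and to read $\lambda_{\min}$ as the smallest \emph{nonzero} eigenvalue when the data Gram $\Xm^\top\Xm$ or $\Gam$ is rank-deficient; note also that for $\alpha=0$ (plain ReLU) the first denominator term vanishes and the bound degrades gracefully to relying on $\lambda_{\min}(\Gam)$, as expected.
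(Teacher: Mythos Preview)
Your proof is correct and follows essentially the same approach as the paper: split $\widehat{\Gm}_O=\Am+\Bm$ via Weyl's and dual Weyl's inequalities, handle $\Bm=\Gam\otimes\Im_k$ directly, and for $\Am$ invoke the Kronecker-sum inequality $\lambda_{\min}\big(\sum_i \Am_i\otimes\Bm_i\big)\ge \min_j\lambda_{\min}(\Am_j)\cdot\lambda_{\min}\big(\sum_i\Bm_i\big)$ with $\Am_i=\Lamb^i\Xm^\top\Xm\Lamb^i$ and $\Bm_i=\Wm_{\bullet i}\Wm_{\bullet i}^\top$, together with $\alpha\Im_n\preceq\Lamb^i\preceq\Im_n$. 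Your upper bound on $\lambda_{\max}(\Am)$ via the PSD ordering $\Am\preceq\sigma_{\max}^2(\Xm)\,\Im_n\otimes\Wm\Wm^\top$ is a slight repackaging of the paper's ``analogous'' max-version of the same Kronecker-sum inequality, and your explicit remarks on the rank-one obstacle, the pseudo-condition-number reading of $\lambda_{\min}$, and the $\alpha=0$ degeneration are all on point.
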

The proof can be found in \cref{proof:BoundLeakyReluCondnumBounds}. We further empirically validate the tightness of the upper bounds on a subset of the MNIST dataset ($n=500$), where we chose $\alpha=0.01$, which is the default value in Pytorch~\citep{paszke2019pytorch}. As can be seen in \cref{fig:Leaky_ReLU_MNIST_whitened_comparison_width}, contrary to the linear setting the condition number increases with width. The upper bound also becomes tighter and more predictive with increasing width. \looseness=-1

\textbf{Comparison to linear network.}
By running the same setup for varying values of $\alpha \in [0,1]$ we can interpolate between ReLU ($\alpha=0$) and linear activation ($\alpha=1$) to see how the introduction of the non-linearity affects the conditioning. In \cref{fig:LeakyReLU_comparison_alpha}, which had to be deferred to the appendix due to space constraints, we can see how reducing the value of $\alpha$ seems to consistently improve the conditioning of the GN matrix for the one-hidden layer case.

\newpage

\section{Conditioning under batch normalization}

\begin{wrapfigure}{tr}{0.45\textwidth} 
    \centering \vspace{-3mm}
    \begin{minipage}[c]{0.45\textwidth}
     \includegraphics[width=\textwidth]{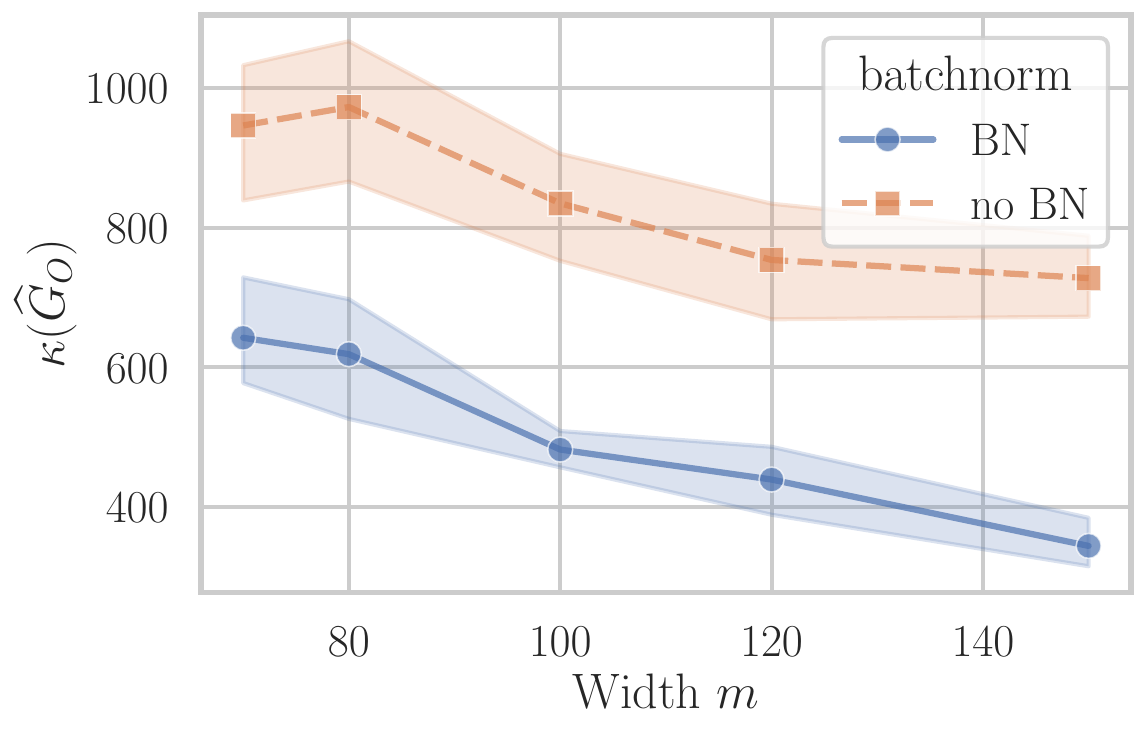}
    \end{minipage}
    \caption{Comparison of conditioning of the GN matrix for a one-hidden layer linear network with and without Batch normalization on downsampled Cifar-10 data ($d=64, n=1000$) at initialization over 5 runs.} \label{fig:BN_comparison_grayscale_cifar10}
\end{wrapfigure}

Based on the observation that conditioning of the input data affects the conditioning of the GN spectrum, we also tested whether Batch normalization (BN)~\citep{ioffe2015batch}, which is a very commonly used normalization scheme, has a similar effect on the condition number. For this, the condition number of the GN matrix was calculated for a one-hidden layer linear network with and without a Batch normalization layer. The experiment was run on a downsampled and subsampled version of Cifar-10, which was converted to grayscale with $d=64$ and $n=1000$. The data was not whitened to see the effect of BN more strongly. As can be seen in \cref{fig:BN_comparison_grayscale_cifar10}, we indeed observe a clear improvement of the condition number when Batch normalization layers are added. Also, the trend of improved conditioning with increasing width remains after adding BN.\looseness=-1

\section{Discussion and conclusion}

\textbf{Summary.}
In this work, we derived new analytical bounds for the condition number of neural networks, showcasing the following key findings: \textbf{a)} The conditioning of the input data has a nearly proportional impact on the conditioning of the GN spectra, underscoring the significance of data normalization. Empirical evidence further demonstrates that Batch Normalization similarly enhances the condition number. \textbf{b)} For linear networks, we showed that the condition number grows quadratically with depth for fixed hidden width. Also, widening hidden layers improves conditioning, and scaling the hidden width proportionally with depth can compensate for the growth. \textbf{c)} We showed how adding residual connections improves the condition number, which also explains how they enable the training of very deep networks. \textbf{d)} Preliminary experiments suggest that the ReLU activation seems to improve the conditioning compared to linear networks in the one-hidden layer case.

\textbf{Interesting use cases of our results.} Through our analysis, we highlighted that the condition number as a tool from classical optimization is also an attractive option to better understand challenges in neural network training with gradient-based methods. Especially, knowing how different architectural choices will affect the optimization landscape provides a more principled way to design the network architecture for practitioners, for instance how to scale the width in relation to the depth of the network. The paper also gives a justification of why pruned networks are more difficult to train as they have worse conditioning. Although this is not our focus, it is possible that our analysis could inspire better techniques for pruning neural networks.

\textbf{Limitations and future work.} We made a first step toward understanding the impact of different architectural choices on the conditioning of the optimization landscape. However, there are still many design choices, that have not been covered yet, such as an extension to non-linear networks for arbitrary layers, other architectures, such as transformers, and analytic bounds for normalization schemes, such as batch or layer normalization, which we will leave to future work. 
Another limitation of our current work is that the derived upper bounds are agnostic to the training dynamics. Therefore, they cannot distinguish the difference between random initializations and solutions of deep learning models after convergence. Incorporating training dynamics into the upper bounds would allow us to characterize solutions found by different architectures, which  is left for future work.  
Another future direction is to extend the analysis to the Generalized Gauss-Newton matrix, which is particularly relevant for training with cross-entropy loss.

\section*{Acknowledgements}
Aurelien Lucchi acknowledges the financial support of the Swiss National Foundation, SNF grant No 223031. 
Sidak Pal Singh would like to acknowledge the financial support of Max Planck ETH Center for Learning Systems.

\clearpage

\bibliography{references}
\clearpage

\newpage
\appendix
\onecolumn
\section{Standard pertubation bounds for matrix eigenvalues}\label{appendix:standard_pertub_bounds_eigval}

\subsection{Weyl's inequality and dual Weyl's inequality}
\label{appendix:sub_sec:Weyl_inequality}

In this section, we first briefly review the Weyl's inequality and its dual form. We then demonstrate how these inequalities can be used to bound the extreme eigenvalues and thus the condition number of the sum of two Hermitian matrices $\Am, \Bm$ by the extreme eigenvalues of $\Am$ and $\Bm$.

\begin{lemma}\cite{weyl1912asymptotische}
    \label{lemma:Weyl_ineq}
    Let $\Am, \Bm \in \R^{n \times n}$ be two Hermitian matrices, and let $\lambda_1 \geq \hdots \geq \lambda_n$ denote the ordered eigenvalues, then the Weyl's inequality and dual Weyl's inequality state that
    \begin{align}
        \label{eq:bounding_lam_max}
        &\lambda_{i+j-1}(\Am + \Bm) \leq \lambda_i(\Am) + \lambda_j(\Bm), \; i,j \geq 1, i+j-1 \leq n \tag{Weyl} \\
        \label{eq:bounding_lam_min}
        &\lambda_{i+j-n}(\Am + \Bm) \geq \lambda_i(\Am) + \lambda_j(\Bm), \; i,j \geq 1, i+j-n \leq n. \tag{Dual Weyl}
    \end{align}
\end{lemma}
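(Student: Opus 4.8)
The plan is to prove both inequalities from the Courant--Fischer variational (min--max) characterisation of eigenvalues of a Hermitian matrix, combined with an elementary dimension count, and then to obtain the dual form from the primal one by a single sign flip. Since $\Am,\Bm\in\R^{n\times n}$ are Hermitian (hence symmetric), I will work with the real Rayleigh quotient $x^\top \Mm x / x^\top x$ throughout.

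First I would recall the min--max theorem in the form adapted to the decreasing ordering $\lambda_1\geq\cdots\geq\lambda_n$ used in the statement: for any Hermitian $\Mm$ and any $1\leq k\leq n$,
\[
\lambda_k(\Mm)=\min_{\substack{V\subseteq\R^n\\ \dim V=n-k+1}}\;\max_{\substack{x\in V\\ x\neq 0}}\frac{x^\top \Mm x}{x^\top x}.
\]
This is the only nontrivial external ingredient, and I would take it as standard. For Weyl's inequality, fix $i,j$ with $i+j-1\leq n$, and let $\{u_1,\dots,u_n\}$ and $\{v_1,\dots,v_n\}$ be orthonormal eigenbases of $\Am$ and $\Bm$ ordered by decreasing eigenvalue. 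Define $U:=\mathrm{span}(u_i,\dots,u_n)$ and $W:=\mathrm{span}(v_j,\dots,v_n)$, of dimensions $n-i+1$ and $n-j+1$. On $U$ the Rayleigh quotient of $\Am$ is at most $\lambda_i(\Am)$, and on $W$ that of $\Bm$ is at most $\lambda_j(\Bm)$, both by expanding $x$ in the respective eigenbasis. By the subspace dimension formula, $\dim(U\cap W)\geq (n-i+1)+(n-j+1)-n=n-(i+j-1)+1$, so $U\cap W$ contains a subspace $V$ of dimension exactly $n-(i+j-1)+1$. For every nonzero $x\in V$ we have $x^\top(\Am+\Bm)x=x^\top\Am x+x^\top\Bm x\leq(\lambda_i(\Am)+\lambda_j(\Bm))\,x^\top x$. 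Feeding this particular $V$ into the min--max formula for $\lambda_{i+j-1}(\Am+\Bm)$, whose minimising subspaces have precisely this dimension, bounds the minimum above by $\lambda_i(\Am)+\lambda_j(\Bm)$, which is exactly Weyl's inequality.

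Finally I would deduce the dual inequality from the primal one by applying it to $-\Am$ and $-\Bm$ and using $\lambda_k(-\Mm)=-\lambda_{n-k+1}(\Mm)$. Writing the primal with indices $i',j'$ (satisfying $i'+j'-1\leq n$) gives $\lambda_{i'+j'-1}(-\Am-\Bm)\leq\lambda_{i'}(-\Am)+\lambda_{j'}(-\Bm)$; rewriting each eigenvalue of a negated matrix and substituting $i'=n-i+1$, $j'=n-j+1$ simplifies the $\Am+\Bm$ index to $n-i'-j'+2=i+j-n$ and flips every inequality, reproducing $\lambda_{i+j-n}(\Am+\Bm)\geq\lambda_i(\Am)+\lambda_j(\Bm)$; the admissibility constraint $i'+j'-1\leq n$ translates to $i+j-n\geq 1$, exactly the regime in which the index is valid.

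I expect the main obstacle to be purely bookkeeping rather than analytic: getting the dimension count for $U\cap W$, the subspace dimension $n-k+1$ in the min--max formula, and the index substitutions in the dual's reindexing all consistent, since an off-by-one in any of these is the only way the argument can fail. The analytic content is carried entirely by the variational principle, so once the index arithmetic is fixed the conclusion follows immediately.
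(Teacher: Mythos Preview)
Your proof is correct and is the standard variational argument for Weyl's inequalities via the Courant--Fischer min--max theorem; the dimension count for $U\cap W$, the application of the min--max formula, and the reindexing for the dual form all check out. The paper itself does not supply a proof of this lemma---it is simply quoted as a classical result with a citation to \cite{weyl1912asymptotische}---so there is no in-paper argument to compare against, and your derivation is exactly the textbook one.
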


In the following, we derive an upper and lower bound of the condition number of a symmetric matrix $\Am$, for which we have
\begin{equation*}
    \kappa(\Am) = \frac{|\lambda_{\max}(\Am)|}{|\lambda_{\min}(\Am)|}.
\end{equation*}

Recall that given a Hermitian $n \times n$ matrix $\Am$, we can diagonalize it by the spectral theorem. We obtain the following sequence of real eigenvalues 
\begin{align*}
    \lambda_1(\Am) \geq \lambda_2(\Am) \geq \hdots \geq \lambda_n(\Am).
\end{align*}
We will make use of the Weyl's inequality and the dual Weyl's inequality to bound the eigenvalues of the sum of two Hermitian matrices $\Am + \Bm$.

From Lemma~\ref{lemma:Weyl_ineq}, we obtain the following inequalities:
From Weyl's inequality we have 
\begin{align*}
   & \lambda_1 (\Am + \Bm) \leq \lambda_1(\Am) + \lambda_1(\Bm) \\
   \nonumber
   & \lambda_n (\Am + \Bm) \leq \lambda_{n-j}(\Am) + \lambda_{1+j}(\Bm), \quad \text{for } 0 \leq j \leq n-1.
\end{align*}
From the dual Weyl's inequality we have
\begin{align*}
   & \lambda_1 (\Am + \Bm) \geq \lambda_{n-j}(\Am) + \lambda_{1+j}(\Bm), \quad \text{for } 0 \leq j \leq n-1 \\
   \nonumber
   & \lambda_n (\Am + \Bm) \geq \lambda_n(\Am) + \lambda_n(\Bm).
\end{align*}

Hence, we can find a lower and upper bound for the condition number of $\Am + \Bm$
\begin{align}
    \frac{|\max_{0 \leq j \leq n-1} (\lambda_{n-j}(\Am) + \lambda_{1+j}(\Bm))|}{|\min_{0 \leq j \leq n-1} (\lambda_{n-j}(\Am) + \lambda_{1+j}(\Bm))|} \leq \kappa(\Am + \Bm) \leq \frac{|\lambda_{\max}(\Am) + \lambda_{\max}(\Bm)|}{|\lambda_{\min}(\Am) + \lambda_{\min}(\Bm)|}.
\end{align}

\subsection{Further bounds for extreme eigenvalues for positive semidefinite matrices}

For square matrices $\Am \in \R^{p \times p}, \Bm \in \R^{q \times q}$, denote by $\lambda_1, \hdots, \lambda_p$ the eigenvalues of $\Am$ and by $\mu_1, \hdots, \mu_q$ the eigenvalues of $\Bm$ listed by multiplicity. Then the eigenspectrum of 
$\Am \otimes \Bm \in \R^{pq \times pq}$ consists of the eigenvalues
\begin{align*}
    \lambda_i \mu_j \quad i=1, \hdots, p,\, j=1,\hdots,q.
\end{align*}

Note that if $\Am,\Bm$ are additionally positive semidefinite (PSD), we have the following equality for the largest and smallest eigenvalues of $\Am \otimes \Bm$
\begin{align}
    \label{eq:kro_equality_extreme_eval}
    \lambda_{\max}(\Am \otimes \Bm) = \lambda_{\max}(\Am) \lambda_{\max}(\Bm) \\
    \lambda_{\min}(\Am \otimes \Bm) = \lambda_{\min}(\Am) \lambda_{\min}(\Bm).
\end{align}

Furthermore, it follows from the sub-multiplicativity of the matrix norm that for square, PSD matrices $\Am, \Bm$ of same dimensions it holds that 
 \begin{align*}
     \lambda_{\max}(\Am \Bm) \leq \lambda_{\max}(\Am)\cdot \lambda_{\max}(\Bm) \\
     \lambda_{\min}(\Am \Bm) \geq \lambda_{\min}(\Am)\cdot \lambda_{\min}(\Bm).
 \end{align*}

\section{Extension of analysis to convolutional layers}
\label{appendix:extension_conv_layers}

The analysis on fully connected layers can be extended to convolutional layers in a straight forward way, by using the fact that the convolution operation can be reformulated in the form of a matrix-vector product using Toeplitz matrices, as discussed for instance in~\citet{singh2023hessian}. This would mean that we can apply the same analysis as for fully connected networks. For completeness we will discuss how below how the convolution operation can be rewritten as a matrix-vector product. For further details, the readers is referred to~\citet{singh2023hessian}.

Consider a convolution operation $\Wm * \x$ of an input $\x \in \R^d$ with $m$ filters of size $k \leq d$, which are organized in the matrix $\Wm \in \R^{m \times k}$. For simplicity a stride of $1$ and zero padding is assumed. Let $\z_{j:j+k-1} \in R^k$ denote the vector formed by considering the indices $j$ to $j + k -1$ of the original vector $\z \in \R^d$.
In this case, the convolution operation can be written as 

\begin{equation*}
   \Wm * \x = 
   \begin{pmatrix}
       \langle \Wm_{1 \bullet}, \x_{1:k} \rangle & \hdots & \langle \Wm_{1 \bullet}, \x_{d-k+1:d} \rangle \\
       \vdots & & \vdots \\
       \langle \Wm_{m \bullet}, \x_{1:k} \rangle & \hdots & \langle \Wm_{m \bullet}, \x_{d-k+1:d} \rangle 
   \end{pmatrix} \in \R^{m \times (d-k+1)},
\end{equation*}
where $\Wm_{i \bullet}$ denotes the $i$-th row of $\Wm$.

Let us further introduce Toeplitz matrices, $\left\{ \Tm^{\Wm_{i \bullet}}\right\}_{i=1}^m$, for each filter with $\Tm^{\Wm_{i \bullet}} := \text{toep}(\Wm_{i \bullet}, d) \in \R^{(d-k+1) \times d}$ such that,
\begin{equation*}
   \Tm^{\Wm_{i \bullet}} = 
   \begin{pmatrix}
    \Wm{i1} & \hdots & \Wm_{ik} & 0 & \hdots & 0 \\ 
    0 & \Wm_{i1} & \hdots & \Wm_{ik} & 0 & \vdots \\ 
    \vdots & 0 & \ddots & \ddots & \ddots & 0 \\ 
    0 & \hdots & 0 &  \Wm_{i1} & \hdots & \Wm_{ik}\\ 
   \end{pmatrix}.
\end{equation*}

Finally, by stacking the Toeplitz matrices in a row-wise fashion, that is 
\begin{equation*}
    \Tm^{\Wm} := 
    \begin{pmatrix}
        \Tm^{\Wm_{1 \bullet}} \\
        \hdots \\
        \Tm^{\Wm_{m \bullet}}
    \end{pmatrix} \in \R^{m(d-k+1) \times d},
\end{equation*}
we see that the matrix multiplication of $\Tm^{\Wm}$ with an input $\x$ gives the same result as vectorizing the convolution operation row-wise, i.e. 
\begin{equation*}
    \text{vec}_r(\Wm * \x) = \Tm^{\Wm}\x.
\end{equation*}

For a linear network with $L$ hidden layers, each of which is a convolutional kernel, the network function can be formally represented as 
\begin{align}
    F_{\theta}(\x) = \mathcal{W}^{(L+1)} * \mathcal{W}^{(L)} * \hdots * \mathcal{W}^{(1)} * \x, 
\end{align}
where the parameters of each hidden layer $\ell$ are denoted by $\mathcal{W}^{(\ell)} \in \R^{m_{\ell} \times m_{\ell -1} \times k_l}$. $m_{\ell}$ is the number of output channels, $m_{\ell-1}$ the number of input channels, and $k_{\ell}$ the kernel size. As we assumed $\x$ to be one-dimensional, we have $m_0 = 1$. 

Similar to a single convolutional operation, the output of a single convolutional layer can be expressed via Toeplitz matrices. The Toeplitz matrix associated with the $\ell$-th convolutional layer $\mathcal{W}^{(\ell)}$ can be expressed by 

\begin{equation*}
    \Tm^{(\ell)} := 
    \begin{pmatrix}
        \Tm^{\mathcal{W}^{(\ell)}_{(1,1)\bullet}} & \hdots & \Tm^{\mathcal{W}^{(\ell)}_{(1,m_{\ell-1} )\bullet}} \\
        \vdots & & \dots \\
        \Tm^{\mathcal{W}^{(\ell)}_{(m_{\ell},1)\bullet}} & \hdots & \Tm^{\mathcal{W}^{(\ell)}_{(m_{\ell},m_{\ell-1})\bullet}}
    \end{pmatrix} \in \R^{m_{\ell}d_{\ell} \times m_{\ell - 1}d_{\ell -1}}
\end{equation*}

where $\mathcal{W}^{(\ell)}_{(i,j)\bullet} \in \R^{k_{\ell}}$ refers to the $(i,j)$-th fibre of $\mathcal{W}^{(\ell)}$ and $\Tm^{\mathcal{W}^{(\ell)}_{(i,j)\bullet}} := \text{toep}(\mathcal{W}^{(\ell)}_{(i,j)\bullet}, d_{\ell -1}) \in \R^{d_{\ell} \times d_{\ell -1}}$ to itsassociated Toeplitz matrix.
Now, the network function can be equivalently written as 
\begin{equation}
    F_{\theta}(\x) = \Tm^{(L+1)}\Tm^{(L)} \hdots \Tm^{(1)}\x.
\end{equation}
We further provide experiments on linear CNNs in \cref{fig:linear_CNN_condition_number2} at initialization showcasing that we can observe similar trends where the number of filters increases the condition number of the Gauss Newton matrix (in analogy to depth in MLPs) and the kernel size improves conditioning (in analogy to width in MLPs).

\begin{figure}
    \centering
    \includegraphics[width=1\linewidth]{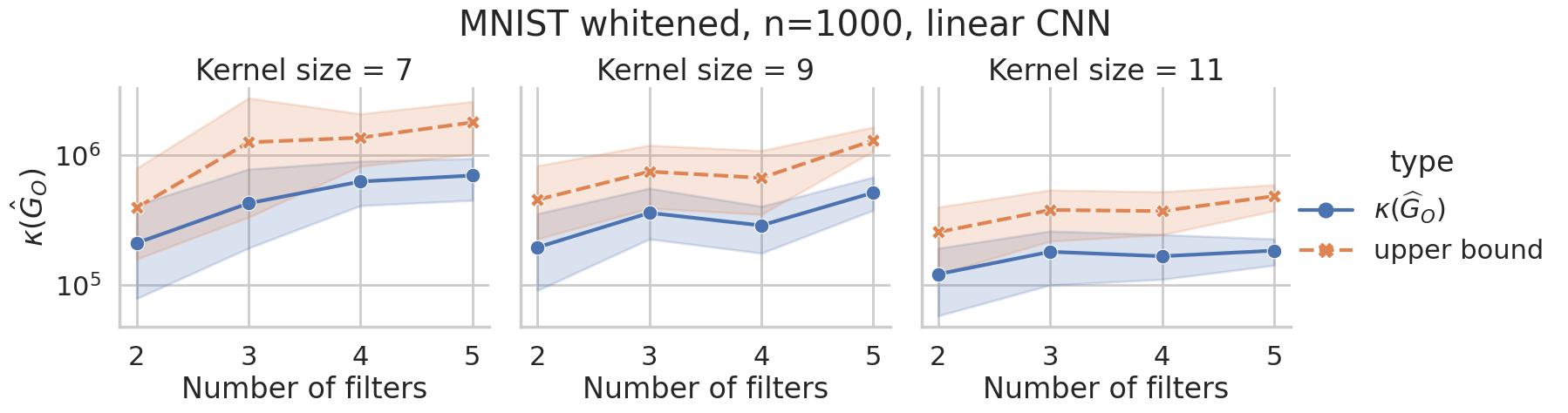}
    \caption{Experiments on the condition number of the Gauss Newton matrix at initialization for a linear two-layer CNN at initialization on a random subsample of n=1000 of MNIST, whitened, with varying kernel size and number of filters. We can see a trend where the number of filters increases the condition number (in analogy to depth in MLPs) and the kernel size improves conditioning (in analogy to width in MLPs).}
    \label{fig:linear_CNN_condition_number2}
\end{figure}

\section{Sensitivity of condition number on choice of smallest eigenvalue}

This work considers the pseudo condition number of some matrix $\Am \in \R^{n \times n}$, which is the ratio of the largest eigenvalue over the smallest non-zero eigenvalue $\frac{\lambda_{\max}(\Am)}{\lambda_{\text{nz},\min}(\Am)}$. If $\Am$ is of full rank, the pseudo condition number is equivalent to the condition number. 
However, if $\Am$ is rank deficient, it has to be typically estimated numerically if the rank is not known analytically. Based on the numerical estimation of the matrix rank, the smallest eigenvalue is chosen accordingly and the pseudo condition number is calculated. 

In this section we evaluate the sensitivity of choice of the smallest eigenvalue on the resulting pseudo condition number. This is illustrated in the example of a 1-hidden layer network with ReLU activation function and a hidden width of $m=20$ at initialization and after 100 epochs of training. The details of this experimental setup are further specified in~\cref{subsec:effect_width_on_cond_and_convergence}.\\
In~\cref{fig:sensitiviy_on_smallest_eigenvalue} we can see that depending on the eigenvalue distribution, the condition number can be quite robust (Epoch=0), but also very sensitive (Epoch=100) to the numerical matrix rank estimation.

\begin{figure}[b]
    \centering
        \includegraphics[width=\textwidth]{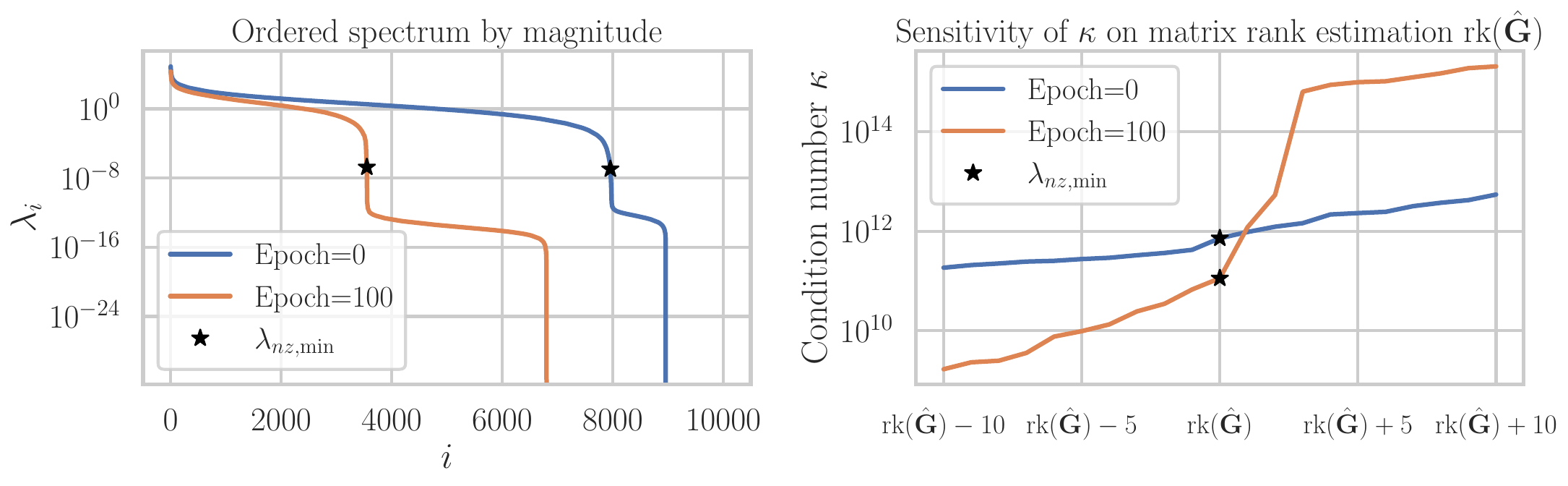}    
    \caption{The spectrum of the GN matrix ordered by magnitude at initialization and after 100 epochs. The star marks the smallest non-zero eigenvalue, which is determined by the computed matrix rank (left). Sensitivity of condition number as a function of the matrix rank (right).}
    \label{fig:sensitiviy_on_smallest_eigenvalue}
\end{figure}

\section{Evolution of the condition number during training}\label{appendix:evolution_condnum_training}

We conducted an experiment to track the condition number and evaluate how tight the upper bound is throughout training. For this we trained a 3-layer linear network with a hidden width of $m=500$ with three different seeds for 5000 epochs with SGD with a mini-batch size of 256 and a constant learning rate of 0.2 on a subset of Cifar-10 ($n=1000$) \citep{krizhevsky2009learning}, which has been downsampled to $3 \times 8 \times 8$ images and whitened. 
The network was trained on a single NVIDIA GeForce RTX 3090 GPU and took around 5 minutes per run. The condition number was computed via the explicit formula in~\cref{lemma:UpperBoundCondNumHO} on CPU and took around 4 minutes. 
The condition number and the upper bound are shown in~\cref{fig:Condition_number_throughout_training_cifar10} together with the training loss. We make the following two observations: 1. The condition number takes values in the range of 6 to 12 throughout training, which corresponds to a maximal change of around $50 \%$ of the condition number at initialization. This indicates that the condition number at initialization can be indicative of how the conditioning is along the optimization trajectory. 2. The upper bound remains tight throughout training, which highlights that it can provide reliable information about the condition number of the Gauss-Newton matrix throughout training. 

\begin{figure*}[h!]
    \centering
        \includegraphics[width=0.9\textwidth]{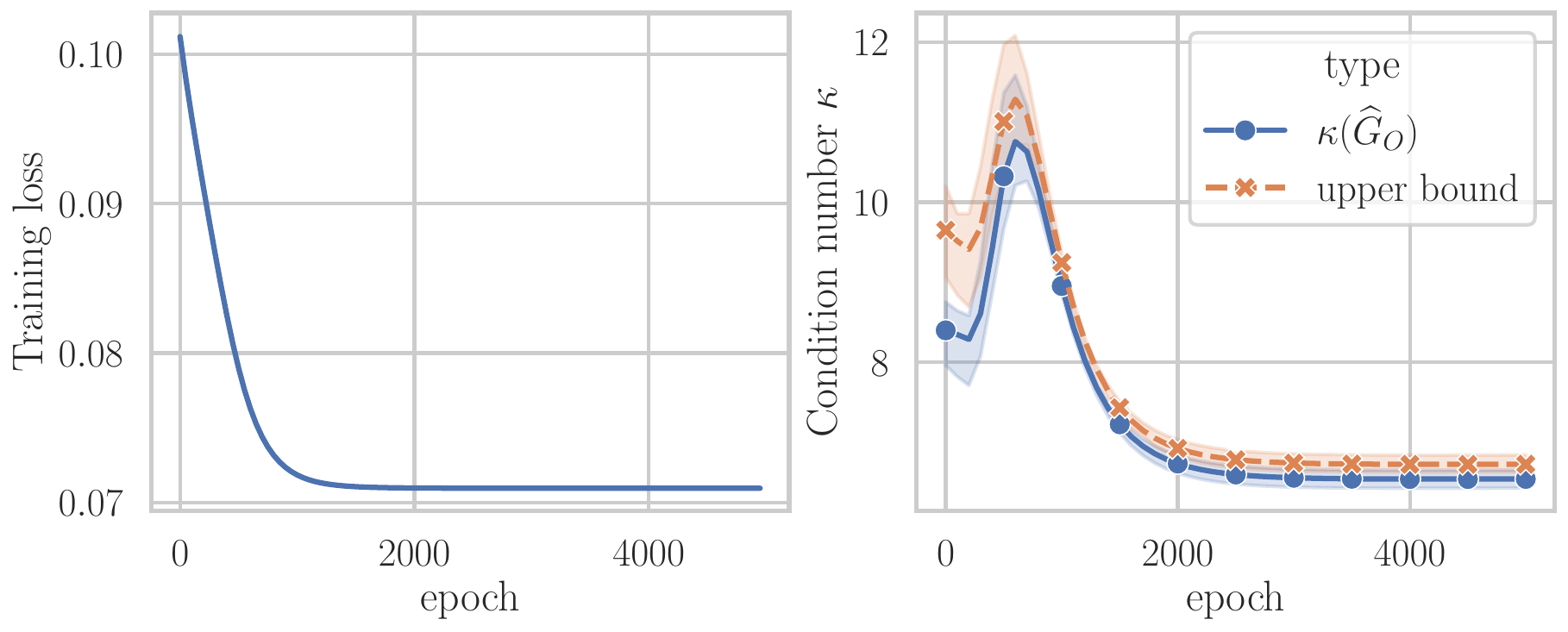}
    \caption{Training loss (left) and corresponding evolution of the condition number throughout training for three seeds. The shaded area in the figures corresponds to one standard deviation from the mean.}
\label{fig:Condition_number_throughout_training_cifar10}
\end{figure*}

\newpage

\section{Understanding the difference between the convex combination bound and the maximum Bound}\label{appendix:convex_comb_vs_max_bound}

In \cref{lemma:UpperBoundCondNumHO} we have seen that the upper bound of the condition number can be expressed in terms of a convex combination of the condition number of the weight matrices. 
In this section, we will elucidate the `self-balancing' behavior of the individual terms in~\cref{lemma:UpperBoundCondNumHO}.
For completeness, let us restate \cref{lemma:UpperBoundCondNumHO} here again:

\UpperBoundCondNumHO*

By looking at the individual values of $\kappa^2(\wM{L:\ell+1})$ and $\sigma^2_{\min}(\wM{L:\ell+1})$ and similarly $\kappa^2(\wM{1:\ell-1})$ and $\sigma^2_{\min}(\wM{1:\ell-1}) $ over $\ell=1,\hdots,L$ for networks of increasing depth $L$, we can see the `self-balancing' behavior in \cref{fig:kappa_sigma_convex_bound} for downsampled MNIST ($d=196$) over 10 runs and $m=300$ neurons per hidden layer. 
Note that this behavior is also consistent for different widths $m$, input dimension $d$, and output dimension $k$. By first looking at the first row, we observe that $\kappa^2(\wM{1:\ell-1})$ (top left) dominates the product of each term in~\cref{lemma:UpperBoundCondNumHO} and follows an exponential growth rule in $\ell$ (note the log scale of the $y$-axis). By looking at the second row, we see that at the same time, the smallest singular value of $\kappa^2(\wM{1:\ell-1})$ (bottom left) is decreasing exponentially, leading to the `self-balancing' behavior of the condition number. The same observation also holds for 
$\wM{L:\ell+1}$, but since the number of weight matrices decreases with $\ell$ this leads to a mirrored plot. 
\begin{figure*}[b!]
    \centering
    \begin{tabular}{cc}
        \includegraphics[width=0.38\textwidth]{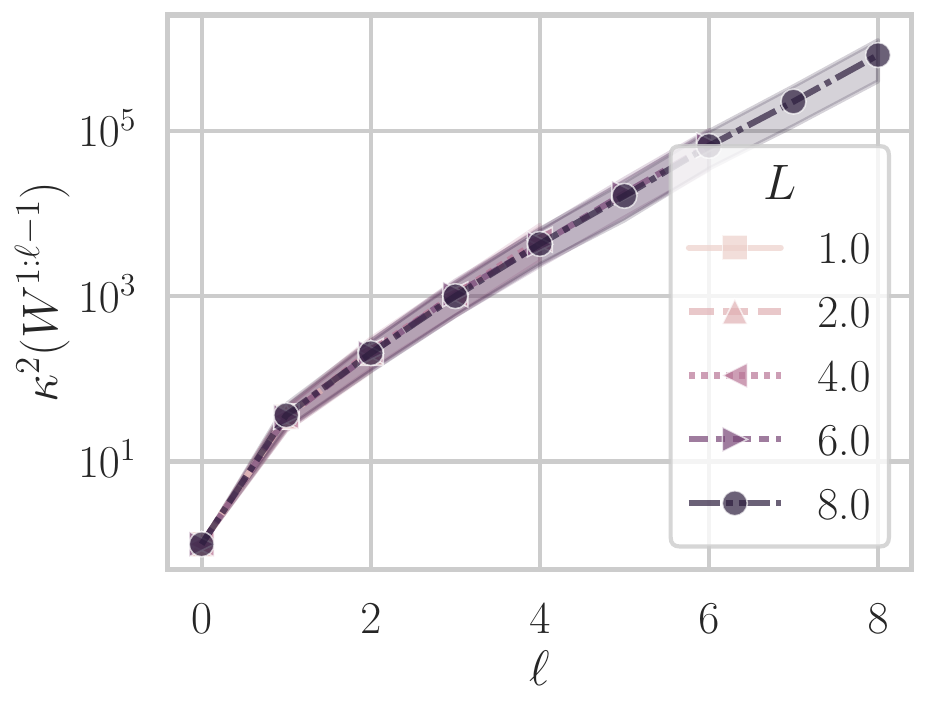}    
        \includegraphics[width=0.4\textwidth]{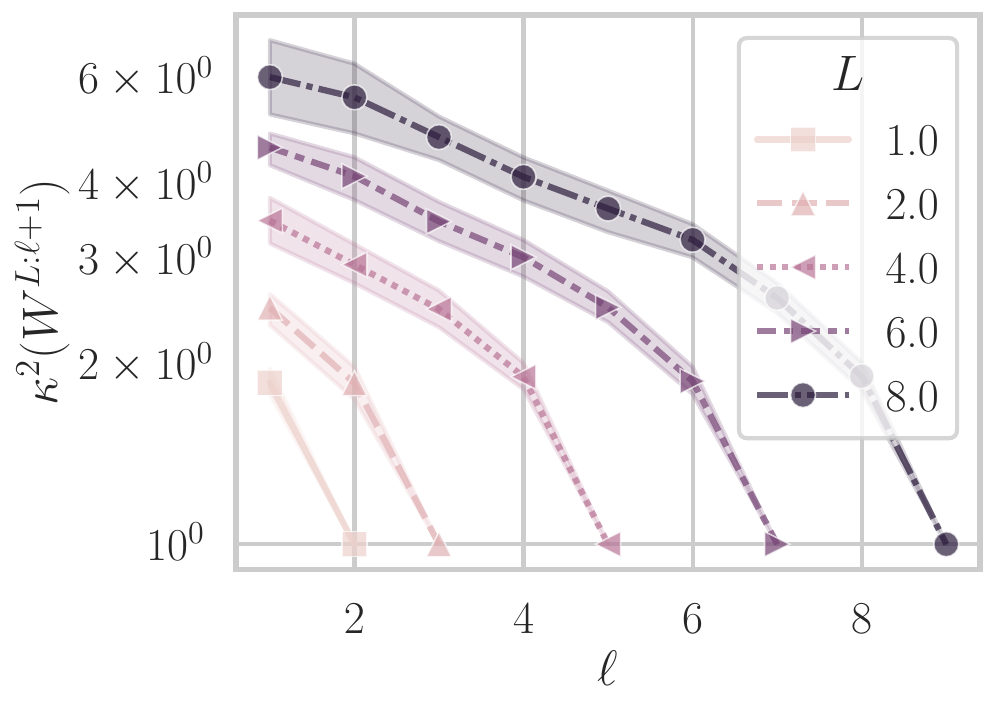}\\
        \includegraphics[width=0.38\textwidth]{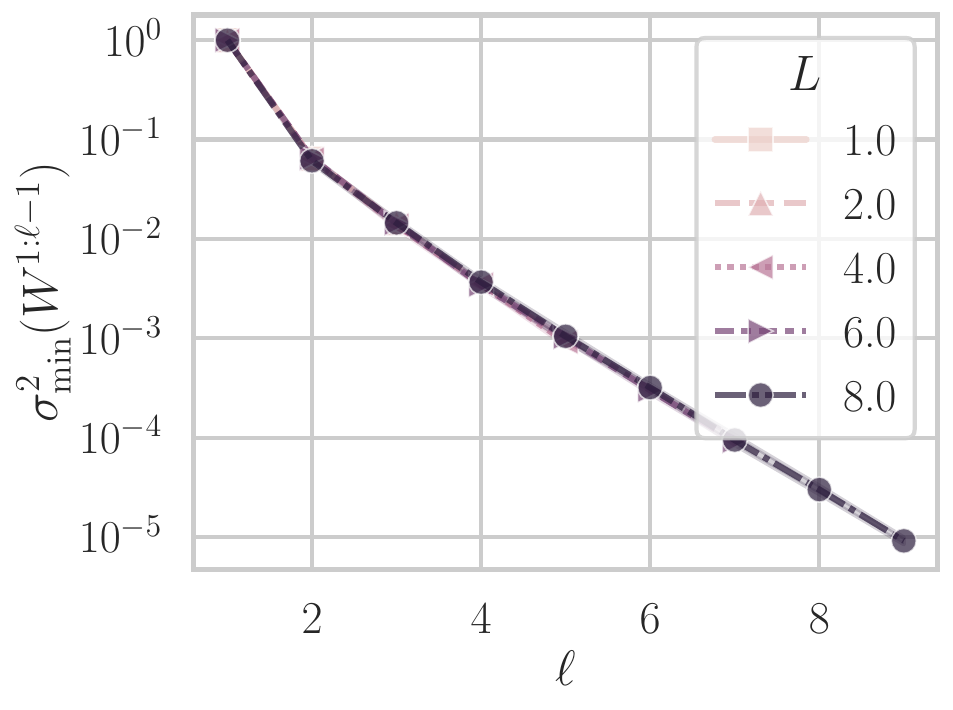}
        \includegraphics[width=0.4\textwidth]{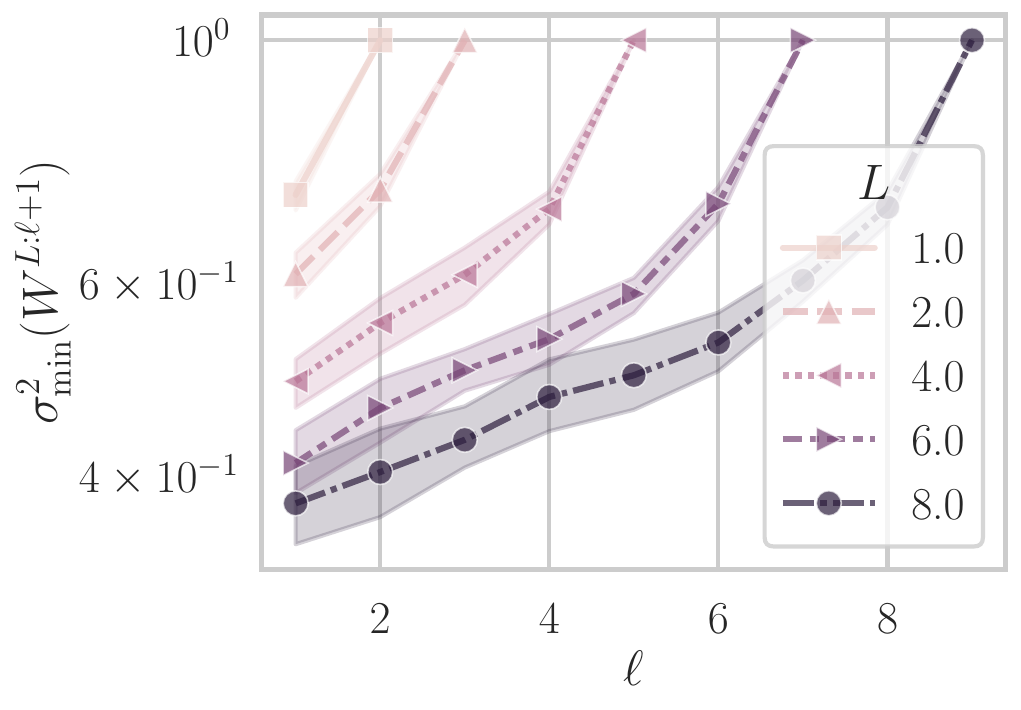}
    \end{tabular}
    
    \caption{Condition number and smallest eigenvalue of $\wM{1:\ell-1}$ (first column) and $\wM{L:\ell+1}$ (second column) for downsampled MNIST ($d=196$) for three seeds. Shaded area corresponds to one standard deviation. Note that the y-axis is log-scaled at the different limits for each subplot.}
    \label{fig:kappa_sigma_convex_bound}
\end{figure*}
The `self-balancing' behavior can be seen below in \cref{fig:reweighted_condnum-restated} for ease of reading, where the individual terms of~\cref{lemma:UpperBoundCondNumHO} are plotted with and without the weighting factor $\gamma_\ell$. 
\begin{figure*}[t!]
    \centering
    \begin{tabular}{cc}
        \includegraphics[width=0.4\textwidth]{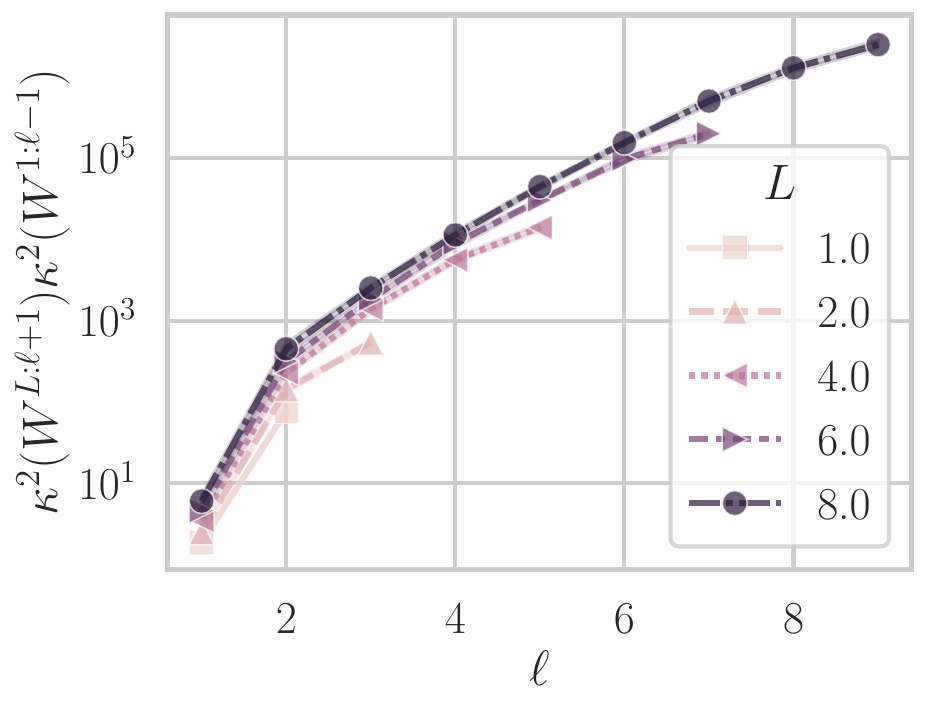} 
        \includegraphics[width=0.4 \textwidth]{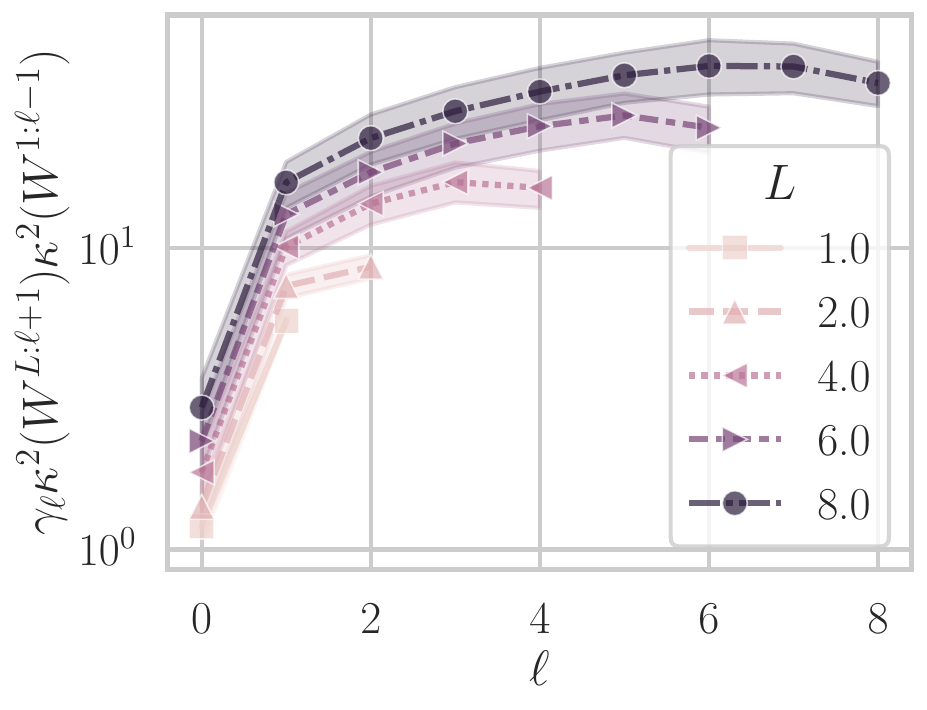}
    \end{tabular}
    \caption{Condition number of weight matrices with and without weighting factor $\gamma_{\ell}$ for downsampled MNIST ($d=196$) for three seeds, which highlights the `self-balancing' effect of the weight condition number. Note that the y-axis is log-scaled and both plots have different limits. Shaded area corresponds to one standard deviation.}
    \label{fig:reweighted_condnum-restated}
\end{figure*}

\newpage

\section{Conditioning of functional Hessian for one-hidden layer network}\label{appendix:functional_hessian}

Consider the case of a one-hidden layer linear network $\Fn(\x) = \Wm \Vm \x$, with $\x \in \R^d, \Wm \in \R^{k \times m}$, and $\Vm \in \R^{m \times d}$ and some corresponding label $\y \in \R^k$. Under the assumption of the MSE loss, the functional Hessian is then given as:

\begin{align}
    \HF = \begin{pmatrix}
	\bm{0}_{km \times km} & \bm{\Omega} \otimes \Im_{m } \\[3mm]
	\bm{\Omega}^{\top} \otimes \Im_{m } & \bm{0}_{dm \times dm},
\end{pmatrix}
\end{align}

where $\Om=\E[\delta_{\x,\y} \x^\top]\in \Reals{k \times d}$ is the residual-input covariance matrix, with $\delta_{\x,\y}:=\Fn(\x)-\y$ being the residual. Since we would like to estimate the spectrum of $\HF$, let us suppose that it has an eigenvector $\vb=\begin{pmatrix}
	\ab \kro \bb\\
	\cb \kro \db
\end{pmatrix}$.
Assuming that $\lambda$ is the corresponding eigenvalue, we have the following eigenproblem:

\[\HF \cdot \vb = \begin{pmatrix} \Om \, \cb \kro \db \\ \Om^\top \ab \kro \bb \end{pmatrix} = \lambda \, \begin{pmatrix}
	\ab \kro \bb\\
	\cb \kro \db
\end{pmatrix}\]

Let's start with the guess that $\bb =\db$, then we obtain the following equations: 

$$\lambda \, \ab = \Om \, \cb \quad \text{and}\quad \lambda \, \cb = \Om^\top \ab.$$

Solving this gives:
$$\Om^\top\Om \, \cb = \lambda^2 \, \cb  \quad \text{and}\quad \Om\Om^\top \ab = \lambda^2 \, \ab.$$

Hence, $\ab$ and $\cb$ are simply the eigenvectors of $\Om\Om^\top$ and $\Om^\top\Om$ respectively. Both these matrices have the same non-zero eigenvalues, and the resulting eigenvalues for the functional Hessian are nothing but the square root of corresponding eigenvalues of the matrix $\Om^\top\Om$ --- with both positive and negative signs. About $\bb=\db$: We still have a degree of freedom left in choosing them. Any $m$-orthogonal vectors with unit norm would satisfy the above equation and, for simplicity, we can just pick the canonical basis vectors in $\Reals{m}$.

\paragraph{Teacher-Student setting}
Let us now further assume a Teacher-student setting, i.e. $\y = \Zm \x$ for some $\Zm \in \R^{k \times d}$. In this case note that $\Om = \E[(\Wm\Vm\x-\y)\x^\top] = (\Wm \Vm - \Zm)\E[\x\x^\top] = (\Wm \Vm - \Zm) \Sig$. \\
Now recall the definition of the condition number $\kappa(\Hm_F) = \| \Hm_F \| \cdot \| \Hm_F^{-1} \|$ and using the observation from above we have that 
\begin{align*}
	\| \Hm_F \| = \sigma_{\max} (\Hm_F) &= \sigma_{\max} ( (\Wm\Vm - \Zm) \Sig) \\ 
    &\leq \sigma_{\max} (\Wm\Vm - \Zm) \cdot \lambda_{\max} (\Sig)\\
	\| \Hm_F^{-1} \| = \sigma_{\min} (\Hm_F) &= \sigma_{\min} ( (\Wm\Vm - \Zm) \Sig) \\
                    &\geq \sigma_{\min} (\Wm\Vm - \Zm) \cdot \lambda_{\min} (\Sig).
\end{align*}

Therefore we have 
\begin{equation}
	\kappa(\Hm_F) \leq \kappa(\Wm \Vm - \Zm) \cdot \kappa(\Sig).
\end{equation}

Similar to bounding the condition number of the GN matrix, we can also clearly see a dependency on the conditioning of the input covariance. 

\section{Concrete bounds at initialization for one-hidden layer linear network}

\subsection{Non-asymptotic bound}

Based on the upper bound in Eq. \eqref{eq:1_hiddenlayer_linear_upperB} derived for the one-hidden layer linear network, we can look at how this bound behaves concretely at initialization.
If we assume the entries of $\Wm$ and $\Vm$ to be i.i.d. Gaussian entries, then $\Wm \Wm^\top$ and $\Vm^\top\Vm$ will be Wishart matrices, for which asymptotically the distribution of the smallest and largest eigenvalues are known. 
Also there are known non-asymptotic bounds to bound the maximum and minimum singular value of a Gaussian matrix:

\begin{lemma}[Corollary 5.35 in \cite{vershynin2010introduction}]
    \label{lemma:corollary5.35_bound_gaussian_rand_matrix}
	Let $\Am$ be an $N \times n$ matrix whose entries are independent standard normal random variables $\mathcal{N}(0,1)$. Then for every $t \geq 0$, with probability at least $1 - 2\exp(-\frac{t^2}{2})$ one has 
	\begin{align}
		\sqrt{N} - \sqrt{n} - t \leq \sigma_{\min}(\Am) \leq \sigma_{\max}(\Am) \leq \sqrt{N} + \sqrt{n} + t.
	\end{align}
\end{lemma}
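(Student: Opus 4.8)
The statement is the classical Davidson--Szarek / Gordon estimate, and since it is used here only as an off-the-shelf tool, the honest option is simply to invoke \citet[Corollary~5.35]{vershynin2010introduction}. If one wanted to include the argument, the plan would be to split the bound into a concentration step and an expectation step.

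For the concentration step, I would view $\Am$ as the standard Gaussian vector $\vect(\Am)\in\R^{Nn}$ under the Euclidean metric, which coincides with the Frobenius metric on matrices, and observe that both $\Am\mapsto\sigma_{\max}(\Am)=\|\Am\|_{\mathrm{op}}$ and $\Am\mapsto\sigma_{\min}(\Am)$ are $1$-Lipschitz: the former because $\|\Am\|_{\mathrm{op}}=\sup_{\mathbf{u}\in S^{n-1},\,\mathbf{v}\in S^{N-1}}\langle\Am,\mathbf{v}\mathbf{u}^\top\rangle_F$ is a supremum of linear functionals of unit Frobenius norm, and the latter because Weyl's perturbation inequality for singular values gives $|\sigma_{\min}(\Am)-\sigma_{\min}(\Bm)|\le\|\Am-\Bm\|_{\mathrm{op}}\le\|\Am-\Bm\|_F$. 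The Gaussian concentration (isoperimetric) inequality then yields $\P(|f(\vect\Am)-\E f(\vect\Am)|\ge t)\le 2\exp(-t^2/2)$ for every $1$-Lipschitz $f$, which, applied to $f=\sigma_{\max}$ and $f=\sigma_{\min}$, accounts for the probability $1-2\exp(-t^2/2)$ and the two additive slacks $\pm t$ in the statement.

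For the expectation step (assuming without loss of generality $N\ge n$), I would show $\E\,\sigma_{\max}(\Am)\le\sqrt N+\sqrt n$ and $\E\,\sigma_{\min}(\Am)\ge\sqrt N-\sqrt n$ by writing $\sigma_{\max}(\Am)=\max_{\mathbf{u}}\max_{\mathbf{v}}\mathbf{v}^\top\Am\mathbf{u}$ and $\sigma_{\min}(\Am)=\min_{\mathbf{u}}\max_{\mathbf{v}}\mathbf{v}^\top\Am\mathbf{u}$ over the unit spheres, introducing the Gaussian process $X_{\mathbf{u},\mathbf{v}}=\mathbf{v}^\top\Am\mathbf{u}$, and comparing it via Gordon's Gaussian min--max inequality (a Slepian-type comparison) with the auxiliary process $Y_{\mathbf{u},\mathbf{v}}=\langle\mathbf{g},\mathbf{u}\rangle+\langle\mathbf{h},\mathbf{v}\rangle$, $\mathbf{g}\sim\mathcal N(0,\Im_n)$ and $\mathbf{h}\sim\mathcal N(0,\Im_N)$ independent; a short covariance computation checks the hypotheses of Gordon's inequality, and evaluating the comparison process reduces everything to $\E\|\mathbf{g}\|\le\sqrt n$ together with the near-optimal two-sided control of $\E\|\mathbf{h}\|$ around $\sqrt N$. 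Combining the two steps on the respective high-probability events gives $\sigma_{\max}(\Am)\le\sqrt N+\sqrt n+t$ and $\sigma_{\min}(\Am)\ge\sqrt N-\sqrt n-t$. The main obstacle is exactly this expectation step: orienting the comparison process so that Gordon's covariance conditions hold in the required directions, and squeezing out the clean constants $\sqrt N\pm\sqrt n$ (the control of $\E\|\mathbf{h}\|$ being the fussy part). The Lipschitz/concentration portion is routine, so in a paper of this scope I would just cite \citet{vershynin2010introduction}.
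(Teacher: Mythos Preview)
Your proposal is correct and matches the paper's treatment: the paper does not prove this lemma at all but simply quotes it as Corollary~5.35 from \citet{vershynin2010introduction}, exactly as you recommend. The optional sketch you give (Gaussian concentration for the $1$-Lipschitz maps $\sigma_{\max},\sigma_{\min}$ plus Gordon's comparison inequality for the expectation bounds) is the standard argument behind that corollary and is accurate, but it goes beyond what the paper includes.
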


 Using the Lemma above and a union bound, we can also derive a non-asymptotic bound for the condition number of the GN matrix under the assumption that we have a wide layer $m > \max (d,k)$:

\begin{lemma}
	\label{lemma:non_asymptotic_bound_rect_gaussian_rand_matrix}
	Assume that $m \geq \max(d,k)$ and that the entries of $\Vm \in \R^{m \times d}, \Wm \in \R^{k \times m}$ are initialized Gaussian i.i.d. with $\Vm_{ij} \sim \mathcal{N}(0, \sigma_v^2)$ and $\Wm_{ij} \sim \mathcal{N}(0, \sigma_w^2)$, then with probability at least $1 - 8 \exp(-\frac{t^2}{2})$ the condition number of the GN matrix $\kappa(\widehat{\Gm}_O)$ is upper bounded by
    \begin{align}
        \nonumber
    	&\kappa(\widehat{\Gm}_O) = \frac{\lambda_{\max}(\widehat{\Gm}_O)}{\lambda_{\min}(\widehat{\Gm}_O)} \leq \kappa(\Sig)\frac{( \sigma_w^2(\sqrt{m} + \sqrt{k} + t)^2 + \sigma_v^2(\sqrt{m} + \sqrt{d} + t)^2) }{(\sigma_w^2(\sqrt{m} - \sqrt{k} - t)^2 + \sigma_v^2(\sqrt{m} - \sqrt{d} - t)^2) }.
    \end{align}
\end{lemma}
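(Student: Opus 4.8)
The plan is to combine the deterministic bound of \cref{lemma:1_hiddenlayer_linear} with the non-asymptotic singular-value estimate of \cref{lemma:corollary5.35_bound_gaussian_rand_matrix}. Since \cref{lemma:1_hiddenlayer_linear} already gives, for $m > \max\{d,k\}$,
\[
\kappa(\widehat{\Gm}_O) \leq \kappa(\Sig)\,\frac{\sigma^2_{\max}(\Wm) + \sigma^2_{\max}(\Vm)}{\sigma^2_{\min}(\Wm) + \sigma^2_{\min}(\Vm)}\,,
\]
it suffices to control the four extreme singular values $\sigma_{\max}(\Wm),\sigma_{\min}(\Wm),\sigma_{\max}(\Vm),\sigma_{\min}(\Vm)$ with high probability and substitute.

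Next I would factor out the variances, writing $\Wm = \sigma_w\Am$ and $\Vm = \sigma_v\Bm$ with $\Am\in\R^{k\times m}$, $\Bm\in\R^{m\times d}$ having i.i.d.\ standard normal entries; each singular value then scales by $\sigma_w$ resp.\ $\sigma_v$. Because $\sigma_{\min/\max}(\Am) = \sigma_{\min/\max}(\Am^\top)$ and $m \geq k$, applying \cref{lemma:corollary5.35_bound_gaussian_rand_matrix} to $\Am^\top$ (an $m\times k$ matrix, so $N=m$, $n=k$) yields, for every $t\geq 0$, with probability at least $1 - 2\exp(-t^2/2)$,
\[
\sqrt m - \sqrt k - t \;\leq\; \sigma_{\min}(\Am) \;\leq\; \sigma_{\max}(\Am) \;\leq\; \sqrt m + \sqrt k + t\,,
\]
and analogously for $\Bm$ with $k$ replaced by $d$. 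Regarding the upper and lower estimate for each of the two matrices as (at worst) two separate events, each of failure probability at most $2\exp(-t^2/2)$, a union bound over the four events gives that all four inequalities hold simultaneously with probability at least $1 - 8\exp(-t^2/2)$.

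On that event, I would bound the fraction in \cref{lemma:1_hiddenlayer_linear} by treating numerator and denominator separately, exactly as in the derivation of the earlier upper bounds: enlarge the numerator via $\sigma^2_{\max}(\Wm) = \sigma_w^2\sigma^2_{\max}(\Am) \leq \sigma_w^2(\sqrt m + \sqrt k + t)^2$ and $\sigma^2_{\max}(\Vm) \leq \sigma_v^2(\sqrt m + \sqrt d + t)^2$, and shrink the denominator via $\sigma^2_{\min}(\Wm) \geq \sigma_w^2(\sqrt m - \sqrt k - t)^2$ and $\sigma^2_{\min}(\Vm) \geq \sigma_v^2(\sqrt m - \sqrt d - t)^2$. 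Plugging into the bound from \cref{lemma:1_hiddenlayer_linear} produces exactly the claimed inequality. The only genuine subtlety is that squaring the $\sigma_{\min}$ bounds requires $\sqrt m - \sqrt k - t \geq 0$ and $\sqrt m - \sqrt d - t \geq 0$ (otherwise squaring reverses the inequality); this is precisely the regime $t < \sqrt m - \sqrt{\max\{d,k\}}$ in which the stated bound is non-vacuous, and I would make this restriction explicit. Apart from this, the argument is bookkeeping — in particular, being careful that the union bound over the four one-sided events yields the factor $8$ in the failure probability.
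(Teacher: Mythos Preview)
Your proposal is correct and follows essentially the same route as the paper's proof: apply \cref{lemma:corollary5.35_bound_gaussian_rand_matrix} (scaled by $\sigma_w,\sigma_v$) to obtain two-sided singular-value bounds for $\Wm$ and $\Vm$, square these to control $\lambda_{\min/\max}(\Wm\Wm^\top)$ and $\lambda_{\min/\max}(\Vm^\top\Vm)$, and combine via a union bound over four events to substitute into the deterministic inequality of \cref{lemma:1_hiddenlayer_linear}. You are in fact slightly more careful than the paper in flagging the non-negativity constraint $t < \sqrt{m} - \sqrt{\max\{d,k\}}$ needed for the squaring step, which the paper leaves implicit.
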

\begin{proof}
    \label{proof:NonAsymptBoundRectangGaussianRandMatrix}
    Assume that $\Vm \in \R^{m \times d}, \Wm \in \R^{k \times m}$ are initialized Gaussian i.i.d. with $\Vm_{ij} \sim \mathcal{N}(0, \sigma_v^2)$ and $\Wm_{ij} \sim \mathcal{N}(0, \sigma_w^2)$, then the bounds in Lemma \ref{lemma:corollary5.35_bound_gaussian_rand_matrix} are scaled with $\sigma_v$ and $\sigma_w$ respectively:
    \begin{align*}
    	\sigma_v(\sqrt{m} - \sqrt{d} - t) \leq \sigma_{\min}(\Vm) \leq \sigma_{\max}(\Vm) \leq \sigma_v(\sqrt{m} + \sqrt{d} + t) \\
    	\sigma_w(\sqrt{m} - \sqrt{k} - t) \leq \sigma_{\min}(\Wm) \leq \sigma_{\max}(\Wm) \leq \sigma_w(\sqrt{m} + \sqrt{k} + t).
    \end{align*}
    
    Further noting that
    \begin{align*}
    	\lambda_{\min}(\Vm^\top\Vm) &= \sigma_{\min}^2(\Vm), \quad
        \lambda_{\max}(\Vm^\top\Vm) = \sigma_{\max}^2(\Vm) \\
    	\lambda_{\min}(\Wm\Wm^\top) &= \sigma_{\min}^2(\Wm) \quad 
        \lambda_{\max}(\Wm\Wm^\top) = \sigma_{\max}^2(\Wm),
    \end{align*}
    
    we get the following non-asymptotic upper respectively lower bounds for the extreme eigenvalues with probability at least $1 - 2\exp(-\frac{t^2}{2})$ each:
    \begin{align*}
    	\lambda_{\min}(\Vm^\top\Vm) &\geq \sigma_v^2(\sqrt{m} - \sqrt{d} - t)^2, \quad 
         \lambda_{\max}(\Vm^\top\Vm) \leq \sigma_v^2(\sqrt{m} + \sqrt{d} + t)^2 \\
    	\lambda_{\min}(\Wm\Wm^\top) &\geq \sigma_w^2(\sqrt{m} - \sqrt{k} - t)^2, \quad
        \lambda_{\max}(\Wm\Wm^\top) \leq \sigma_w^2(\sqrt{m} + \sqrt{k} + t)^2
    \end{align*}
    
    Using a union bound, we get the upper bound for the condition number of the GN matrix.     
\end{proof}

Based on the bound derived above, we can understand how the condition number behaves under initialization schemes used in practice. 
Consider the setting of a wide hidden layer $m \gg \max\{d,k\}$, in which the input dimension is much larger than the output dimension, i.e. $d \gg k$. This is for instance the case for image classification, where the number of image pixels is typically much larger than the number of image classes.
A popular initialization scheme is the one proposed in \cite{he2015delving}, commonly known as the Kaiming/He initialization, which corresponds to $\sigma^2_w = \frac{1}{m}, \sigma^2_v = \frac{1}{d}$ in our case. Hence, under the assumption of $m \gg d$, we will have $\sigma^2_v \gg \sigma^2_w$ and therefore

\begin{align}
    \kappa(\widehat{\Gm}_O) &\leq \kappa(\Sig)\frac{( \sigma_w^2(\sqrt{m} + \sqrt{k} + t)^2 + \sigma_v^2(\sqrt{m} + \sqrt{d} + t)^2) }{(\sigma_w^2(\sqrt{m} - \sqrt{k} - t)^2 + \sigma_v^2(\sqrt{m} - \sqrt{d} - t)^2) } \\
                    &\approx \kappa(\Sig)\frac{(\sqrt{m} + \sqrt{d} + t)^2}{(\sqrt{m} - \sqrt{d} - t)^2 } \approx \kappa(\Sig).
\end{align}

\subsection{Asymptotic bound}
To complement the analysis, we can also derive 
an asymptotic bound for the condition number when $d,k,m$ go to infinity together.

For this, we will use a result from
\cite{silverstein1985smallest}, in which it is shown that the smallest eigenvalue of the matrix $\Mm_s = \frac{1}{s} \Um_s\Um_s^\top$, where $\Um_s \in \R^{n \times s}$ composed of i.i.d. $\mathcal{N}(0,1)$ random variables, converges to $(1 - \sqrt{y})^2$ as $s \to \infty$, while $\frac{n}{s} \to y \in (0,1)$ as $s \to \infty$. 
This corresponds to the case in which the hidden layer is larger than the input and output dimensions, that is $m \geq \max (d, k)$.
Similarly it can be shown that $\lambda_{\max} \to (1 + \sqrt{y})^2$ for all $y > 0$ for $s \to \infty$.

Using this result we can bound the extreme eigenvalues of $\Wm \Wm^\top$ and $\Vm^\top\Vm$ at initialization. Assuming that the entries of $\Wm$ and $\Vm$ are entry-wise independent normally distributed $\Wm_{ij} \sim \mathcal{N}(0,\sigma_w^2), \Vm_{ij} \sim \mathcal{N}(0,\sigma_v^2)$ (for instance $\sigma_w^2 = \frac{1}{m}, \sigma_v^2 = \frac{1}{d}$ for Kaiming-initialization) we get for $d,k,m \to \infty$:

\begin{align*}
	\lambda_{\min}(\Wm\Wm^\top) = \sigma_w^2 \cdot m \cdot \lambda_{\min}\left(\frac{1}{m}\Um_m\Um_m^\top\right) \overset{k,m \to \infty}{\to} \sigma_w^2 \cdot m \cdot \left(1 - \sqrt{\frac{k}{m}}\right)^2 \\ 
	\lambda_{\max}(\Wm\Wm^\top) = \sigma_w^2 \cdot m \cdot \lambda_{\min}\left(\frac{1}{m}\Um_m\Um_m^\top\right) \overset{k,m \to \infty}{\to} \sigma_w^2 \cdot m \cdot \left(1 + \sqrt{\frac{k}{m}}\right)^2 \\
	\lambda_{\min}(\Vm\Vm^\top) = \sigma_v^2 \cdot m \cdot \lambda_{\min}\left(\frac{1}{m}\Um_m\Um_m^\top\right) \overset{d,m \to \infty}{\to} \sigma_v^2 \cdot m \cdot \left(1 - \sqrt{\frac{d}{m}}\right)^2 \\
	\lambda_{\max}(\Vm\Vm^\top) = \sigma_v^2 \cdot m \cdot \lambda_{\max}\left(\frac{1}{m}\Um_m\Um_m^\top\right) \overset{d,m \to \infty}{\to} \sigma_v^2 \cdot m \cdot \left(1 + \sqrt{\frac{d}{m}}\right)^2.
\end{align*}

Thus, in the asymptotic case in which the hidden layer is wide $m \geq \max\{ d, k\}$, we can derive the following upper bound for the condition number:
\begin{align}
	\kappa(\widehat{\Gm}_O) \leq C \overset{d,k,m \to \infty}{\to} \kappa(\Sig) \cdot \frac{\sigma_w^2 \cdot \left(1 + \sqrt{\frac{k}{m}}\right)^2 + \sigma_v^2 \cdot \left(1 + \sqrt{\frac{d}{m}}\right)^2}{\sigma_w^2 \cdot \left(1 - \sqrt{\frac{k}{m}}\right)^2 + \sigma_v^2 \cdot \left(1 - \sqrt{\frac{d}{m}}\right)^2}
\end{align}

Again, if $m \gg \max\{d,k\}$ we also have in the asymptotic limit that $\kappa(\widehat{\Gm}_O) \approx \kappa(\Sig)$.

\section{Proofs}

\OneHiddenLayerLinear*
\begin{proof}

Note that we have by the Weyl's in dual Weyl's inequality that 
\begin{align*}
    \lambda_{\max}(\widehat{\Gm}_O) = \lambda_{\max} (\Wm\Wm^\top \otimes \Sig + \Im_k \otimes \Sig^{1/2} \Vm^\top\Vm\Sig^{1/2}) &\leq \lambda_{\max}(\Wm\Wm^\top \otimes \Sig) + \lambda_{\max}(\Im_k \otimes \Sig^{1/2} \Vm^\top\Vm\Sig^{1/2}) \\
    \lambda_{\min}(\widehat{\Gm}_O) = \lambda_{\min} (\Wm\Wm^\top \otimes \Sig + \Im_k \otimes \Sig^{1/2} \Vm^\top\Vm\Sig^{1/2}) &\geq \lambda_{\min}(\Wm\Wm^\top \otimes \Sig) + \lambda_{\min}(\Im_k \otimes \Sig^{1/2} \Vm^\top\Vm\Sig^{1/2}).
\end{align*}

Then note that $\Wm\Wm^\top \otimes \Sig$ and $\Im_k \otimes \Sig^{1/2} \Vm^\top\Vm\Sig^{1/2}$ are positive semidefinite and using the equality for the extreme eigenvalues of Kronecker products in Eq. \eqref{eq:kro_equality_extreme_eval}, we have the bound 

\begin{align*}
    \lambda_{\max}(\Wm\Wm^\top \otimes \Sig) + \lambda_{\max}(\Im_k \otimes \Sig^{1/2} \Vm^\top\Vm\Sig^{1/2}) = \lambda_{\max}(\Wm\Wm^\top) \lambda_{\max}(\Sig) + \lambda_{\max}(\Sig^{1/2} \Vm^\top\Vm\Sig^{1/2}) \\
    \lambda_{\min}(\Wm\Wm^\top \otimes \Sig) + \lambda_{\min}(\Im_k \otimes \Sig^{1/2} \Vm^\top\Vm\Sig^{1/2}) = \lambda_{\min}(\Wm\Wm^\top) \lambda_{\min}(\Sig) + \lambda_{\min}(\Sig^{1/2} \Vm^\top\Vm\Sig^{1/2}).
\end{align*}

Using the submultiplicativity of the matrix norm we get the upper bound
\begin{align*}
     \kappa(\widehat{\Gm}_O) \leq \kappa(\Sig) \cdot \frac{\lambda_{\max}(\Wm\Wm^\top) + \lambda_{\max}(\Vm^\top\Vm)}{\lambda_{\min}(\Wm\Wm^\top) + \lambda_{\min}(\Vm^\top\Vm)}.
\end{align*}

Finally, by noting that we have $\lambda_{i}(\Wm\Wm^\top) = \sigma^2_{i}(\Wm)$ and  $\lambda_{i}(\Vm^\top\Vm) = \sigma^2_{i}(\Vm)$ for $i \in \{\min, \max\}$ because we have $m > \max\{d, k\}$ 
yields the result. 
\end{proof}

\BoundLeakyReluCondnumBounds*

\begin{proof}
    \label{proof:BoundLeakyReluCondnumBounds}
    As before we will bound the condition number by separately bounding the extreme eigenvalues first.
    
    First, recall the expression of the GN matrix as 
    \begin{align*}
        \widehat{\Gm}_O = \sum\nolimits_{i=1}^m \Lamb^i \Xm^\top \Xm \Lamb^i \otimes \Wm_{\bullet i} \Wm_{\bullet i}^\top 
         + \sum\nolimits_{i=1}^m \Lamb^i \Xm^\top \Vm_{i \bullet}^\top \Vm_{i \bullet} \Xm \Lamb^i \otimes \Im_k
    \end{align*}
    
    and note that we can apply the Weyl's and dual Weyl's inequality because $(\Lamb^i \Xm^\top \Xm \Lamb^i) \otimes (\Wm_{\bullet,i} \Wm_{\bullet,i}^\top)$ and $(\Lamb^i \Xm^\top \Vm_{i,\bullet} \Vm_{i,\bullet}^\top \Xm \Lamb^i) \otimes \Im_k $ are symmetric, yielding the following bounds:
    \begin{align*}
    \lambda_{\min}(\widehat{\Gm}_O) \geq &\, \lambda_{\min}\left( \sum_{i=1}^m (\Lamb^i \Xm^\top \Xm \Lamb^i) \otimes (\Wm_{\bullet,i} \Wm_{\bullet,i}^\top)\right) 
    + \lambda_{\min}\left( \sum_{i=1}^m (\Lamb^i \Xm^\top \Vm_{i,\bullet}^\top \Vm_{i,\bullet} \Xm \Lamb^i) \otimes \Im_k \right) \\
    \lambda_{\max}(\widehat{\Gm}_O) \leq &\, \lambda_{\max}\left( \sum_{i=1}^m (\Lamb^i \Xm^\top \Xm \Lamb^i) \otimes (\Wm_{\bullet,i} \Wm_{\bullet,i}^\top)\right) 
    + \lambda_{\max}\left( \sum_{i=1}^m (\Lamb^i \Xm^\top \Vm_{i,\bullet}^\top \Vm_{i,\bullet}^\top \Xm \Lamb^i) \otimes \Im_k \right).
\end{align*}

    Then we can bound the first term by using the fact that for the sum of a Kronecker product of PSD matrices $\Am_i, \Bm_i$ for $i=1, \hdots,m$ we have
        $\lambda_{\min}\left( \sum_{i=1}^m (\Am_i \otimes \Bm_i) \right) \geq \min_{1\leq j \leq m} \lambda_{\min}(\Am_j) \cdot \,  \lambda_{\min}\left(\sum_{i=1}^m \Bm_i\right)$
    to get 
    \begin{align}
        \nonumber
        \lambda_{\min}\left( \sum_{i=1}^m (\Lamb^i \Xm^\top \Xm \Lamb^i) \otimes (\Wm_{\bullet,i} \Wm_{\bullet,i}^\top)\right)
        \nonumber
        &\geq \lambda_{\min}(\Lamb^i \Xm^\top\Xm \Lamb^i)\lambda_{\min}\left(\sum_{i=1}^m \Wm_{\bullet,i}\Wm_{\bullet,i}^\top \right) \\
        &\geq \alpha^2 \lambda_{\min}(\Xm^\top\Xm)\lambda_{\min}(\Wm\Wm^\top) \\
        &= \alpha^2 \sigma^2_{\min}(\Xm)\sigma^2_{\min}(\Wm)
    \end{align}
    by noting that $\Lamb^i$ only contains 1 or $-\alpha$ on its diagonals in case of Leaky-ReLU activation and $\sum_{i=1}^m \Wm_{\bullet,i}\Wm_{\bullet,i}^\top = \Wm\Wm^\top$. Analogously, we have
    \begin{align}
        \label{eq:lemma_leaky_relu_bound_1st_term_lam_max}
        \nonumber
        &\lambda_{\max}\left( \sum_{i=1}^m (\Lamb^i \Xm^\top \Xm \Lamb^i) \otimes (\Wm_{\bullet,i} \Wm_{\bullet,i}^\top)\right) 
        \leq \sigma^2_{\max}(\Xm)\sigma^2_{\max}(\Wm).
    \end{align}
This yields the first upper bound. \\
To get the second upper bound, we need to further bound the second term. For this, note that 
    \begin{align}
        \nonumber
        \lambda_{\min}\left( \sum_{i=1}^m (\Lamb^i \Xm^\top \Vm_{i,\bullet}^\top \Vm_{i,\bullet} \Xm \Lamb^i) \otimes \Im_k \right) 
        \nonumber
        &= \lambda_{\min}\left( \sum_{i=1}^m (\Lamb^i \Xm^\top \Vm_{i,\bullet}^\top \Vm_{i,\bullet} \Xm \Lamb^i) \right) \\
        \nonumber
        &\geq \alpha^2 \lambda_{\min}\left( \sum_{i=1}^m ( \Xm^\top \Vm_{i,\bullet}^\top \Vm_{i,\bullet} \Xm ) \right) \\
        \nonumber
        &= \alpha^2 \lambda_{\min}\left( \Xm^\top \sum_{i=1}^m (  \Vm_{i,\bullet}^\top \Vm_{i,\bullet}  )\Xm \right) \\
        &= \alpha^2 \lambda_{\min}\left( \Xm^\top  \Vm^\top\Vm \Xm \right).
    \end{align}
    Analogously we have 
    \begin{align}
        \lambda_{\max}\left( \sum_{i=1}^m (\Lamb^i \Xm^\top \Vm_{i,\bullet}^\top \Vm_{i,\bullet} \Xm \Lamb^i) \otimes \Im_k \right)
        \leq \lambda_{\max}\left( \Xm^\top  \Vm^\top\Vm \Xm \right),
    \end{align}
which gives the second upper bound.
\end{proof}

\subsection{Further details on the effect of condition number at initialization on convergence rate} \label{appendix:effect_cond_num_at_init_on_converg_rate}

We present here further details of the modified analysis of GD for strongly convex functions to study the effect of the condition number at initialization on the convergence rate, where we use local constants $\mu(k)$ and $L(k)$ instead of the global smoothness and Lipschitz constant, respectively. Let $L$ denote the Lipschitz constant and let the smoothness constant be denoted by $\mu$. Furthermore, let the step size be such that $\eta_k \leq \frac{1}{L}$. Then by the definition of gradient descent we get
\begin{align*} ||\boldsymbol{\theta}_{k+1} - \boldsymbol{\theta}^* ||^2 &= ||\boldsymbol{\theta}_k - \boldsymbol{\theta}^* - \eta_k \nabla f(\boldsymbol{\theta}_k) ||^2 \\ &= ||\boldsymbol{\theta}_k - \boldsymbol{\theta}^* ||^2 -2 \eta_k \nabla f \left( \boldsymbol{\theta}_k \right)^\top \left( \boldsymbol{\theta}_k - \boldsymbol{\theta}^* \right) + \eta_k^2 ||\nabla f \left( \boldsymbol{\theta}_k \right) ||^2 \\ & \stackrel{\text{Strong convexity}}{\leq} (1-\eta_k \mu) ||\boldsymbol{\theta}_k - \boldsymbol{\theta}^* ||^2 - 2\eta_k (f(\boldsymbol{\theta}_k) - f(\boldsymbol{\theta}^*)) + \eta_k^2 ||\nabla f \left( \boldsymbol{\theta}_k \right) ||^2 \\ & \stackrel{\text{Smoothness}}{\leq} (1-\eta_k \mu) ||\boldsymbol{\theta}_k - \boldsymbol{\theta}^* ||^2 - 2\eta_k (f(\boldsymbol{\theta}_k) - f(\boldsymbol{\theta}^*)) + 2 \eta_k^2 L (f(\boldsymbol{\theta}_k) - f(\boldsymbol{\theta}^*)) \\ &= (1-\eta_k \mu) ||\boldsymbol{\theta}_k - \boldsymbol{\theta}^* ||^2 - 2 \eta_k (1 - \eta_k L) (f(\boldsymbol{\theta}_k) - f(\boldsymbol{\theta}^*)) \end{align*}

Since we assumed that $\eta_k \leq \frac{1}{L}$, the last term is negative. Therefore:

\begin{equation} \label{eq:single_iteration_improvement2}
||\boldsymbol{\theta}_{k+1} - \boldsymbol{\theta}^* ||^2 \leq (1-\eta_k \mu) ||\boldsymbol{\theta}_k - \boldsymbol{\theta}^* ||^2 \end{equation}

So by recursively applying~\eqref{eq:single_iteration_improvement2} and replacing $\mu$ by the local smoothness constants $\mu(k)$: \begin{equation} ||\boldsymbol{\theta}_k - \boldsymbol{\theta}^* ||^2 \leq \prod_{i=0}^{k-1} (1-\eta_i \mu(i)) ||\boldsymbol{\theta}_0 - \boldsymbol{\theta}^* ||^2.
\end{equation}

One can see the effect of $\mu(0)$ in the bound, which is even more dominant when $\mu(k)$ changes slowly. Of course, the effect of $\mu(0)$ attenuates over time, and that's why we are talking about a local effect.

\clearpage

 \section{Further experimental results}\label{appendix:further_experiments}

 \subsection{Further experiments on pruning networks at initialization}
 \label{appendix:pruning_experiments}

We repeated the experiments of pruning the weights at initialization for different architectures, including a small Vision Transformer (ViT)\footnote{The implementation of the ViT was based on Code taken from \url{https://github.com/tintn/vision-transformer-from-scratch/tree/main}, which was published under the MIT license.}(\cref{fig:pruning_ViT_cond_HO_2}), a ResNet20 (\cref{fig:pruning_ResNet20_cond_HO_2}), a ResNet32 (\cref{fig:pruning_ResNet32_cond_HO_2}), a VGG5Net\footnote{The implementation of the ResNet20, ResNet32 and VGG5 was based on Code taken from \url{https://github.com/jerett/PyTorch-CIFAR10}. The author granted explicit permission to use the code.} (\cref{fig:pruning_VGG5_cond_HO_2}) and a Feed-forward network (\cref{fig:pruning_FcNN_cond_HO_2}). The default initialization from PyTorch~\citep{paszke2019pytorch} was used. In all setups, the weights were pruned layer-wise by magnitude and trained on a subset of Cifar-10 of $n=1000$ images. The ViT was trained with AdamW with a learning rate of $1e^{-2}$ and weight decay $1e^{-2}$. The ResNet20, ResNet32 and VGGnet were trained with SGD with momentum = 0.9 and weight decay of $10^{-4}$ and a learning rate of $0.1$ with a step decay to $0.01$ after 91 epochs for ResNet20 and ResNet32 and after 100 epochs for the VGGnet. The Feed-forward network was trained with SGD with a constant learning rate of $0.01$. All networks were trained with a mini-batch size of $64$, the ResNet20, ResNet32 and the Feed-forward network were trained on a single NVIDIA GeForce RTX 3090 GPU and the ViT on a single GPU NVIDIA GeForce RTX 4090. The training time was around 1 hour for each setup. The computation of the condition number was run in parallel on GPU and took around 8 hours for each pruning rate of each network.

\begin{figure}[b]
    \centering
    \includegraphics[width=0.8\textwidth]{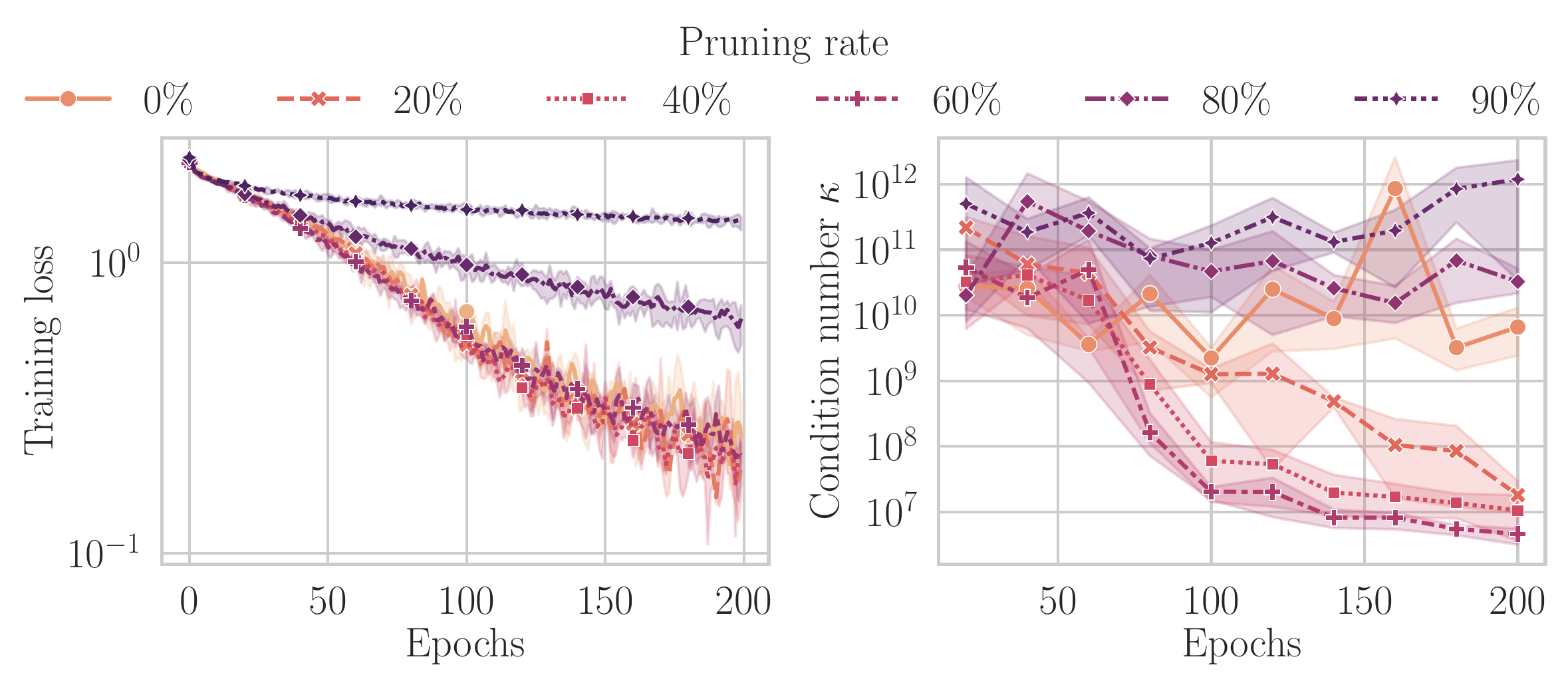}
    \caption{Training loss (left) and condition number (right) for different amounts of pruning at initialization for a Vision Transformer for three seeds. The shaded area in the figures corresponds to one standard deviation from the mean.} 
    \label{fig:pruning_ViT_cond_HO_2}
\end{figure}

\begin{figure}
    \centering
    \includegraphics[width=0.8\textwidth]{figures/sparsity_experiments/sparse_ResNet20_training_cond_num_2.pdf}
    \caption{Training loss (left) and condition number (right) for different amounts of pruning at initialization for a ResNet20.} 
    \label{fig:pruning_ResNet20_cond_HO_2}
\end{figure}

\begin{figure}
    \centering
    \includegraphics[width=0.8\textwidth]{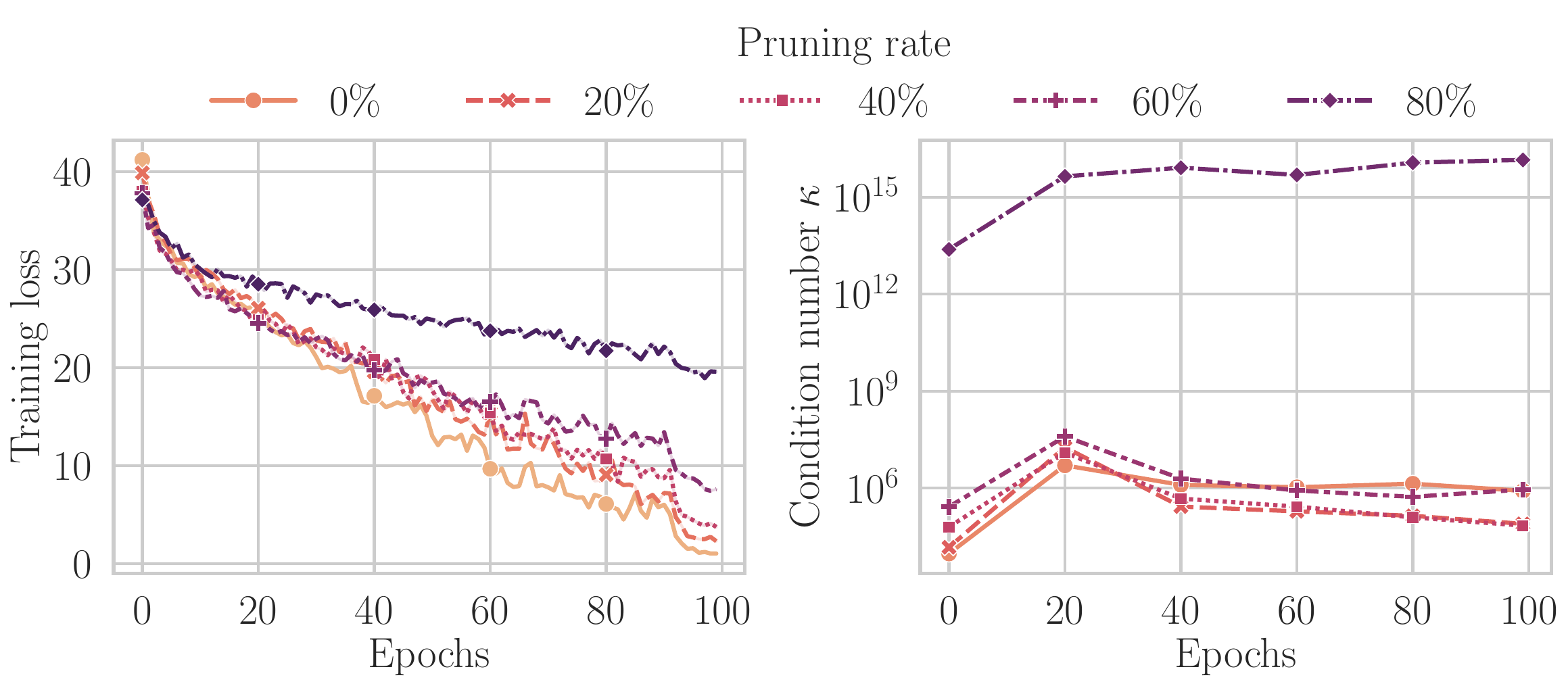}
    \caption{Training loss (left) and condition number (right) for different amounts of pruning at initialization for a ResNet32.} 
    \label{fig:pruning_ResNet32_cond_HO_2}
\end{figure}

\begin{figure}
    \centering
    \includegraphics[width=0.8\textwidth]{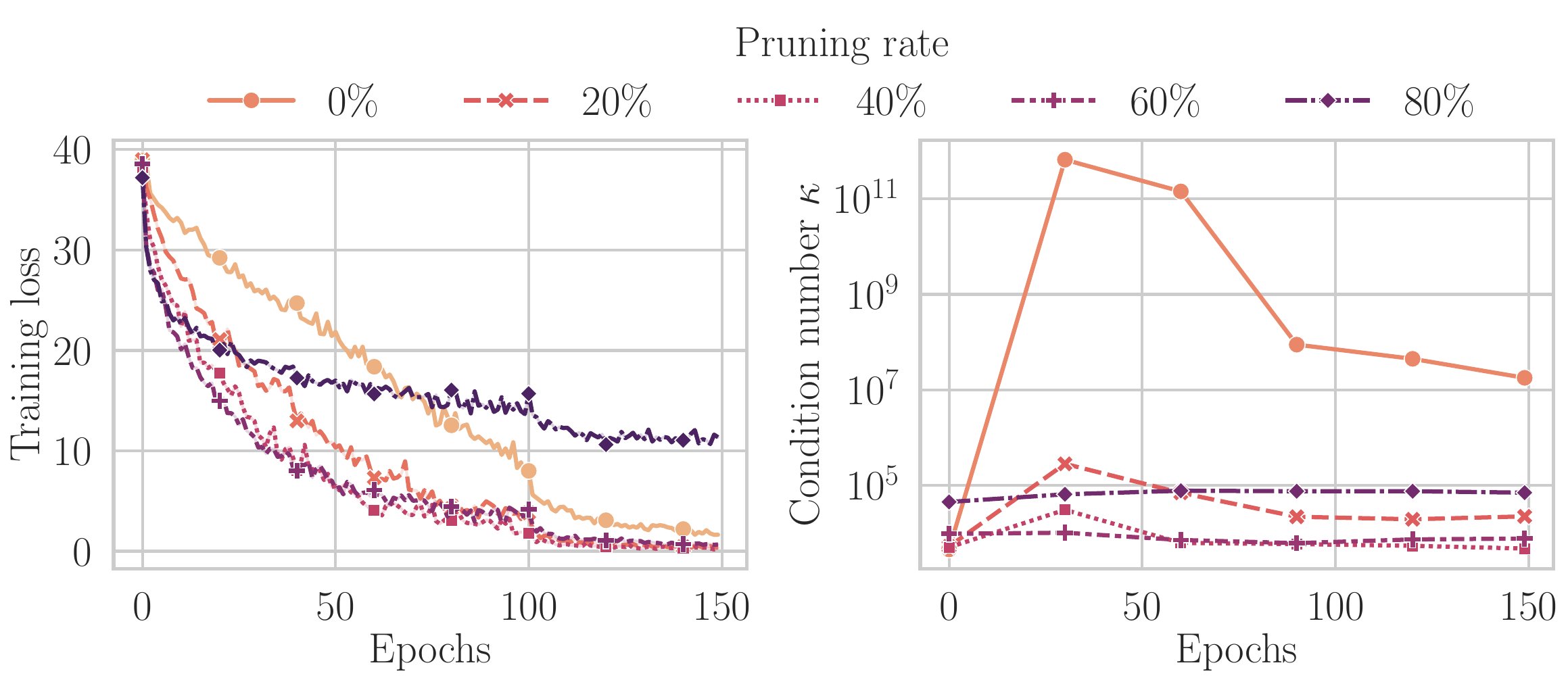}
    \caption{Training loss (left) and condition number (right) for different amounts of pruning at initialization for a VGG5.} 
    \label{fig:pruning_VGG5_cond_HO_2}
\end{figure}

\begin{figure}
    \centering
    \includegraphics[width=0.8\textwidth]{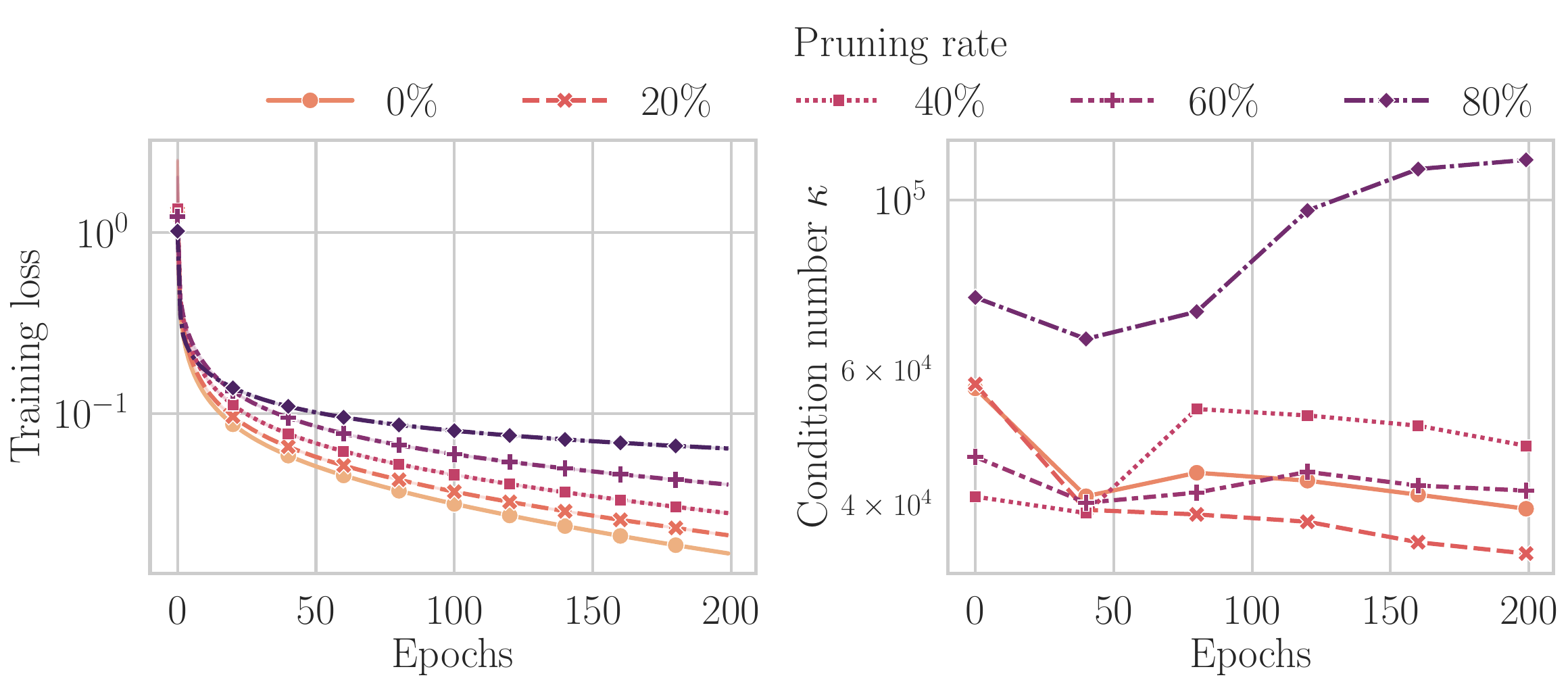}
    \caption{Training loss (left) and condition number (right) for different amounts of pruning at initialization for a Feed-forward network.} 
    \label{fig:pruning_FcNN_cond_HO_2}
\end{figure}

\subsection{Effect of width on conditioning and convergence speed}
\label{subsec:effect_width_on_cond_and_convergence}

For a one-hidden layer feed-forward network with ReLU activation, we empirically examined the effect of the hidden width on both the conditioning and the convergence speed when trained on a subset of MNIST ($n=1000)$.
The networks were trained with SGD with a fixed learning rate, which was chosen via grid-search, on a single NVIDIA GeForce RTX 3090 GPU. The learning rate was 0.3 for the networks with width 15 and 20 and 0.5 for all the remaining networks. 
The networks were initialized via the Xavier normal initialization in PyTorch~\citep{paszke2019pytorch}.  
The training time took less than one hour for all networks. The computation of the condition number was performed in parallel for each width and took around 4 hours in the largest setting.
\cref{fig:onehiddenReLU_training_cond_num_width_2} shows the trend that a larger width improves both the conditioning and the convergence speed. Additionally, the condition number is also more stable throughout training for larger widths.

\begin{figure}
    \centering
    \includegraphics[width=0.8\textwidth]{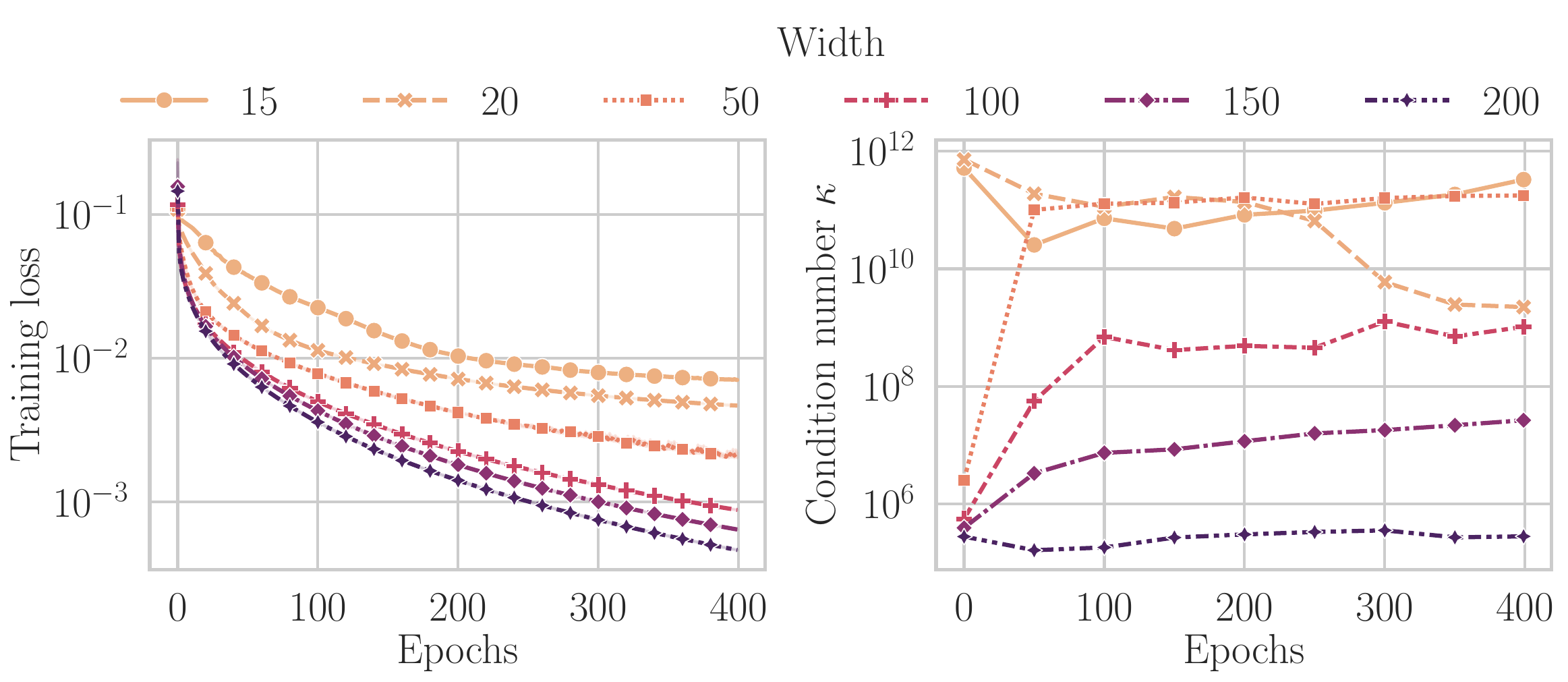}
    \caption{Training loss (left) and condition number (right) for different widths of a one-hidden layer Feed-forward network trained on subset of MNIST ($n=1000$).} 
    \label{fig:onehiddenReLU_training_cond_num_width_2}
\end{figure}
 
\subsection{Additional experiments on the effect of residual connections }
As we have seen previously in~\cref{fig:upperbound_comparison_relplot_MNIST_whitened}, the upper bound as a linear combination in~\cref{lemma:UpperBoundCondNumHO} is crucial to get a tight bound on the condition number. 
Additionally, as illustrated in~\cref{fig:whitenend_vs_not_whitened_cifar10_linNet},
whitening the data is essential to improve the conditioning. 
Therefore, if not otherwise specified, MNIST \cite{lecun1998gradient} and Cifar-10 \cite{krizhevsky2009learning} will refer to whitened data and we refer to~\cref{lemma:UpperBoundCondNumHO} or its equivalent in Eq. \eqref{eq:upper_bound_1_condnum_H_O_residual} for residual networks as the ``upper bound''. 
The shaded area in the figures corresponds to one standard deviation from the mean. If not other specified via the Kaiming normal initialization in PyTorch was used.

As one can see from \eqref{eq:product_bound_resnet}, the condition number for each term in the sum of the first upper bound in \eqref{eq:product_bound_resnet} improves when $\beta$ increases. This can be seen by the fact that the ratio will be dominated by $\beta$ and will go to 1 for $\beta \to \infty$. This is also what we observe empirically in \cref{fig:MNIST_whitened_Resnet_beta_comparison} and \cref{fig:Cifar10_whitened_Resnet_beta_comparison}, where the condition number is smaller for $\beta = 1$ compared to the other two settings, where $\beta = \frac{1}{L} < \frac{1}{\sqrt{L}} < 1$ for deeper networks with $L > 1$.

In \cref{fig:LeakyReLU_comparison_alpha} the condition number of the GN matrix for a one-hidden layer network with Leaky-ReLU activation is plotted for different values of $\alpha$, where $\alpha=0$ corresponds to ReLU and $\alpha=1$ corresponds to a linear activation. We can see how ReLU activation improves the conditioning in the one-hidden layer setting.

The condition number was computed on CPU via the explicit formulas given in~\cref{lemma:UpperBoundCondNumHO} and Eq. ~\eqref{eq:upper_bound_1_condnum_H_O_residual} and took approximately a few hours per experiment.

\begin{figure}
    \centering
    \includegraphics[width=0.5\textwidth]{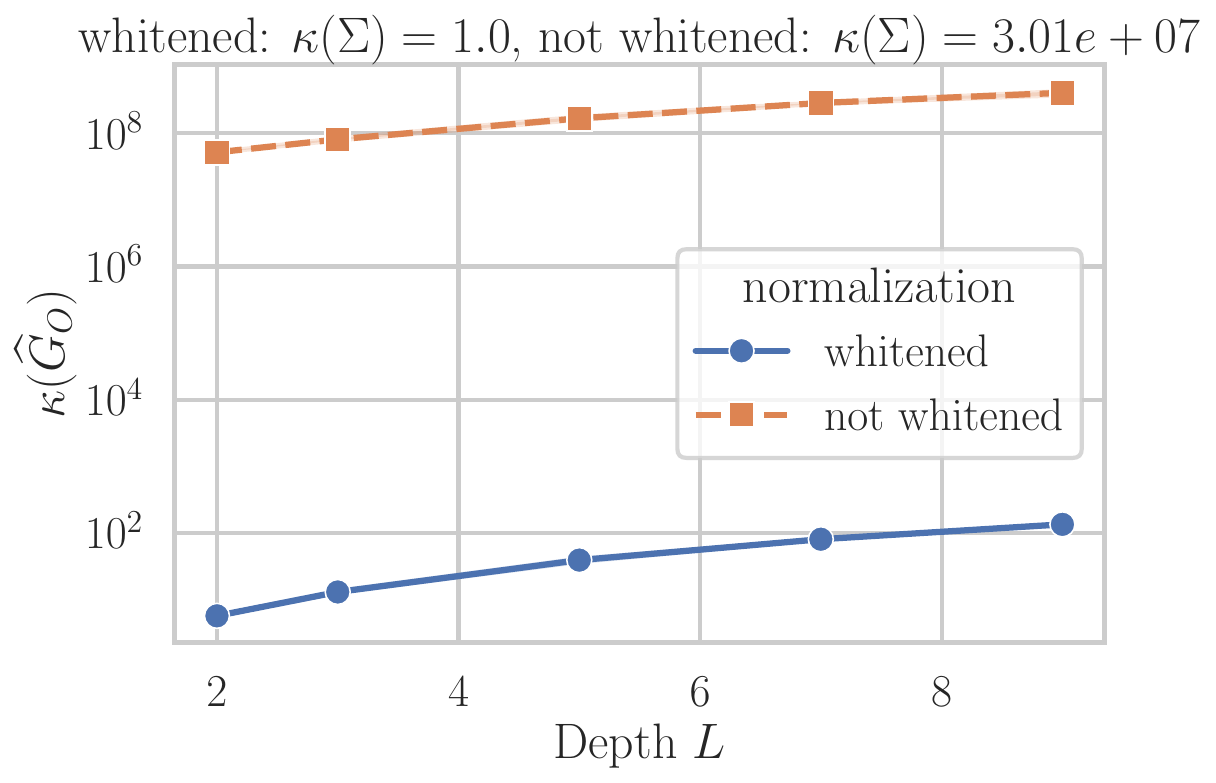}
    \caption{Comparison of the condition number for a Linear Network with hidden width $m=3100$ with and {without} whitening Cifar-10 at initialization over three runs. Note, that the $y$-axis is displayed in log scale. 
    } \label{fig:whitenend_vs_not_whitened_cifar10_linNet}
\end{figure}

\begin{figure*}[ht]
    \centering
    \includegraphics[width=\textwidth]{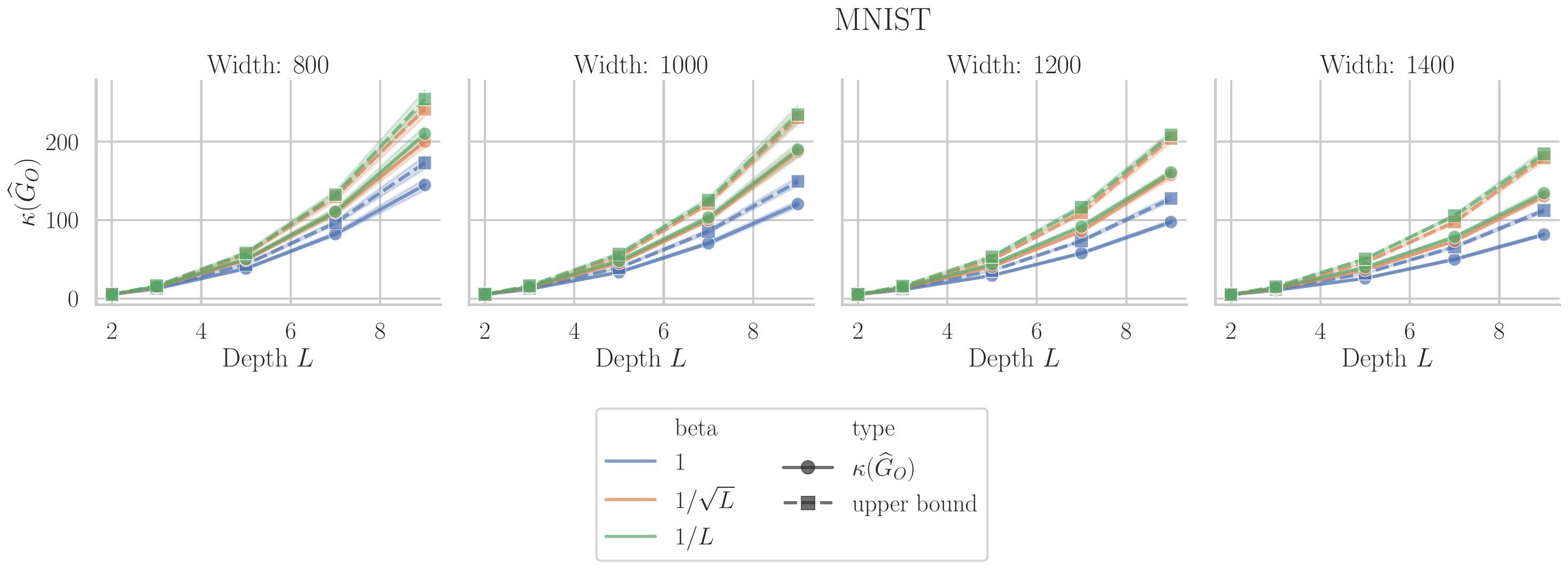}
    \caption{Condition number of outer-product Hessian  $\kappa(\widehat{\Gm}_O)$ and upper bound of a Residual Network for different values of $\beta$ as a function of depth for whitened MNIST at initialization over 20 runs.}
\label{fig:MNIST_whitened_Resnet_beta_comparison}
\end{figure*}

\begin{figure*}[ht]
    \centering
    \includegraphics[width=\textwidth]{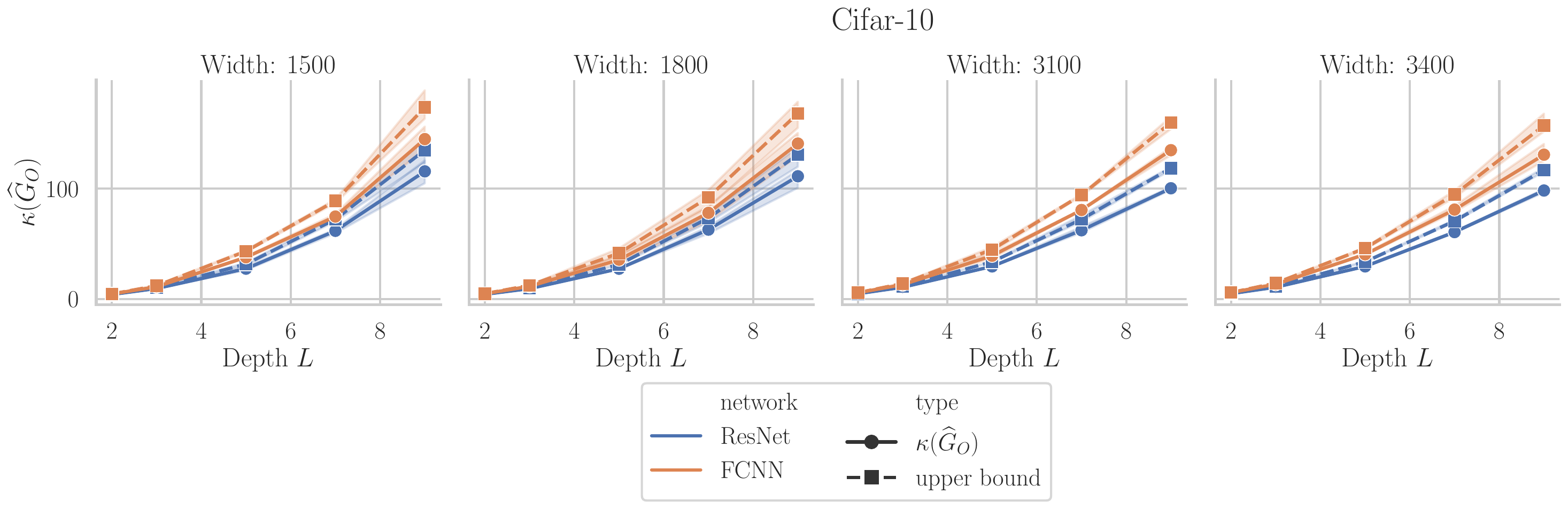}
    \caption{Comparison of condition number of outer-product Hessian  $\kappa(\widehat{\Gm}_O)$ between Linear Network and Residual Network for whitened Cifar-10 at initialization over three runs.}
    \label{fig:LinNet_vs_Resnet_beta=1_whitened_Cifar10}
\end{figure*}

\begin{figure*}[t!]
    \centering
    \includegraphics[width=\textwidth]{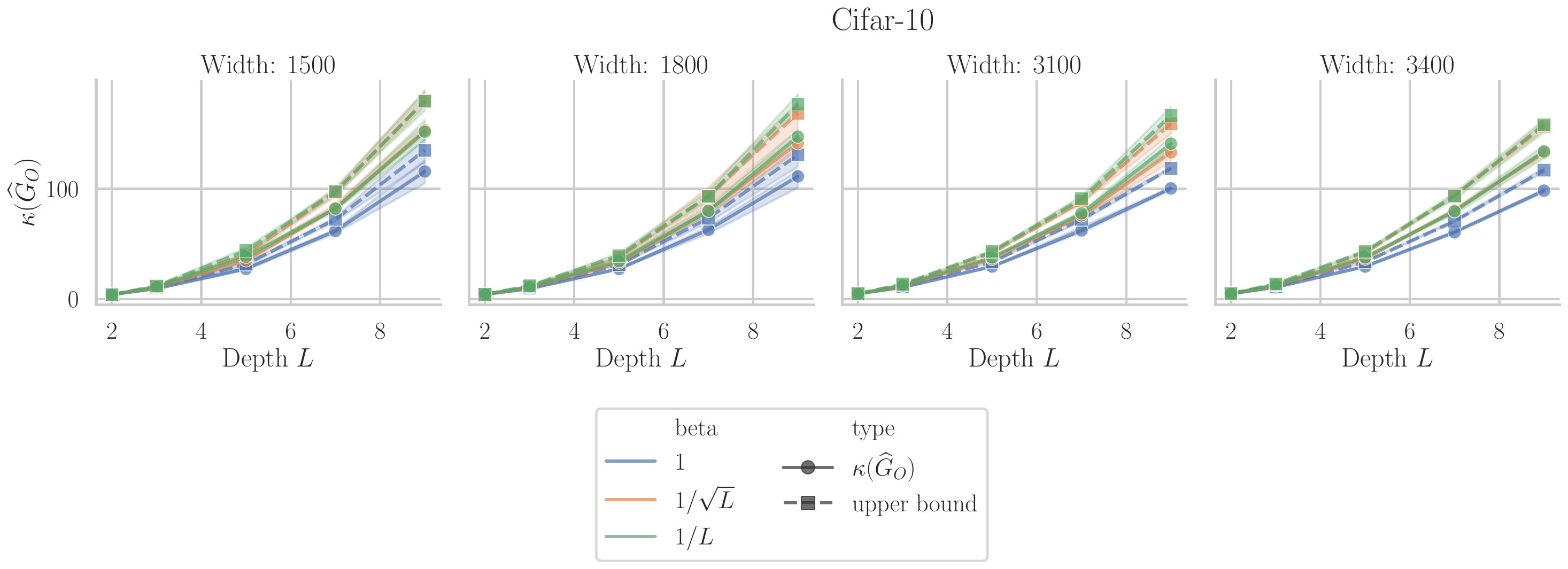}
    \caption{Condition number of outer-product Hessian  $\kappa(\widehat{\Gm}_O)$ and upper bounds for Residual Network with different values of $\beta$ at initialization over three runs for Cifar-10.}
    \label{fig:Cifar10_whitened_Resnet_beta_comparison}
\end{figure*}

\begin{figure}
    \centering
    \begin{tabular}{cc}
        \includegraphics[width=0.7\textwidth]{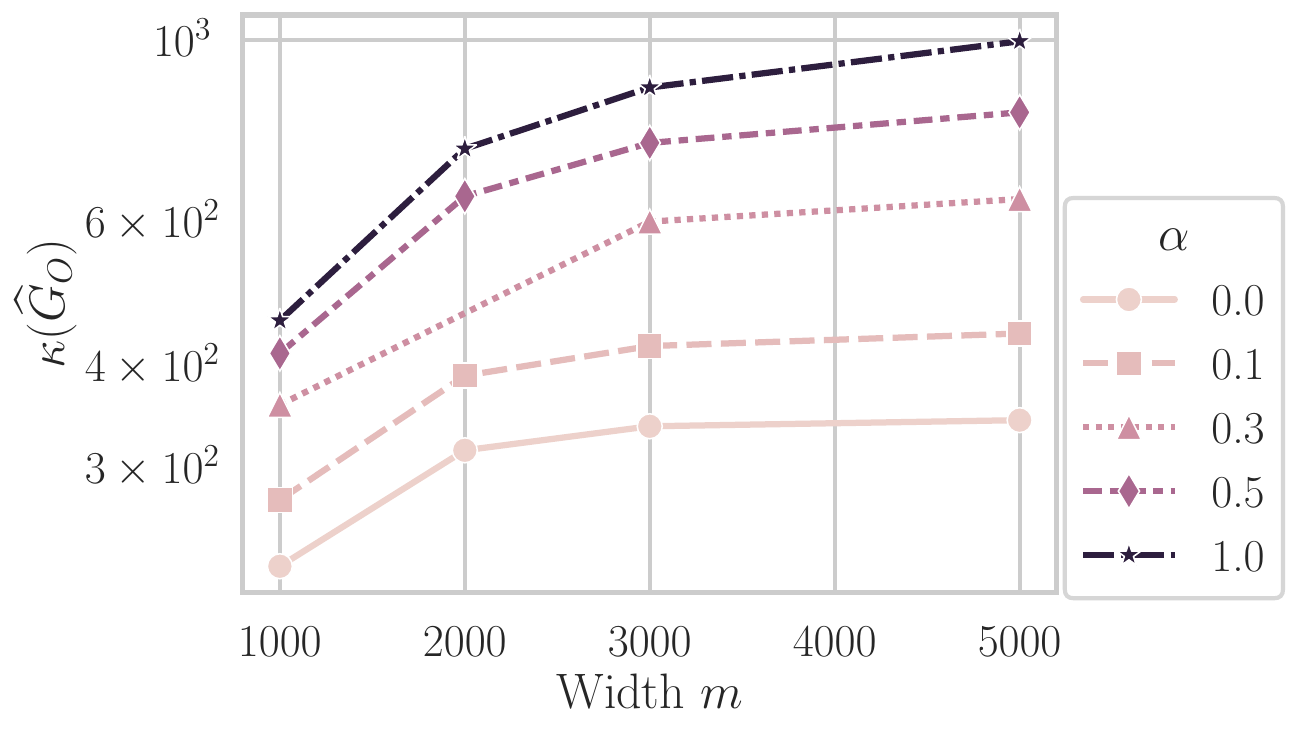}
    \end{tabular}
    
    \caption{Condition number of a one-hidden layer Leaky-ReLU network for different values of $\alpha$ for whitened MNIST over $n=500$ data points, where $\alpha=0$ corresponds to ReLU and $\alpha=1$ corresponds to a linear activation. Note that the y-axis is log-scaled.}
    \label{fig:LeakyReLU_comparison_alpha}
\end{figure}

\subsection{Further details on experiments compute resources}\label{subsec:experiments_compute_resources}

The total compute for all experiments conducted amounts to approximately less than 100 hours. 
The main bottleneck is the computation of the condition number, which can be potentially further sped up by making full use of parallel computations on all available GPUs. 
The full research project required more compute than the experiments reported in the paper due to preliminary experiments. Nevertheless, the total compute of the full research project amounts to a run time of similar order of magnitude.

\subsubsection{Computational complexity of computing condition number of Gauss-Newton matrix}

In order to compute the condition number we need to compute the eigenspectrum of the GN matrix $\mathbf{G}_O$, which has dimension $p \times p$, where $p$ is the number of parameters or of the matrix $\hat{\mathbf{G}}_O$, which has dimensions $kd \times kd$, where $d$ and $k$ are the input, respectively output dimension of the network. The time complexity of calculating the eigenvalue decomposition has a cubic complexity. That is, in order to compute the condition number, we have a computational complexity of $\mathcal{O}(\min(p, kd)^3)$.

\end{document}